\theoremstyle{remark}
\newtheorem{remark}{\indent Remark}
\theoremstyle{definition}
\newtheorem{definition}{\indent Definition} 
\theoremstyle{plain}
\newtheorem{lemma}{\indent Lemma}
\newtheorem{theorem}{\indent Theorem}
\newtheorem{corollary}{\indent Corollary}
\let\OldStatex\Statex
\renewcommand{\Statex}[1][3]{%
  \setlength\@tempdima{\algorithmicindent}%
  \OldStatex\hskip\dimexpr#1\@tempdima\relax}
\begin{document}
%


\title{Fine-Tuned Convex Approximations of Probabilistic Reachable Sets under Data-driven Uncertainties} 

%
%
%

\author{Pengcheng~Wu,
        Sonia~Martinez,~\IEEEmembership{Fellow,~IEEE},
        Jun~Chen,~\IEEEmembership{Member,~IEEE}
\thanks{Pengcheng Wu is with the Department of Mechanical and Aerospace Engineering, University of California San Diego, La Jolla, CA 92093, and also with the Department of Aerospace Engineering, San Diego State University, San Diego, CA 92182
        {\tt\small pcwupat@ucsd.edu,\,pwu@sdsu.edu}}%
        
\thanks{Sonia Martinez is with the Department of Mechanical and Aerospace Engineering, University of California San Diego, La Jolla, CA 92093
        {\tt\small soniamd@eng.ucsd.edu}}%
       
\thanks{Jun Chen is with the Department of Aerospace Engineering, San Diego State University, San Diego, CA 92182
        {\tt\small jun.chen@sdsu.edu}}
}

\maketitle

\begin{abstract}
This paper proposes a mechanism to fine-tune convex approximations of probabilistic reachable sets (PRS) of uncertain dynamic systems. We consider the case of unbounded uncertainties, for which it may be impossible to find a bounded reachable set of the system. Instead, we turn to find a PRS that bounds system states with high confidence. Our data-driven approach builds on a kernel density estimator (KDE) accelerated by a fast Fourier transform (FFT), which is customized to model the uncertainties and obtain the PRS efficiently. However, the non-convex shape of the PRS can make it impractical for subsequent optimal designs. Motivated by this, we formulate a mixed integer nonlinear programming (MINLP) problem whose solution result is an optimal $n$ sided convex polygon that approximates the PRS. Leveraging this formulation, we propose a heuristic algorithm to find this convex set efficiently while ensuring accuracy. The algorithm is tested on comprehensive case studies that demonstrate its near-optimality, accuracy, efficiency, and robustness. The benefits of this work pave the way for promising applications to safety-critical, real-time motion planning of uncertain dynamic systems. \\

\textbf{\textit{Note to Practitioners}---}This study is motivated by the realization of safety-critical real-time motion planning for a dynamic system under uncertainties. A popular method used to guarantee the safe operation of uncertain dynamic systems is reachability analysis. However, this method may not work well in the face of the following challenges: unbounded uncertainties, unknown distributions, generality, convexity, and efficiency. To address these issues, we first present a data-driven approach to model arbitrary unknown uncertainties and obtain a set encompassing the system states with high confidence; Then we propose an algorithm to efficiently find a tight convex polygon approximation for the set. This clearly benefits real motion planning. When considering collision avoidance in motion planning, a tight convex approximation allows a larger feasible search area which may provide a better-planned trajectory. Also, the efficiency of the algorithm ensures that the motion planning can be realized in real-time. 
\end{abstract}

\begin{IEEEkeywords}
Probabilistic Reachable Set, Convex Approximation, Uncertain Dynamic System, Kernel Density Estimator, Mixed Integer Nonlinear Programming
\end{IEEEkeywords}


%
\IEEEpeerreviewmaketitle

\section{Introduction}
\medskip


The deployment of autonomous systems in urban ground and air traffic environments requires the development of new real-time collision detection and resolution algorithms~\cite{lew2022safe,xing2023collision,wen2022path}. However, widespread uncertainties in such systems raise important concerns about their safety \cite{wu2022risk}. Such uncertainties may have various causes, such as epistemic, given a lack of understanding of the underlying mechanism, the gap between simple assumptions and complex reality, or aleatoric, given the inherent randomness in the systems, or the errors of sensor measurements \cite{paudel2022higher,wu2022safety,meng2023learning,zhang2023adaptive}. A popular method that is used to guarantee the safe operation of uncertain dynamic systems is based on reachability analysis \cite{li2021comparison}. This set-based method computes the reachable set of states that are accessible by a stochastic dynamic system from all initial conditions and all admissible inputs and parameters \cite{althoff2021set}. Reachability analysis has increasingly attracted significant attention from researchers in the past decades \cite{devonport2021data}. In prior literature, accounting for the knowledge of uncertainties, many algorithms have been proposed to utilize detailed system information to figure out a reachable set. However, existing algorithms may not work well or even fail in the face of the following challenges:

1) Unboundedness: When uncertainties are unbounded, it may be impossible to figure out a bounded reachable set for an uncertain dynamic system. However, most real-world applications need a bounded reachable set due to physical limits.


2) Unknown Distributions: Uncertainties are often assumed to obey Gaussian distributions in most literature. However, in practice, the uncertainties may be non-Gaussian or even empirical due to unknown system interconnections and nonlinearity \cite{dai2021conflict,han2022non}.
 

3) Generality: To compute reachable sets, many algorithms require detailed system information given \textit{a priori}. However, in some applications like complex cyber-physical systems that are only accessible through experiments or simulations, this detailed information is unavailable. Many algorithms also rely on strong assumptions about the dynamics or uncertainties, which are often unrealistic.

4) Convexity: The reachable set computed by most algorithms is irregular or non-convex, which is hard to handle when it comes to the resolution of collision avoidance in practice.

5) Efficiency: In order to realize real-time motion planning and control of uncertain dynamic systems, the reachable set must be computed efficiently. However, many algorithms are computationally expensive.

To address the challenges, in what follows, we will first review some related literature, and then introduce the contributions of our paper.

\subsection{Literature Review}


A classical way to compute reachable sets for bounded uncertainties is given by performing a Hamilton-Jacobi (HJ) reachability analysis. It formulates the problem as a partial differential equation which can be solved by numerical methods. While computationally tractable in lower dimensions, the method scales poorly to higher dimensions \cite{chen2018decomposition,lew2022safe}. Also, HJ reachability analysis requires detailed information on dynamics or uncertainties and relies on strong assumptions like there are no time correlations of parameter uncertainties, which makes it unavailable in some realistic scenarios \cite{lew2022safe}.  

When uncertainties are unbounded, it is often impossible to guarantee collision avoidance with $100\%$ certainty as the reachable set becomes unbounded. Instead, researchers seek to identify a trajectory with a probability of collision bounded by a certain risk bound. One direct way to realize this goal is to evaluate the probability of collision of a generated candidate trajectory. The candidate trajectory will not be adopted until the probability of collision is below the risk bound. However, there is no closed-form expression to evaluate the probability of collision, which poses difficulties to this approach \cite{wu2023online, zou2021collision}. Another indirect but effective method consists of computing a probabilistic reachable set (PRS) instead of an exact one\cite{yang2016multi,hewing2020recursively}. 
A PRS is a set of possible states that a system can reach with a certain probability. By means of this, the stochastic constraint on the system, which sets a limit on the collision probability, can be converted to a deterministic constraint by which the generated trajectory does not intersect with the PRS \cite{blackmore2010probabilistic}. An advantage of this method is that it is not only applicable for unbounded uncertainties but also for bounded ones. As the risk bound approaches $0\%$, the PRS becomes a superset of the reachable set \cite{yang2016multi}. In this paper, we adopt the PRS approach. 


Many existing works assume a Gaussian distribution of the uncertainty \cite{zhu2019chance,lathrop2021distributionally}. Blackmore~et~al. \cite{blackmore2010probabilistic} use convex polygons (or polytopes) determined by the mean and covariance of the Gaussian distribution as a proxy of PRS. Wu~and~Chen~et~al. \cite{wu2022safety, wu2022risk} directly use the level sets of the Gaussian distribution to serve as PRS. However, in many scenarios uncertainties are of non-Gaussian type \cite{dai2021conflict}. Even in that case, some researchers still make use of Gaussian distributions to approximate non-Gaussian ones \cite{boone2022spacecraft}. Unfortunately, this approach can not guarantee that the risk of a generated trajectory is still within a given bound \cite{han2022non}. Instead, Aoude~et~al. \cite{aoude2013probabilistically} apply a particle method to evaluate the probability of collision in the process of motion planning. Calafiore~and~Campi \cite{calafiore2006scenario} use a scenario approach by which stochastic constraints are sampled to obtain a convex optimization problem whose solution is feasible for the original stochastic constraints with high confidence. However, all of the above typically require a large number of samples for probabilistic assurance, which is neither feasible nor computationally efficient. Alternatively, Han~and~Jasour~et~al. \cite{han2022non} employ a moment-based method to characterize non-Gaussian PRS. However, moment information is assumed to be known \textit{a priori} in their works, which is not realistic as this information can only be learned online from sensor measurements \cite{lefkopoulos2019using}. Moreover, the PRS is often irregular, which is challenging to handle in practice.



Such drawbacks motivate the study of data-driven reachability analysis, using data obtained from experiments or simulations \cite{kehoe2015survey,lefkopoulos2019using,han2022data}. Devonport~et~al. \cite{devonport2021data} use the level sets of a Christoffel function to estimate the reachable sets. The Christoffel function is obtained from the samples of the arbitrary unknown uncertainties, without requiring any detailed information \textit{a priori}. However, the level sets estimating the reachable sets may be irregular or non-convex, which is hard to deal with in practice. Motivated by this, Lew~et~al. \cite{lew2022safe} employ a conservative convex hull of data samples for reachable set approximation. The number of sides of the convex hull is unspecified, leading to an arbitrary number of constraints in subsequent design problems. A bounding box can enclose data samples with a fixed number of sides but is more conservative \cite{ericson2004real}. All the works above are intended to estimate reachable sets given bounded uncertainties and are not accurate approximations of the reachable sets because lacking uncertainty quantification. For arbitrary probability distributions, including unbounded ones, a natural way of approximating reachable sets is via empirical or data-fitted probability density functions (PDF), whose level sets serve as the reachable sets. In this paper, we employ this approach considering kernel density estimators (KDE) \cite{wand1994kernel}. Our goal is to address important limitations of adopting such a method such as the lack of closed-form expressions for KDEs, the irregular shapes of their level sets, and the conservativeness and inaccuracy of taking convex hulls or bounding boxes of the level sets. Some researchers have considered dynamic systems with time delays \cite{ding2023intermittent}, which is not done in this manuscript. The extension of the proposed methodology to scenarios where time delay matters is left for future work.

\subsection{Contributions and Organization}
This paper proposes an efficient convex approximation of PRS of uncertain dynamic systems while ensuring their accuracy. Our approach  consists of two parts: (i) A data-driven uncertainty quantification; (ii) An optimization problem formulation to find the convex approximation. 

The major contributions of this paper are the following:


1) We present an algorithm that efficiently finds the PRS via KDE accelerated by a fast Fourier transform (FFT). FFT-based KDE is customized to estimate the PDF under arbitrary unknown uncertainties and therefore its level sets can serve as a proxy for the PRS of the uncertainties. As an efficient data-driven approach, FFT-based KDE can learn from data online, and thus it neither requires \textit{a priori} information nor relies on strong assumptions; 


2) We formulate an optimization problem of mixed integer nonlinear programming (MINLP), resulting in an optimal $n$ sided convex polygon that approximates the PRS. As opposed to a convex hull, users can arbitrarily customize the number of sides. The optimization problem is established not by using data samples directly but by weighted grid points from KDE, which makes it tractable. Compared with bounding boxes, the result obtained by this approach is more tight and accurate; 






3) We develop a heuristic algorithm to efficiently solve the formulated MINLP problem. This algorithm performs weighted sampling to select the representative grid points from all the grid points from KDE, which significantly reduces computational complexity. Comprehensive case studies demonstrate that this algorithm enjoys the benefits of accuracy, efficiency, near-optimality, and robustness while providing a convex approximation for the PRS.

The rest of this paper is organized as follows. In \cref{Problem_Statement}, we formally define the concept of PRS and state the problem as providing a convex approximation for it. In \cref{Evaluation_of_Probabilistic_Reachable_Set}, we use KDE accelerated by FFT to model arbitrary probability distributions and present an algorithm to find the PRS. In \cref{MINLP_Formulation_for_Probabilistic_Reachable_Set_Approximation}, we formulate an MINLP optimization framework, the solution of which leads to a convex polygon approximation of the PRS. In \cref{Solution_Method}, we develop a heuristic algorithm to efficiently solve the formulated MINLP. In \cref{Main_Results}, we conduct comprehensive case studies to demonstrate the performance of our proposed algorithm. In \cref{Conclusion}, we draw conclusions and make suggestions for future research. In addition, rigorous proofs of some propositions in this paper are given in \cref{Appendix} Appendix.


\section{Problem Statement} 
\label{Problem_Statement}
\medskip

In this section, we take into account an uncertain discrete-time dynamic system and introduce the concept of reachable set and PRS of the system. Then, we present the goal of this paper.

Consider a general discrete-time dynamic system of the form
\begin{equation}
    \boldsymbol{x}_{k+1}=\boldsymbol{f}\left(\boldsymbol{x}_k, \boldsymbol{u}_k, \boldsymbol{\theta}_k, \boldsymbol{w}_k\right),
    \label{eqn:dynamic_system}
\end{equation}
where $\boldsymbol{x}_k \in \mathbb{R}^n$ is the system state, $\boldsymbol{u}_k \in \mathcal{U}_k$ is an uncertain control input, $\boldsymbol{\theta}_k \in \Theta$ is an uncertain parameter, $\boldsymbol{w}_k \in \mathbb{W}$ is an uncertain disturbance, and $\boldsymbol{x}_0 \in \mathcal{X}_0$ is an initial state. Unlike most prior works \cite{lew2022safe}, in this paper, the sets $\mathcal{U}_k \subset \mathbb{R}^m, \Theta \subset \mathbb{R}^p, \mathbb{W} \subset \mathbb{R}^q$, and $\mathcal{X}_0 \subset \mathbb{R}^n$ can be unbounded. Given the uncertainties of $\boldsymbol{x}_k, \boldsymbol{u}_k, \boldsymbol{\theta}_k, \boldsymbol{w}_k$, then $\boldsymbol{x}_{k+1}$ is a random vector which obeys an unknown probability distribution. 

Formally, the concept of reachable set is introduced as follows.


\begin{definition}[Reachable Set \cite{alanwar2021data}]
    At time $T$, the \textit{reachable set} $\mathcal{X}_{T}$ of the dynamic system in \cref{eqn:dynamic_system} is defined to be
    \begin{equation*}
    \begin{aligned}
    \mathcal{X}_{T} := \big \{ & \boldsymbol{x}_{T} \in \mathbb{R}^n:  \boldsymbol{x}_{k}=\boldsymbol{f}\left(\boldsymbol{x}_{k-1}, \boldsymbol{u}_{k-1}, \boldsymbol{\theta}_{k-1}, \boldsymbol{w}_{k-1} \right), \\
    & \boldsymbol{x}_{0} \in \mathcal{X}_{0},  \boldsymbol{u}_{k-1} \in \mathcal{U}_{k-1}, \boldsymbol{\theta}_{k-1} \in \Theta, \boldsymbol{w}_{k-1} \in \mathbb{W}, \\
    & k \in \{1,\ldots,T\} \big \}, 
    \end{aligned}
    \end{equation*}
    where $\boldsymbol{x}_{0}$ is the initial state and $\mathcal{X}_{0}$ is the initial set.
    \label{def:rset}
    \end{definition}





Observe that, since the sets $\mathcal{U}_{k-1}, \Theta, \mathbb{W}$, and $\mathcal{X}_0$ are not limited to bounded sets, the reachable set $\mathcal{X}_{T}$ may be unbounded. Hence, it may be infeasible to find a bounded set in which the state is guaranteed to lie. Instead, we expect to find a bounded set such that the probability of the state lying in the bounded set is greater than a confidence level. To this end, we formally introduce the concept of the PRS as follows.

\begin{definition}[Probabilistic Reachable Set (PRS) \cite{hewing2020recursively}]
    At time $k$, a bounded set $\Tilde{\mathcal{X}}_{k}$  is defined to be a  \textit{probabilistic reachable set} (PRS) of the dynamic system in \cref{eqn:dynamic_system} at confidence level $\alpha$ if and only if
\begin{equation*}
    \operatorname{Pr}(\boldsymbol{x}_{k} \in \Tilde{\mathcal{X}}_{k}) \ge \alpha. 
\end{equation*} 
\end{definition}

A PRS is a set of possible states that a system can reach with a certain probability. Note that $\Tilde{\mathcal{X}}_{k}$ may not exist. When $\alpha = 100\%$, if there exists a PRS $\Tilde{\mathcal{X}}_{k}$, then $\Tilde{\mathcal{X}}_{k}$ is a superset of the reachable set.


In this paper, we assume the dynamic system in \cref{eqn:dynamic_system} is two-dimensional, and leave the general dimension case for future work. The goal of this paper is twofold: 1) We aim to put forward a data-driven approach to model the arbitrary unknown uncertainties and obtain a PRS $\Tilde{\mathcal{X}}_{k}$ of the dynamic system; 2) Motivated by the realization of safety-critical real-time motion planning for uncertain systems, we aim to approximate the PRS $\Tilde{\mathcal{X}}_{k}$ by means of a bounded set that satisfies: i) \textbf{Convexity}: The boundary of the set is an $n$ sided convex polygon; ii) \textbf{Efficiency}: The computation of this set is tractable; iii) \textbf{Accuracy}: The probability of the state $\boldsymbol{x}_{k}$ lying in the set is close to the prescribed confidence level $\alpha$; iv) \textbf{Optimality}: The area of the set is as small as possible (not too conservative) while ensuring accuracy.


\section{PRS Identification} 
\label{Evaluation_of_Probabilistic_Reachable_Set}
\medskip

In this section, we model an arbitrary unknown probability distribution through FFT-based KDE, and develop an online algorithm to find a PRS at confidence level $\alpha$.

\subsection{FFT-Based KDE}

We first briefly review FFT-based KDE. The concept of KDE is introduced below.

\begin{definition}[Kernel Density Estimator (KDE) \cite{wand1994kernel}]
    Let $\boldsymbol{x}_k \in \mathbb{R}^2, k \in \{1,\ldots,M\}$ be $M$ data samples drawn from a $2$-variate probability distribution given by a PDF $f: \mathbb{R}^2 \mapsto \mathbb{R}$. The \textit{kernel density estimator} (KDE) $\hat{f}: \mathbb{R}^2 \mapsto \mathbb{R}$ to approximate the PDF $f$ is defined to be 
\begin{equation*}
  \hat{f}(\boldsymbol{x}) = \frac{1}{M} \sum_{k=1}^{M} K(\boldsymbol{x} - \boldsymbol{x}_k),
\end{equation*} 
where the function $K: \mathbb{R}^2 \mapsto \mathbb{R}$ is the \textit{kernel function} which is a symmetric $2$-variate density function. 
    \label{ori_kde}
\end{definition}

The choice of the kernel function $K$ is not critical to the accuracy of KDE, so we use the standard $2$-variate Gaussian kernel in this paper
\begin{equation}
    K(\boldsymbol{x}) = (2 \pi)^{-1}\text{det}(\mathbf{H})^{-\frac{1}{2}} \exp(-\frac{1}{2}\boldsymbol{x}^{\top}\mathbf{H}^{-1}\boldsymbol{x}),
    \label{kernel_gaussian}
\end{equation}
where $\mathbf{H}$ is a symmetric and positive definite bandwidth matrix. The choice of bandwidth matrix is crucial to the accuracy of KDE approximating PDF. In this paper, we choose the bandwidth matrix using Silverman's rule \cite{silverman2018density}.

Consider a mesh of $N^2$ \textit{grid points} $\{ (x_i, y_j) \in \mathbb{R}^2: i,j \in \{1,\ldots,N\} \}$ equally spaced on the plane $\mathbb{R}^2$, the boundary of which encompasses all $M$ data samples $\boldsymbol{x}_k \in \mathbb{R}^2, k \in \{1,\ldots,M\}$. The mesh of grid points can be viewed as a map $\boldsymbol{g}: \{(i,j): i,j \in \{1,\ldots,N\}\} \mapsto  \{(x_i, y_j) \in \mathbb{R}^2: i,j \in \{1,\ldots,N\} \} $ given by 
\begin{equation}
    (x_i, y_j) = \boldsymbol{g}(i,j), \; i,j \in \{1,\ldots,N\},
\label{mesh_grids}
\end{equation}
where $(i,j)$ is the index of a grid point whose coordinate is $(x_i, y_j)$. Throughout the rest of this paper, a coordinate $\boldsymbol{g}(i,j)$ is abbreviated as $\boldsymbol{g}_{ij}$.

The KDE $\hat{f}$ given in \cref{ori_kde} can be evaluated on the mesh of $N^2$ grid points $\boldsymbol{g}$ in \cref{mesh_grids}, which yields
\begin{equation*}
  \hat{f}_{ij} := \hat{f}(\boldsymbol{g}_{ij})  = \frac{1}{M} \sum_{k=1}^{M} K(\boldsymbol{g}_{ij} - \boldsymbol{x}_k), \; i,j \in \{1,\ldots,N\} .
\end{equation*}

One way to significantly accelerate the evaluation of KDE is \textit{linear binning}. Instead of placing the kernel functions $K$ on $M$ data samples, we can place them on $N^2$ grid points $\boldsymbol{g}$ weighted by grid counts $c_{uv}, u,v \in \{1,\ldots,N\}$. A grid count $c_{uv}$ is a weight chosen to represent the amount of data samples near a grid point $\boldsymbol{g}_{uv}$. To obtain $c_{uv}$, we can go through every data sample and assign weights to its neighbouring grid points, as shown in \cref{fig:binned_kde}.

\begin{figure}[H] 
\centering
\includegraphics[width=.35\textwidth]{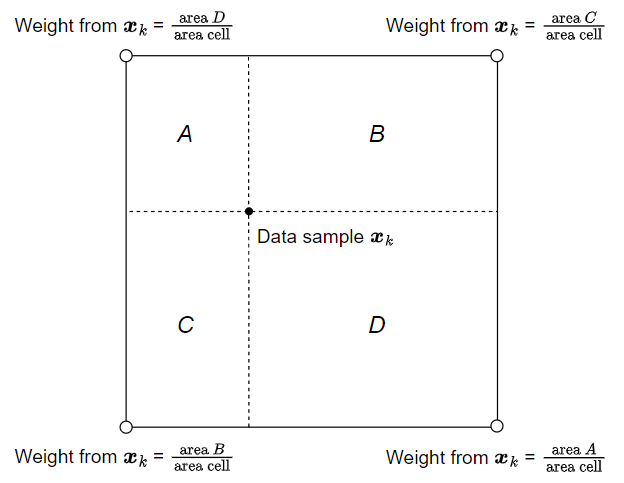}
\caption{Graphical representation of $2$-variate linear binning} 
\label{fig:binned_kde}
\end{figure}

Using the strategy of linear binning, the \textit{binned} KDE $\tilde{f}: \mathbb{R}^2 \mapsto \mathbb{R}$ to approximate the KDE $\hat{f}$ in \cref{ori_kde} is given by 
\begin{equation}
    \tilde{f}(\boldsymbol{x}) = \frac{1}{M} \sum_{u=1}^{N} \sum_{v=1}^{N} K(\boldsymbol{x} - \boldsymbol{g}_{uv})c_{uv},
    \label{binned_kde_formula}
\end{equation}
where $c_{uv}$ is the grid count of the grid point $\boldsymbol{g}_{uv}$ indexed by $(u,v)$.

Evaluating the binned KDE $\tilde{f}$ in \cref{binned_kde_formula} on the mesh of $N^2$ grid points $\boldsymbol{g}$ yields
\begin{equation}
\begin{aligned}
        & \tilde{f}_{ij} := \tilde{f}(\boldsymbol{g}_{ij}) = \frac{1}{M} \sum_{u=1}^{N} \sum_{v=1}^{N} K(\boldsymbol{g}_{ij} - \boldsymbol{g}_{uv})c_{uv}, \\ 
        & i,j \in \{1,\ldots,N\} ,
\end{aligned}
    \label{discretized_binned_kde}
\end{equation}
which shows that the matrix $\tilde{\boldsymbol{f}}:=[\tilde{f}_{ij}]$ is the discrete convolution of $\boldsymbol{c}:= [c_{uv}]$ and $\boldsymbol{k}:= [K(\boldsymbol{g}_{uv})]$. 

The discrete convolution can be computed by FFT, which significantly saves computational time. Let $\boldsymbol{C}$ and $\boldsymbol{K}$ be FFT of $\boldsymbol{c}$ and $\boldsymbol{k}$, and $\tilde{\boldsymbol{F}}$ be the element-wise product of $\boldsymbol{C}$ and $\boldsymbol{K}$. Then $\tilde{\boldsymbol{f}}$ can be extracted from the inverse FFT of $\tilde{\boldsymbol{F}}$. By doing so, we can obtain the binned KDE $\tilde{f}_{ij}, \; i,j \in \{1,\ldots,N\}$ in \cref{discretized_binned_kde} to approximate the PDF $f$ of an arbitrary unknown probability distribution in real-time \cite{wu2023online, wu2022online}.


\subsection{Online Computation of PRS}

For real-time motion planning of uncertain dynamic systems, we propose an efficient data-driven algorithm using FFT-based KDE to capture PRS of an unknown static uncertainty whose data samples are gradually observed. See \textbf{\cref{alg:kde}}.


In the pseudo-code of \cref{alg:kde}, the symbol $\epsilon$ represents the tolerance of confidence error. From Line 10 through 17, FFT is employed to accelerate the computation of the discrete convolution of $\boldsymbol{c}$ and $\boldsymbol{k}$ to obtain the binned KDE $\tilde{\boldsymbol{f}}$ evaluated on the mesh of $N^2$ grid points $\boldsymbol{g}$. From Line 18 through 33, a bisection search is implemented to find the critical binned KDE value $C^{\text{kde}}$ at which the confidence level $\alpha$ is achieved. The level set of the KDE corresponding to $C^{\text{kde}}$ is the PRS that we aim to find. 

As the number of the collected data samples increases, KDE captured from those data samples will be convergent to the true PDF of the unknown uncertainties. However, considering more data samples requires a longer evaluation time. \cref{fig:alg1} shows how increasing data samples influence the evaluation time of identifying a PRS at confidence level $95\%$ on a mesh of $128^2$ grid points through implementing \cref{alg:kde}. As the number of data samples increases, the proposed \cref{alg:kde} can rapidly recompute. For example, when the number of data samples is within the range of $10^3$ to $10^4$, the evaluation time of implementing \cref{alg:kde} is always under \SI{0.01}{s}. This result demonstrates the computational power of our proposed \cref{alg:kde} and thus it can handle streaming data in a reasonable way. However, it cannot guarantee to deal with uncertainties that can evolve with time, and this will be part of our future work.

Moreover, note that the shape of a PRS obtained in this way is often irregular, which precludes its use in subsequent applications requiring an optimal design. This will be solved in the next section.

\begin{figure}[H] 
\centering
\includegraphics[width=.4\textwidth]{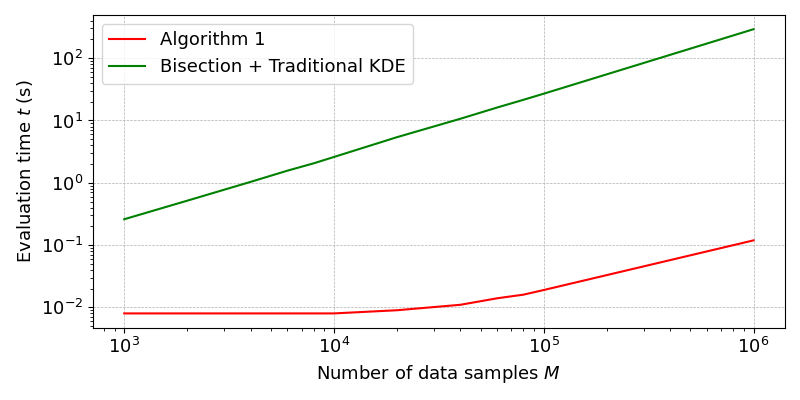}
\caption{Evaluation time of Algorithm 1 with respect to the increasing number of data samples} 
\label{fig:alg1}
\end{figure}

\begin{algorithm} [!h]
\caption{PRS Identification Algorithm}  
\label{alg:kde}
\begin{algorithmic}[1] 
    \Function{GenDS}{$M$}
        \State Generate $M$ data samples $ \boldsymbol{x}_k, k \in \{1,\ldots,M\} $
        \State \Return $\boldsymbol{x}_k$
    \EndFunction
    \Statex
    
    \Function{MeshGrid}{$\boldsymbol{x}_k$, $N$}
        \State Get $x_{\min}, x_{\max}, y_{\min}, y_{\max}$ from data samples $\boldsymbol{x}_k$
        \State $\boldsymbol{h} = \operatorname{linspace}(x_{\min}, x_{\max}, N)$
        \State $\boldsymbol{v} = \operatorname{linspace}(y_{\min}, y_{\max}, N)$
        \State $\boldsymbol{g} =$  Cartesian product of $\boldsymbol{h}$ and $\boldsymbol{v}$
        \State \Return $\boldsymbol{g}$
    \EndFunction
    \Statex
    
    \Function{FFTKDE}{$\boldsymbol{x}_k$, $\boldsymbol{g}$}
        \State Obtain grid counts $\boldsymbol{c}$ for all grid points $\boldsymbol{g}$ using $\boldsymbol{x}_k$
        \State Evaluate kernel functions $\boldsymbol{k}$ on all grid points $\boldsymbol{g}$
        \State $ \boldsymbol{C} = \operatorname{FFT}(\boldsymbol{c})$
        \State $ \boldsymbol{K} = \operatorname{FFT}(\boldsymbol{k}) $
        \State $\boldsymbol{F} = $ the element-wise product of $\boldsymbol{C}$ and $\boldsymbol{K}$
        \State $ \tilde{\boldsymbol{f}} = \operatorname{iFFT}(\boldsymbol{F})$
        \State \Return $\tilde{\boldsymbol{f}}$
    \EndFunction
    \Statex
    
    \Function{BisecSearch}{$\tilde{\boldsymbol{f}}$, $\alpha$, $\epsilon$}
        \State $low = \min(\tilde{\boldsymbol{f}})$
        \State $up = \max(\tilde{\boldsymbol{f}})$
        \While{$low < up$}
            \State $mid = (low + up) / 2$
            \State $\boldsymbol{z}^{\text{bin}} = (\tilde{\boldsymbol{f}} \ge mid) \cdot 1$
            \State $\boldsymbol{z}^{\text{mix}} = $ element-wise product of $\boldsymbol{z}^{\text{bin}}$ and $\tilde{\boldsymbol{f}}$
            \State $pr = \operatorname{sum}(\boldsymbol{z}^{\text{mix}}) / \operatorname{sum}(\tilde{\boldsymbol{f}})$
            \If{ $\operatorname{abs}(pr - \alpha) \le \epsilon$ }
                \State $C^{\text{kde}} = mid$
                \State \Return $\boldsymbol{z}^{\text{bin}}$, $C^{\text{kde}}$
            \ElsIf{$pr < \alpha$}
                \State $up = mid$
            \Else
                \State $low = mid$
            \EndIf
        \EndWhile
        \State \Return $Failure$
    \EndFunction
    \Statex

    \Function{FindPRS}{$M$, $N$, $\alpha$, $\epsilon$}
        \State $\boldsymbol{x}_k =$ \Call{GenDS}{$M$}
        \State $\boldsymbol{g} = $ \Call{MeshGrid}{$\boldsymbol{x}_k$, $N$}
        \State $\tilde{\boldsymbol{f}} = $ \Call{FFTKDE}{$\boldsymbol{x}_k$, $\boldsymbol{g}$}
        \State $\boldsymbol{z}^{\text{bin}}$, $C^{\text{kde}} = $ \Call{BisecSearch}{$\tilde{\boldsymbol{f}}$, $\alpha$, $\epsilon$}
        \State \Return $\boldsymbol{g}$, $\tilde{\boldsymbol{f}}$, $\boldsymbol{z}^{\text{bin}}$, $C^{\text{kde}}$
    \EndFunction
    \Statex
    
    \State \Call{FindPRS}{$M$, $N$, $\alpha$, $\epsilon$}
            
\end{algorithmic}  
\end{algorithm}









\section{MINLP Formulation for PRS Approximation}
\label{MINLP_Formulation_for_Probabilistic_Reachable_Set_Approximation}
\medskip

In this section, we formulate the convex approximation problem stated in \cref{Problem_Statement} as an MINLP optimization problem for the PRS of a two-dimensional system subject to arbitrary uncertainties. 

Recall that, after evaluating \cref{discretized_binned_kde} through the implementation of \cref{alg:kde}, we have obtained that each grid point $\boldsymbol{g}_{ij}, \; i,j \in \{1,\ldots,N\}$ has a binned KDE value $\tilde{f}_{ij}, \; i,j \in \{1,\ldots,N\}$,  respectively. We define a normalized weight matrix $\boldsymbol{w}:= [w_{ij}]$ whose element is
\begin{equation}
    w_{ij} := \frac{\tilde{f}_{ij}}{\sum\limits_{i=1}^N \sum\limits_{j=1}^N \tilde{f}_{ij}}, \quad i,j \in \{1,\ldots,N\}.
    \label{eqn:normw}
\end{equation}

The MINLP will be established using these weights, and not with the original data samples. This makes it more tractable because the number of grid points is far less than the number of data samples, significantly reducing the number of decision variables and constraints.

\subsection{Problem Description and Objective Function Formulation}
\label{subsecdp}
The optimization goal is to find a minimum $n$ sided convex polygon to approximate the PRS efficiently and accurately, by determining which weighted grid points should lie inside the polygon. 


To this end, we introduce three types of decision variables to formulate the MINLP optimization problem:
\begin{itemize}
    \item $2n$ continuous variables $ a_k, b_k \in \mathbb{R}, \;  k \in \{ 1,\ldots,n \} $;  
    
    \item $N^2n$ binary variables $l_{ij}^k \in \{0,1\}, \,  i, j \in \{1,\ldots,N\}, \,  k \in \{ 1,\ldots,n \}$;
    
    \item $N^2$ binary variables $z_{ij} \in \{0,1\}, \;  i, j \in \{1,\ldots,N\}$.
\end{itemize}

Here the integer $n \ge 3$ is the number of convex polygon edges; The integer $N$ is the number of grid points in each dimension; An ordered pair of real numbers $(a_k, b_k)$ indicates the coefficients of a line $l^k$ which is the extension of an edge of the convex polygon; The binary variable $l_{ij}^k = 1$ if and only if the grid point $\boldsymbol{g}_{ij}$, indexed by $(i,j)$, lies on the specified side of the line $l^k$; The binary variable $z_{ij} = 1$ if and only if the grid point $\boldsymbol{g}_{ij}$ lies inside the  polygon. We will fully discuss the roles of these decision variables in what follows.

Our goal is to make the convex polygon as small as possible without violating the constraints. This can be realized by minimizing the number of grid points lying inside the polygon. Formally speaking, our optimization objective function can be formulated as 
\begin{equation}
    \min \sum_{i=1}^N \sum_{j=1}^N z_{ij},
    \label{eqn:obj}
\end{equation}
where $\sum\limits _{i=1}^N \sum \limits _{j=1}^N z_{ij}$ is a linear function of binary variables $z_{ij}$.


\subsection{Constraints on Continuous Variables}
\label{subsecconti}
All the planar lines not passing through the origin can be represented by 
\begin{equation*}
    \{ax + by - 1 = 0: a,b \in \mathbb{R} \land  a^2+b^2 \ne 0 \},
\end{equation*}
and different pairs of coefficients $(a,b)$ determine different lines.

A line $l^k := a_kx + b_ky - 1 = 0$ always divides the entire plane into two half-planes. We refer to the half plane $\{(x,y): a_kx + b_ky - 1 < 0 \}$ where the origin $(0,0)$ lies as the \textit{strict inner side} of the line, and $\{(x,y): a_kx + b_ky - 1 \le 0 \}$ as the \textit{inner side} of the line. A strict inner side is a proper subset of the inner side of the same line.

For any $n \ge 3$ planar lines not passing through the origin, the region $S$ bounded by these lines is defined as
\begin{equation}
    S := \{(x,y): \bigwedge_{k=1}^n a_kx + b_ky - 1 \le 0 \},
    \label{eqn:ss}
\end{equation}
where $a_k,b_k \in \mathbb{R}$, $a_k^2+b_k^2 \ne 0$, and $n \ge 3$. On the one hand, since $S$ is the intersection of $n$ inner sides, then $S$ is a convex set and the origin $(0,0) \in S$; On the other hand, the region $S$ may be not a bounded set (In other words, the boundary of $S$ may be not enclosing). Even though bounded, the boundary of $S$ may be a polygon with the number of edges less than $n$, or even not a polygon but an object consisting of line segments or a point only. 


We look for conditions that ensure $S$ is a bounded $n$ sided convex polygon with the origin in its interior. To do this, we first introduce the following concepts. 

\begin{definition}[Digraph \cite{bullo2009distributed}]
\label{def:Digraph}
A finite directed graph (in short, a \textit{digraph}) is a $2$-tuple $(V, E)$, where $V \subseteq \mathbb{R}^2$ is a finite set of planar points called nodes and $E \subseteq V \times V$ is a set of ordered pairs of nodes called (directed) edges. Self-loops are allowed in a digraph. For $u,v \in V$, if the ordered pair $(u,v) \in E$, it denotes an edge from $u$ to $v$.
\end{definition} 

\begin{definition}[Path \cite{bullo2009distributed}]
A finite directed path (in short, a \textit{path}) given by a finite sequence of planar points $S_g := (S_g[0], \ldots, S_g[m])$ where $S_g[i] \in \mathbb{R}^2,  i \in \{0, \ldots, m\}$, is a digraph whose set of nodes $V=\{S_g[i]: i \in \{0, \ldots, m\} \}$ and set of edges $E= \{ (S_g[i], S_g[i+1]): i \in \{0, \ldots, m-1\} \}$. In a path $S_g$, one node $S_g[i]$ \textit{immediately precedes} another node $S_g[j]$ if and only if there is an edge $(S_g[i], S_g[j])$; \textit{Two nodes are consecutive} if and only if one immediately precedes the other; \textit{Three nodes are consecutive} if and only if the first one immediately precedes the second and the second immediately precedes the third.
\end{definition} 


Note that if $S_g[i] = S_g[j], i \ne j$, then $\{ S_g[i], S_g[j]  \} = \{ S_g[i] \}$, according to the Axiom of Extensionality in axiomatic set theory \cite{zach2021sets}. For a node $S_g[i]$ in 
$S_g$, the node which $S_g[i]$ immediately precedes (resp.~which immediately precedes $S_g[i]$) may not exist, and even if it exists, it may be not unique. If there is an edge $(S_g[i],S_g[i])$, then $S_g[i]$ immediately precedes itself. It is possible that  $S_g[i]$ immediately precedes $S_g[j]$ and $S_g[j]$ immediately precedes $S_g[i]$ at the same time if $(S_g[i], S_g[j])$ and $(S_g[j], S_g[i])$ are both edges in $S_g$.

\begin{definition}[Graph and Undirected Version of Digraph \cite{bullo2009distributed}]
\label{def:Graph}
A finite undirected graph (in short, a \textit{graph}) is a $2$-tuple $(V, E)$, where $V \subseteq \mathbb{R}^2$ is a finite set of planar points called nodes and $E$ is a set of unordered pairs of nodes called (undirected) edges. Self-loops are not allowed in a graph. For $u,v \in V, u \ne v$, if the set $\{u,v\} \in E$, it denotes an edge between $u$ and $v$. A graph $(V,E)$ is the \textit{undirected version} of a digraph $(V,E')$ if and only if $(u,v) \in E' \lor (v,u) \in E' \iff \{u,v\} \in E$ for $u,v \in V, u \ne v$.
\end{definition}

In this paper, these definitions emphasize the planar geometric properties. For example, the positions of nodes are 2D coordinates; The shape of an edge connecting two distinct nodes is a straight line segment and the shape of a self-loop is a point. The operation of taking undirected version preserves geometric properties. For example, for any three nodes collinear in a digraph, these nodes must also be collinear in the graph which is the undirected version of the digraph.

\begin{definition}[Enclosing Path]
\label{def:Digraph_enclosing}
A path $S_g$ is \textit{enclosing} if and only if: 1) There are $n \ge 3$ different nodes; 2) No nodes appear more than once in the sequence $S_g$ except for the case of the initial and final node, which should be identical; 3) For any edge connecting two nodes, no other nodes lie on that edge.
\end{definition} 



For example, all the paths in \cref{fig:dignonenclose} are not enclosing. The path $(1,2,3,4,2,1)$ in \cref{sub-fig-C2} 
violates Condition 2) in \cref{def:Digraph_enclosing} because Node 2 appears more than once; The path $(1,2,3,4)$ in \cref{sub-fig-C3} also violates Condition 2) because the first node 1 and last node 4 are not identical; The path $(1,2,3,4,1)$ in \cref{sub-fig-C4} satisfies first two conditions, but violates Condition 3) because Node 4 is on the edge $(1,2)$.

\begin{figure}[H]
    \centering
    \subfloat[A path violating Condition 2)]{
    \label{sub-fig-C2}
    \includegraphics[width=0.11\textwidth]{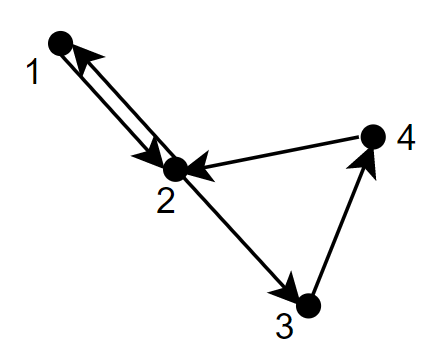}
    } \hspace{0.5cm}
    \subfloat[A path violating Condition 2)]{
    \label{sub-fig-C3}
    \includegraphics[width=0.1\textwidth]{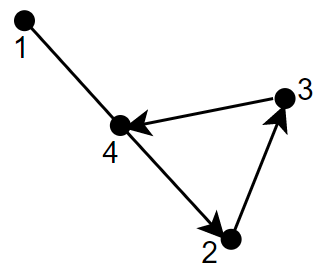}
    } \hspace{0.5cm}
    \subfloat[A path violating Condition 3)]{
    \label{sub-fig-C4}
    \includegraphics[width=0.1\textwidth]{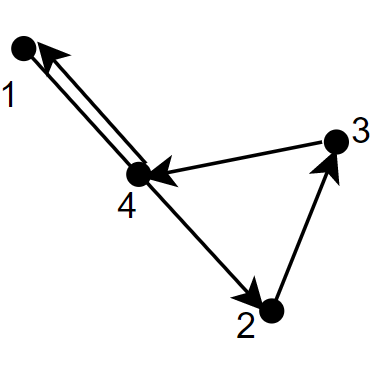}
    }
    \caption{Three cases of paths that are not enclosing}
    \label{fig:dignonenclose}  %
\end{figure} 

Through the definition of an enclosing path, we can introduce a definition of an enclosing graph.


\begin{definition}[Enclosing Graph]
\label{def:Graph_enclosing}
    A graph $G$ is \textit{enclosing} if and only if there is an enclosing path $S_g$ whose undirected version is $G$.
\end{definition}

For example, the graph $G$ in \cref{fig:ugenclosing} is not enclosing because any path whose undirected version is $G$ is not an enclosing path. 

\begin{figure}[H] 
\centering
\includegraphics[width=.1\textwidth]{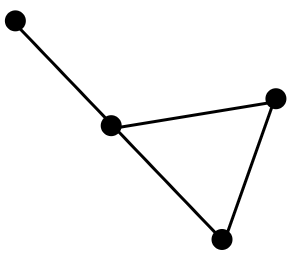}
\caption{A graph that is not enclosing} 
\label{fig:ugenclosing}
\end{figure}

It is straightforward to see that if a path $S_g$ is enclosing, then its undirected version $G$ is also enclosing. Conversely, if a path is not enclosing, its undirected version may still be enclosing.




Inspired by \cref{def:Digraph_enclosing} but subtly different from it, we define the following notion. 

\begin{definition}[Formally Enclosing Sequence of Symbols]
\label{def:Formallyenclosing}
    Let $S_s$ be a finite sequence of symbols. The sequence $S_s$ is \textit{formally enclosing} if and only if: 1) There are $n \ge 3$ formally different symbols; 2) No symbols appear more than once except for the case in which the initial and final symbol are formally identical. 
    
\end{definition}

In a finite sequence of symbols $S_s$, one symbol \textit{immediately precedes} another symbol if and only if there is an appearance $S_s[i]$ of the first one and an appearance $S_s[j]$ of the second such that $j = i + 1$; \textit{Two symbols are consecutive} if and only if one immediately precedes the other; \textit{Three symbols are consecutive} if and only if the first one immediately precedes the second and the second immediately precedes the third. 

A finite sequence of symbols is just an ordered structure, which neither has to be enclosing nor has geometric meaning. To address this, we introduce an isomorphism map, as defined below. 

\begin{definition}[Isomorphism Type A]
\label{def:isomorphism}
    Let $f_{\text{tag}}$ be a map from a finite sequence of symbols $S_{s}$ to a path $S_{g}$. The map $f_{\text{tag}}$ is an \textit{isomorphism type A} if and only if the following properties hold for $f_{\text{tag}}$: 1) \textbf{Injective}: formally different symbols in $S_{s}$ refer to different nodes in $S_{g}$; 2) \textbf{Surjective}: every node in $S_{g}$ is the image of a symbol in $S_{s}$; 3) \textbf{Order-Preserving}: for any two symbols $S_{s}[i], S_{s}[j]$ in $S_{s}$, the symbol $S_{s}[i]$ immediately precedes $S_{s}[j]$ if and only if $(f_{\text{tag}}(S_{s}[i]), f_{\text{tag}}(S_{s}[j]))$ is an edge in $S_g$; 
     4) \textbf{Not on the edge}: for any two consecutive symbols $S_{s}[i], S_{s}[i+1]$ and any third symbol $S_{s}[k]$ in $S_{s}$, if $S_{s}[k]$ is formally different from $S_{s}[i]$ and $S_{s}[i+1]$, then $f_{\text{tag}}(S_{s}[k])$ is not on the edge $(f_{\text{tag}}(S_{s}[i]), f_{\text{tag}}(S_{s}[i+1]))$ in $S_g$. 
\end{definition}

 


\begin{remark}
\label{remark_enclose}
Observe that if there is an isomorphism type A $f_{\text{tag}}$ from a finite sequence of symbols $S_s$ to a path $S_g$, then $S_s$ is formally enclosing if and only if $S_g$ is enclosing. We have seen that if $S_g$ is enclosing, then its undirected version $G$ is also enclosing. Therefore, we conclude that if $f_{\text{tag}}$ is an isomorphism type A from $S_s$ to $S_g$, and $S_s$ is formally enclosing, then the graph $G$, as the undirected version of $S_g$, is also enclosing.
\end{remark}

%


In addition to the discussions of enclosing paths and graphs above, we would also like to explore under which conditions it suffices to guarantee non-degenerate paths and graphs. 

\begin{definition}[Non-Degenerate Path and Graph] Let $S_g$ (resp.~$G$) be a path (resp.~a graph) with nodes given by a set of $n \ge 3$ points in $\mathbb{R}^2$.  The 
 path $S_g$ is \textit{a non-degenerate path} if and only if any three consecutive nodes in the sequence $S_g$ are not collinear. The graph $G$ is \textit{a non-degenerate graph} if and only if there is a non-degenerate path $S_g$ whose undirected version is $G$. A path $S_g$ (resp.~a graph $G$) is \textit{degenerate} if and only if it is not non-degenerate.
\end{definition}

Requiring that the number of nodes is $n=4$, the following \cref{fig:gdeg} gives two degenerate paths. In \cref{sub-fig-degC1}, the path $(1,2,2,3,1)$ is degenerate, because the number of nodes in the path is three less than $n=4$ and there are three consecutive nodes $(1,2,2)$ that are collinear. The path $(1,2,3,4,1)$ in \cref{sub-fig-degC2} is also degenerate, because there are three consecutive nodes $(2,3,4)$ that are collinear.

\begin{figure}[H]
    \centering
    \subfloat[A  degenerate path $(1,2,2,3,1)$ with three nodes]{
    \label{sub-fig-degC1}
    \includegraphics[width=0.12\textwidth]{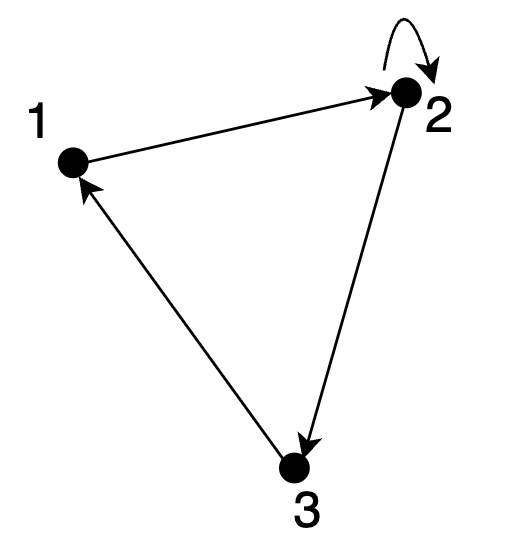}
    } \hspace{1cm}
    \subfloat[A degenerate path $(1,2,3,4,1)$ with four nodes]{
    \label{sub-fig-degC2}
    \includegraphics[width=0.12\textwidth]{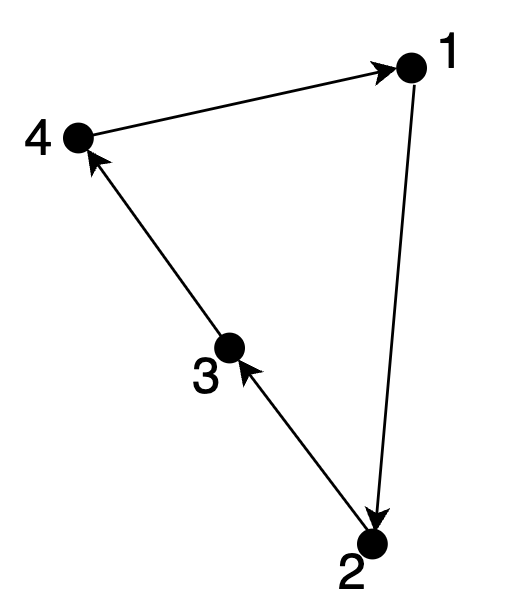}
    } 
    \caption{Two cases of degenerate paths}
    \label{fig:gdeg}  %
\end{figure} 

Requiring that the number of nodes is $n=4$, the following \cref{fig:gdeg_un} gives two degenerate graphs, which are the undirected version of the two paths shown in \cref{fig:gdeg}, respectively.

\begin{figure}[H]
    \centering
    \subfloat[A degenerate graph with three nodes]{
    \label{sub-fig-degC1-un}
    \includegraphics[width=0.12\textwidth]{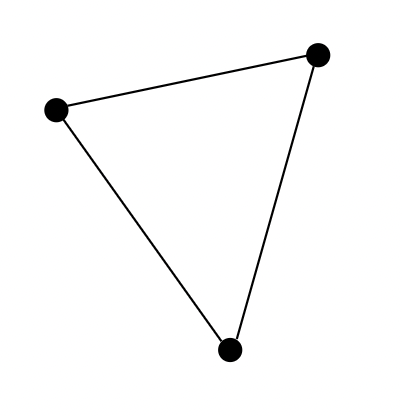}
    } \hspace{1cm}
    \subfloat[ A degenerate graph with four nodes]{
    \label{sub-fig-degC2-un}
    \includegraphics[width=0.12\textwidth]{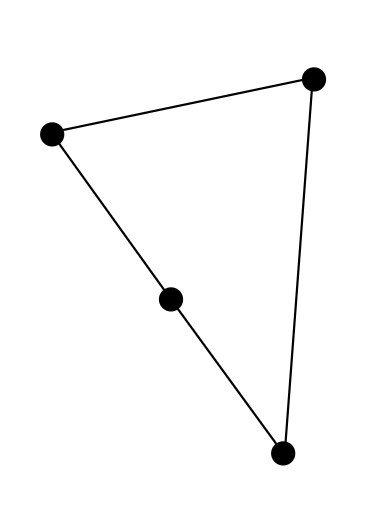}
    } 
    \caption{Two cases of degenerate graphs }
    \label{fig:gdeg_un}  %
\end{figure} 

Note that non-degenerate graphs (resp.~paths) and enclosing graphs (resp.~paths) are two independent concepts. The following \cref{fig:example_enc_nond} gives examples to illustrate the difference between those two concepts.

\begin{figure}[H] 
\centering
\includegraphics[width=.4\textwidth]{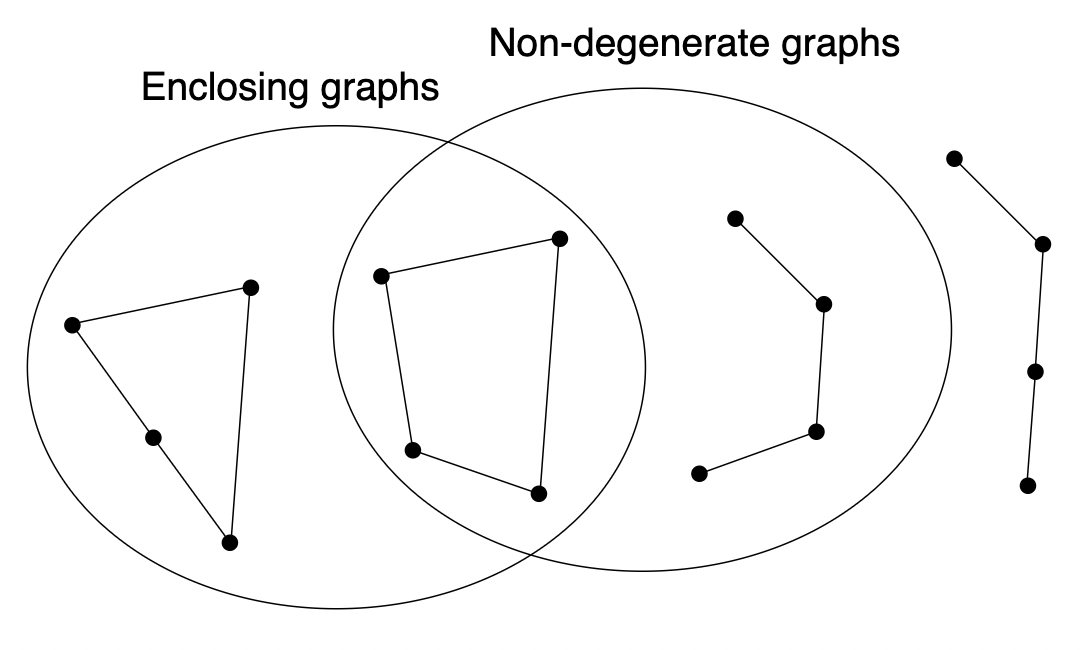}
\caption{Examples of enclosing or non-degenerate graphs} 
\label{fig:example_enc_nond}
\end{figure}

\begin{definition}[Isomorphism Type B]
\label{def:isomorphism_type_b}
    Let $f_{\text{tag}}$ be a map from a finite sequence of symbols $S_{s}$ to a path $S_{g}$. The map $f_{\text{tag}}$ is an \textit{isomorphism type B} if and only if the following properties hold for $f_{\text{tag}}$: 1) \textbf{Injective}: formally different symbols in $S_{s}$ refer to different nodes in $S_{g}$; 2) \textbf{Surjective}: every node in $S_{g}$ is the image of a symbol in $S_{s}$; 3) \textbf{Order-Preserving}: for any two symbols $S_{s}[i], S_{s}[j]$ in $S_{s}$, the symbol $S_{s}[i]$ immediately precedes $S_{s}[j]$ if and only if $(f_{\text{tag}}(S_{s}[i]), f_{\text{tag}}(S_{s}[j]))$ is an edge in $S_g$; 
    4) \textbf{Noncollinearity}: for any three consecutive symbols in $S_{s}$, their images are not collinear in $S_g$. 
\end{definition}

\begin{remark}
\label{remark_degenerate}
Observe that if there is an isomorphism type B $f_{\text{tag}}$ from a finite sequence of symbols $S_s$ to a path $S_g$, and there are $n \ge 3$ formally different symbols in $S_s$, then $S_g$ and its undirected version $G$ are both non-degenerate.
\end{remark}

Note that Condition 4) in \cref{{def:isomorphism_type_b}} of isomorphism type B is different from Condition 4) in \cref{def:isomorphism} of isomorphism type A. For example, the label shown in \cref{sub-fig-type-a-y} gives a map $f_{\text{tag}}$ from the sequence of symbols $(``a",``b",``c",``d",``b")$ to the path $(a,b,c,d,b)$. The map $f_{\text{tag}}$ is an isomorphism type A but not an isomorphism type B, because there are three consecutive symbols $``a",``b",``c"$ whose images $a,b,c$ are collinear in the path. This violates Condition 4) in \cref{def:isomorphism_type_b}. In contrast, the label shown in \cref{sub-fig-type-b-y} gives another map $f_{\text{tag}}'$ from the sequence of symbols $(``a",``b",``c",``d")$ to the path $(a,b,c,d)$. The map $f_{\text{tag}}'$ is an isomorphism type B but not an isomorphism type A, since although the symbol $``d"$ is formally different from each of the two consecutive symbols $``a",``b"$, Node $d$ is on the edge $(a,b)$ in the path. This violates Condition 4) in \cref{def:isomorphism}.

\begin{figure}[H]
    \centering
    \subfloat[An isomorphism type A ]{
    \label{sub-fig-type-a-y}
    \includegraphics[width=0.12\textwidth]{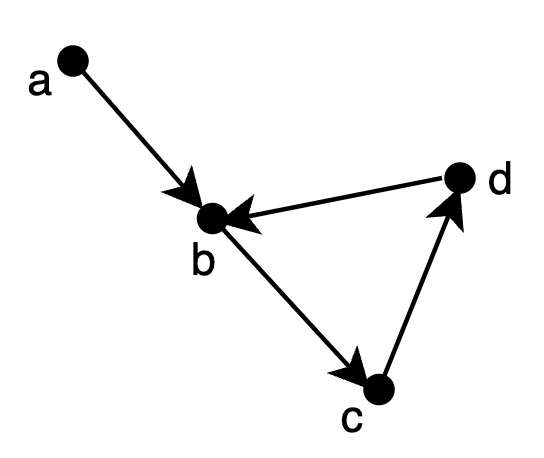}
    } \hspace{2cm}
    \subfloat[An isomorphism type B ]{
    \label{sub-fig-type-b-y}
    \includegraphics[width=0.12\textwidth]{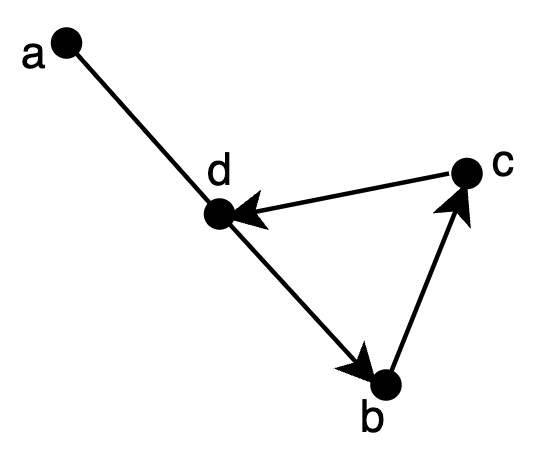}
    } 
    \caption{Examples of isomorphisms type A or type B}
    \label{fig:diff_two_iso}  %
\end{figure} 



We go back to look for the conditions that ensure $S$ in \cref{eqn:ss} is a bounded $n$ sided convex polygon with the origin in its interior.


For any two lines $l^i := a_ix + b_iy - 1 = 0$ and $l^j := a_jx + b_jy - 1 = 0$, if $D_{ij} := a_ib_j-b_ia_j \ne 0$, then there is a unique intersection point between those two lines, which is
\begin{equation*}
    V_{ij} := l^i \cap l^j = \left(-\frac{b_i-b_j}{D_{ij}}, \frac{a_i-a_j}{D_{ij}}\right).
\end{equation*}
Note that $V_{ij} = V_{ji}$. The fact $D_{ij} \ne 0$ implies that $a_i^2 + b_i^2 \ne 0 \land a_j^2 + b_j^2 \ne 0$, and also implies that $a_i \ne a_j \lor b_i \ne b_j$, namely, $V_{ij} \ne (0,0)$. This makes sense since neither lines passes through the origin and therefore the origin is definitely not the intersection point. The sign of $D_{ij}$ depends on the order of $i$ and $j$, namely, $D_{ij} = - D_{ji}$. For two lines $l^i, l^j$ sharing a unique intersection point, if we can rotate $l^i$'s normal vector $(a_i, b_i)$ to $l^j$'s normal vector $(a_j, b_j)$ counterclockwise about the origin by an angle $ \theta \in (0, \pi) $, then $D_{ij}>0$; otherwise, $D_{ij}<0$. When $D_{ij} < 0$, we can swap the value of $(a_i, b_i)$ with that of $(a_j, b_j)$ so that $D_{ij} > 0$. Hence, without loss of generality, we can assume that $D_{ij} > 0$.



For simplicity of notation, we define a unary operation $ i^{\oplus}$ for $  i \in \{ 1,\ldots,n \}$
\begin{equation*}
    i^{\oplus} : =
    \begin{cases}
    i + 1, & i \in \{ 1,\ldots,(n-1) \},\\
    1, & i = n, \\
    \end{cases}
\end{equation*}
where $i+1$ is the ordinary arithmetic operation of addition. Inversely, we can define another unary operation $i^{\ominus}$ for $  i \in \{ 1,\ldots,n \}$, that is, $j= i^{\ominus}$ if and only if $j^{\oplus} = i$. 

Consider $n \ge 3$ planar lines not passing through the origin, denoted by $l^k, \;  k \in \{ 1,\ldots,n \}$. We hope the boundary of the region $S$, generated by these lines in the way \cref{eqn:ss} gives, is an enclosing $n$ sided convex polygon with the origin in its interior. A necessary condition to make it is that there are at least $n$ different line intersection points serving as $n$ nodes of the polygon. Without loss of generality, we require the existence of such $n$ formally different intersection points $V_{ij}, \;  i \in \{ 1,\ldots,n \}, j = i^{\oplus}$ (which also implicitly requires $l^i \ne l^j, \;  i \in \{ 1,\ldots,n \}, j = i^{\oplus}$). To enforce this, we formulate the following constraints
\begin{equation}
    D_{ij}:=a_ib_j-b_ia_j > 0, \quad  i \in \{ 1,\ldots,n \}, \;j = i^{\oplus}. 
    \label{eqn:detcon}
\end{equation}
Nevertheless, satisfying \cref{eqn:detcon} can not ensure that there are indeed $n$ different intersection points (resp.~lines), because two formally different representations of the intersection points (resp.~lines) may refer to an identical intersection point (resp.~line), as illustrated in \cref{fig:occurrence_identical}.

\begin{figure}[H]
    \centering
    \subfloat[The point $V_{12}$ and $V_{23}$ are identical]{
    \label{sub-fig-24n1}
    \includegraphics[width=0.18\textwidth]{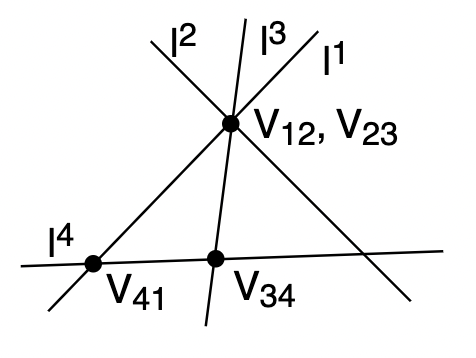}
    } \hspace{0.5cm}
    \subfloat[The line $l^3$ and $l^6$ are identical]{
    \label{sub-fig-24n2}
    \includegraphics[width=0.18\textwidth]{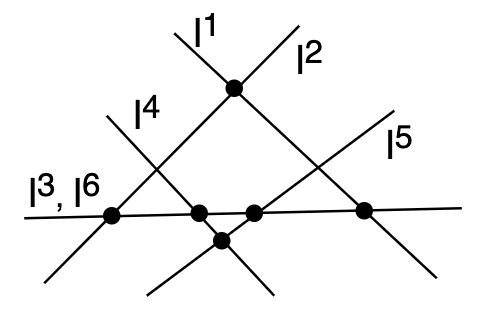}
    } \hspace{0.5cm}
    \subfloat[The line $l^1$ and $l^3$, the point $V_{12}$ and $V_{23}$, as well as the point $V_{34}$ and $V_{41}$ are identical]{
    \label{sub-fig-24n3}
    \includegraphics[width=0.18\textwidth]{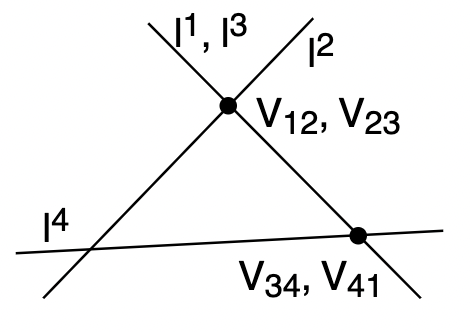}
    }
    \caption{Scenarios of identical intersection points or lines while satisfying \cref{eqn:detcon}}
    \label{fig:occurrence_identical}  %
\end{figure} 

The boundary of a region $S$ generated by a finite set of lines can be defined as the undirected version $G$ of a path $S_g$. Throughout the rest of this paper, when a graph $G$ is the undirected version of a path $S_g$, we say the graph $G$ is \textit{formed} by $S_g$. For any $n \ge 3$ planar lines $l^k, \;  k \in \{ 1,\ldots,n \}$ not passing through the origin, if there are $n$ formally different intersection points $V_{ij}, \;  i \in \{ 1,\ldots,n \}, j = i^{\oplus}$, we can construct a path given by a finite sequence of intersection points $S_g:=(V_{12}, V_{23}, \ldots, V_{(n-1)n}, V_{n1}, V_{12})$. If the graph $G$ formed by $S_g$ is an enclosing $n$ sided convex polygon with the origin in its interior, then the boundary of the region $S$ is identical to the graph $G$, and therefore the boundary of the region $S$ is also an enclosing $n$ sided convex polygon with the origin in its interior. Hence, to make the boundary of the region $S$ an enclosing $n$ sided convex polygon with the origin in its interior, we just need to ensure the graph $G$ formed by $S_g$ is an enclosing $n$ sided convex polygon with the origin in its interior. 

However, for certain $n \ge 3$ planar lines not passing through the origin that have $n$ formally different intersection points $V_{ ij}, \;  i \in \{ 1,\ldots,n \}, j = i^{\oplus}$, the graph $G$ formed by $S_g$ may be not an enclosing $n$ sided convex polygon with the origin in its interior. Specifically, 1) the graph $G$ may be not enclosing. For example, the graph $G$ in \cref{sub-fig-24n3} is a line segment and is not enclosing; 2) The graph $G$ may degenerate to a polygon whose number of edges less than $n$, or even not a polygon but line segments or a point. For example, the graph $G$ in \cref{sub-fig-24n1} is a triangle, not a quadrilateral as required; 3) The graph $G$ may be not convex (may be concave or self-intersecting), as illustrated in \cref{sub-fig-concave} and \cref{sub-fig-complex}; 4) The origin $(0,0)$ may be not in the interior of the graph $G$, as shown in \cref{sub-fig-originnotin}.

\begin{figure}[H] 
    \centering
    \subfloat[A concave quadrilateral formed by $(V_{12}, V_{23}, V_{34}, V_{41}, V_{12})$ ]{
    \label{sub-fig-concave}
    \includegraphics[width=0.18\textwidth]{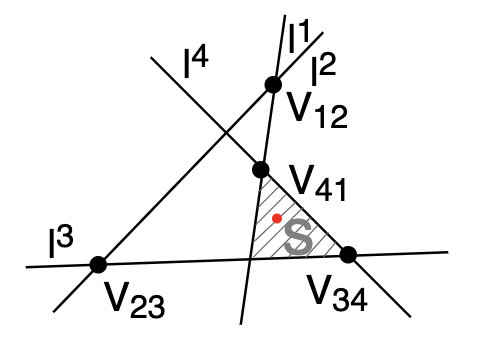}
    } \hspace{1cm}
    \subfloat[A self-intersecting quadrilateral formed by $(V_{12}, V_{23}, V_{34}, V_{41}, V_{12})$]{
    \label{sub-fig-complex}
    \includegraphics[width=0.18\textwidth]{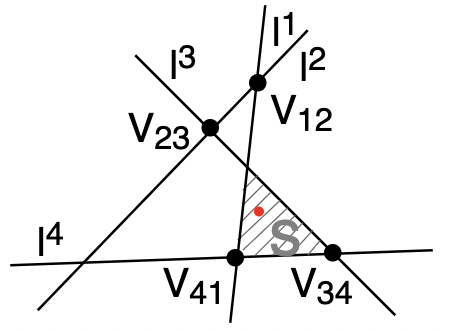}
    } \hspace{1cm}
    \subfloat[A convex quadrilateral formed by $(V_{12}, V_{23}, V_{34}, V_{41}, V_{12})$ with the origin in its exterior]{
    \label{sub-fig-originnotin}
    \includegraphics[width=0.18\textwidth]{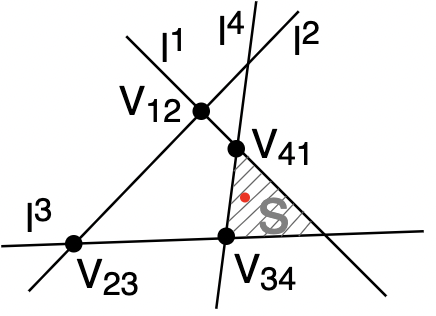}
    }
    \caption{Cases of graphs that are not enclosing $n$ sided convex polygons with the origin in its interior (The shadowed area is the region $S$ defined in \cref{eqn:ss} and the red point is the origin)}
    \label{fig:scenotbeing}  %
\end{figure} 

As discussed above, the constraints \cref{eqn:detcon} are not sufficient to guarantee that the graph $G$ formed by $S_g$ is an enclosing $n$ sided convex polygon with the origin in its interior. To this end, we provide additional constraints in \cref{theorem:1}.

Before proceeding to \cref{theorem:1}, we first introduce \cref{lemma:4}. This lemma is useful in the sense that once constraints \cref{eqn:detcon} and \cref{eqn:no1cons} are satisfied, we no longer need to explicitly enforce that all $n$ intersection points (resp.~ lines) are indeed different from each other.


\begin{lemma}
\label{lemma:4}
    Assume that there are $n \ge 3$ planar lines not passing through the origin $(0,0)$, denoted by $l^k := a_kx + b_ky - 1 = 0,  k \in \{ 1,\ldots,n \}$. If there are $n$ intersection points $V_{ij} := l^i \cap l^j, \;   i \in \{ 1,\ldots,n \}, j = i^{\oplus}$, and the coefficients of these lines satisfy the following inequalities
\begin{equation}
    \begin{aligned}
    & -a_k\left(b_i-b_j \right)+b_k\left(a_i-a_j\right) - (a_ib_j-b_ia_j) < 0, \\
    &  i \in \{ 1,\ldots,n \}, j = i^{\oplus},  k \in \{ 1,\ldots,n \}-\{i, j\}, 
    \label{eqn:no1cons}
    \end{aligned}
\end{equation}
then, any two formally different representations of intersection points (resp.~lines) refer to different intersection points (resp.~lines), and therefore all $n$ intersection points (resp.~lines) are different from each other.
\end{lemma}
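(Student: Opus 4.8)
The plan is to reinterpret the algebraic inequality \cref{eqn:no1cons} geometrically and then argue by contradiction for the two distinctness claims separately. First I would substitute the coordinates of the vertex $V_{ij}=(-(b_i-b_j)/D_{ij},\,(a_i-a_j)/D_{ij})$ into the line functional $l^k(x,y)=a_kx+b_ky-1$. A direct computation gives
\[
l^k(V_{ij}) = \frac{-a_k(b_i-b_j) + b_k(a_i-a_j) - D_{ij}}{D_{ij}},
\]
whose numerator is exactly the left-hand side of \cref{eqn:no1cons}. Since the standing convention is $D_{ij}>0$, the constraint \cref{eqn:no1cons} is therefore equivalent to $l^k(V_{ij})<0$; that is, \emph{every} vertex $V_{ij}$ (with $j=i^{\oplus}$) lies on the strict inner side of \emph{every} remaining line $l^k$, $k\notin\{i,j\}$. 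This reformulation is the engine of the whole argument, since a coinciding point or line would be forced to sit on some line $l^k$ while simultaneously being required to sit strictly off it.

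For the distinctness of the intersection points, I would suppose, for contradiction, that two formally different vertices coincide, say $V_{ij}=V_{i'j'}$ with $j=i^{\oplus}$, $j'=(i')^{\oplus}$, and $i\neq i'$. The key combinatorial observation is that for $n\ge 3$ the $n$ consecutive index sets $\{1,2\},\{2,3\},\dots,\{n,1\}$ are pairwise distinct, so $\{i',j'\}\neq\{i,j\}$ and one can pick an index $k\in\{i',j'\}\setminus\{i,j\}$. By construction the common point lies on $l^k$, so $l^k$ vanishes there; but applying \cref{eqn:no1cons} to the pair $(i,j)$ together with this $k$ (legitimate because $k\notin\{i,j\}$) yields $l^k(V_{ij})<0$, a contradiction. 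Hence all $n$ intersection points are distinct.

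For the distinctness of the lines, I would again argue by contradiction, assuming $l^i=l^{i'}$ with $i\neq i'$; by the bijectivity of the representation \cref{representation-1} this means $(a_i,b_i)=(a_{i'},b_{i'})$. The only delicate case is $i'=i^{\oplus}$, which is ruled out directly because the hypothesised existence of the vertex $V_{i,i^{\oplus}}$ forces $D_{i,i^{\oplus}}\neq 0$, i.e.\ $l^i\neq l^{i^{\oplus}}$. For every other index $i'\neq i$ we have $i'\notin\{i,i^{\oplus}\}$, so I would apply \cref{eqn:no1cons} to the vertex $V_{i,i^{\oplus}}$ with $k=i'$, obtaining $l^{i'}(V_{i,i^{\oplus}})<0$; but since $l^{i'}=l^i$ passes through $V_{i,i^{\oplus}}$ we also have $l^{i'}(V_{i,i^{\oplus}})=0$, a contradiction. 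Thus all $n$ lines are distinct as well.

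I expect the main obstacle to be the index bookkeeping rather than any analytic difficulty: one must verify in each branch that the witness index $k$ genuinely lies outside $\{i,j\}$ so that the constraint is applicable, and in particular confirm that the $n\ge 3$ hypothesis is exactly what guarantees the consecutive index sets are all distinct (this fails at $n=2$, where $\{1,2\}=\{2,1\}$). Care is also needed to cleanly separate the adjacency case in the line argument, which is dispatched by the non-vanishing of $D_{i,i^{\oplus}}$ coming from the existence hypothesis, from the non-adjacent cases handled by the strict-interiority reading of \cref{eqn:no1cons}.
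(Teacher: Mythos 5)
Your proposal is correct and takes essentially the same route as the paper's proof: both distinctness claims are handled by contradiction, playing the strict inequalities of \cref{eqn:no1cons} against the incidence equality $l^k(V_{ij})=0$ that a coinciding point or duplicated line would force. Your explicit reformulation of \cref{eqn:no1cons} as $l^k(V_{ij})<0$ and your one-vertex argument for a duplicated line are a mild streamlining of the paper's version (which uses two vertices of the duplicated line and separates $n=3$ from $n\ge 4$), but the underlying mechanism is identical.
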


\begin{proof}
    See \cref{subsec:lemma4}.  
\end{proof}


Next, the following \cref{theorem:1} gives the conditions which suffice to ensure the boundary of the region $S$ in \cref{eqn:ss} is an enclosing $n$ sided convex polygon with the origin in its interior. Intuitively, constraints \cref{eqn:no1cons} check each intersection point $V_{ij},  i \in \{ 1,\ldots,n \}, j = i^{\oplus} $ against all its opposite $(n-2)$ lines $l^k,  k \in \{ 1,\ldots,n \}-\{i, j\}$.



\begin{theorem}
\label{theorem:1}
Assume that there are $n \ge 3$ planar lines not passing through the origin $(0,0)$, denoted by $l^k := a_kx + b_ky - 1 = 0, \;  k \in \{ 1,\ldots,n \}$. If there are $n$ intersection points $V_{ij} := l^i \cap l^j, \;   i \in \{ 1,\ldots,n \}, j = i^{\oplus}$, and the coefficients of these lines satisfy \cref{eqn:no1cons}, then the boundary of the region $S := \{(x,y): \bigwedge\limits_{k=1}^n a_kx + b_ky - 1 \le 0 \}$ generated by these lines is the graph $G$ formed by the sequence of intersection points $S_g:=(V_{12}, V_{23}, \ldots, V_{(n-1)n}, V_{n1}, V_{12})$, which defines an enclosing $n$ sided convex polygon with the origin in its interior.
\end{theorem}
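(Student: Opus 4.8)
The plan is to show that the candidate polygon $G$ formed by the sequence $(V_{12}, V_{23}, \ldots, V_{(n-1)n}, V_{n1}, V_{12})$ is a genuine simple polygon, that it is convex with the origin in its interior, and finally that the closed region it bounds coincides exactly with $S$, so that $\partial S = G$. The computational engine behind everything is the observation that constraint \cref{eqn:no1cons} is precisely the statement that each consecutive intersection point lies on the strict inner side of every non-adjacent line. Indeed, substituting $V_{ij} = (-\tfrac{b_i-b_j}{D_{ij}}, \tfrac{a_i-a_j}{D_{ij}})$ into $l^k$ gives
\[
a_k\Big(\!-\tfrac{b_i-b_j}{D_{ij}}\Big) + b_k\,\tfrac{a_i-a_j}{D_{ij}} - 1 = \frac{-a_k(b_i-b_j)+b_k(a_i-a_j)-D_{ij}}{D_{ij}},
\]
whose numerator is exactly the left-hand side of \cref{eqn:no1cons}. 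Under the standing assumption $D_{ij}>0$, constraint \cref{eqn:no1cons} is therefore equivalent to $l^k(V_{ij})<0$ for every $k\in\{1,\ldots,n\}-\{i,j\}$. Thus each vertex $V_{ij}$ satisfies $l^i=l^j=0$ and $l^k<0$ for all other $k$; in particular every $V_{ij}$ lies in $S$ and on its boundary, and by \cref{lemma:4} all $n$ vertices and all $n$ lines are distinct.

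Next I would prove that $G$ is an enclosed, non-degenerate $n$-gon by invoking \cref{lemma:2,lemma:3}. Take the formal symbol sequence $(\sigma_{12},\sigma_{23},\ldots,\sigma_{n1},\sigma_{12})$ of $n$ distinct symbols with identical first and last entries; since $n\ge 3$ it is formally enclosed in the sense of \cref{def:FormallyEnclosed}. Define the tagging map $f_{tag}$ sending $\sigma_{k k^{\oplus}}$ to $V_{k k^{\oplus}}$. It is surjective and order-preserving by construction and injective by the distinctness just established. The crucial Collinearity-Prohibited condition of \cref{def:isomorphism} reduces to the strict inequality above: two consecutive vertices $V_{ab},V_{bc}$ share exactly the line $l^b$, and any third vertex $V_{xy}$ has $b\notin\{x,y\}$, hence $l^b(V_{xy})<0$ and $V_{xy}$ lies off the line through $V_{ab}$ and $V_{bc}$. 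Therefore $f_{tag}$ is an isomorphism, and \cref{lemma:2} gives that $S_g$ and its undirected version $G$ are enclosed. The same strict inequality shows no three consecutive images are collinear (the middle vertex together with a neighbour determines some $l^b$, off which the third vertex lies), so \cref{lemma:3} yields that $G$ does not degenerate and has exactly $n$ vertices and $n$ edges, the edge joining $V_{k^{\ominus} k}$ and $V_{k k^{\oplus}}$ lying on $l^k$.

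Finally, I would establish convexity, the origin's location, and the identity $\partial S=G$ together. Convexity follows from the standard criterion that a simple polygon is convex when all of its vertices lie in a single closed half-plane of each edge-line: the edge on $l^k$ has both endpoints satisfying $l^k=0$ while every other vertex satisfies $l^k<0$, so all vertices lie in $\{l^k\le 0\}$ for each $k$. The origin satisfies $l^k(0,0)=-1<0$ for all $k$, hence lies strictly inside. Since each of the $n$ lines $l^1,\ldots,l^n$ supplies exactly one edge and the polygon's interior sits on the side $l^k\le 0$, the closed region bounded by $G$ equals $\bigcap_{k=1}^{n}\{l^k\le 0\}=S$ as in \cref{eqn:ss}: the convex hull of the vertices lies in $S$ because $S$ is convex and contains every vertex, while conversely the bounded region of a convex polygon is exactly the intersection of the inner half-planes of its edges. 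Consequently $S$ is bounded and $\partial S=G$, an enclosed $n$-sided convex polygon containing the origin, as claimed.

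I expect the main obstacle to be the final identification $\partial S=G$ rather than any single inequality: one must argue that the convex polygon built from the consecutive intersections captures \emph{all} of the active constraints, i.e.\ that every line contributes exactly one edge and that no non-adjacent intersection $V_{il}$ sneaks in as an extra vertex. Framing the region of $G$ as the intersection of the inner half-planes of its $n$ edges and matching this with $S$ is what cleanly rules this out, but it requires the convexity step and the bookkeeping of which two lines meet at each vertex to be firmly in place first. Careful handling of the $n=3$ versus $n\ge 4$ cases in the collinearity arguments is a minor but necessary technicality.
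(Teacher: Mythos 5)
Your proposal is correct and follows essentially the same route as the paper's own proof: the same reduction of \cref{eqn:no1cons} (divided by $D_{ij}>0$) to the statement that each vertex $V_{ij}$ lies strictly inside every non-adjacent line, the same appeal to \cref{lemma:4} for distinctness, the same construction of the formal symbol sequence and tagging isomorphism feeding into \cref{lemma:2} and \cref{lemma:3} for enclosure and non-degeneration, and the same half-plane criterion for convexity and for placing the origin via $l^k(0,0)=-1<0$. The only difference is cosmetic: your closing identification of $\partial S$ with the polygon (via the intersection of the edges' inner half-planes) is argued slightly more explicitly than in the paper, which essentially asserts that identification in the main text before the theorem.
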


\begin{proof}
    See \cref{subsec:theorem1}.
\end{proof}

So far, we have just derived constraints \cref{eqn:detcon} and \cref{eqn:no1cons} to ensure that the polygon of interest contains the origin $(0,0)$. The following result extends the applicability of these conditions to contain an arbitrary point. 


\begin{corollary}
\label{corollary}
Assume that there are $n \ge 3$ planar lines not passing through the point $(\Bar{x},\Bar{y})$, denoted by $l^k := a_k(x-\Bar{x}) + b_k(y-\Bar{y}) - 1 = 0, \;  k \in \{ 1,\ldots,n \} $. If constraints \cref{eqn:detcon} and \cref{eqn:no1cons} hold, then the boundary of the region $S := \{(x,y): \bigwedge\limits_{k=1}^n a_k(x-\Bar{x}) + b_k(y-\Bar{y}) - 1 \le 0 \}$ generated by these lines is the graph $G$ formed by the sequence of intersection points $S_g:=(V_{12}, V_{23}, \ldots, V_{(n-1)n}, V_{n1}, V_{12})$, which defines an enclosing $n$ sided convex polygon with the point $(\Bar{x},\Bar{y})$ inside.
\end{corollary}

\begin{proof}
    See \cref{subsec:corollary1}. 
\end{proof}

We finally conclude that, according to \cref{corollary}, \cref{eqn:detcon} and \cref{eqn:no1cons} are the constraints that we formulate for $2n$ continuous variables $ a_k, b_k \in \mathbb{R}, \;  k \in \{ 1,\ldots,n \} $. For any $n \ge 3$ planar lines not passing through an arbitrary point, if satisfying \cref{eqn:detcon} and \cref{eqn:no1cons}, we can guarantee that the boundary of the region $S := \{(x,y): \bigwedge\limits_{k=1}^n a_k(x-\Bar{x}) + b_k(y-\Bar{y}) - 1 \le 0 \}$ generated by these lines is an enclosing $n$ sided convex polygon with that point in its interior.

\subsection{Constraints Relating Continuous and Discrete Variables}
\label{subsecbridge}

Recall that, a mesh of grid points is a map $\boldsymbol{g}$ given by $(x_i, y_j) = \boldsymbol{g}_{ij}, \; i,j \in \{1,\ldots,N\}$ as defined in \cref{mesh_grids}, and we have obtained a normalized weight matrix $\boldsymbol{w}:= [w_{ij}]$ such that every grid point $\boldsymbol{g}_{ij}, \; i,j \in \{1,\ldots,N\}$ has a normalized weight $w_{ij}$, respectively, in \cref{eqn:normw}.
The index of the grid point whose normalized weight is the greatest is denoted by $(\Bar{i},\Bar{j}) := \arg \max \limits_{i,j} (\boldsymbol{w})$, and its coordinate is $(x_{\Bar{i}},y_{\Bar{j}}) := \boldsymbol{g}_{\Bar{i}\,\Bar{j}} $, which plays the role of $(\Bar{x},\Bar{y})$ in \cref{subsecconti}.

The following binary variables $z_{ij} \in \{0,1\}$ are defined so that $z_{ij}=1$ if and only if the grid point $\boldsymbol{g}_{ij}$ lies inside the polygon obtained in \cref{subsecconti}. In other words, 
\begin{align}
    & z_{ij}=1\implies \bigwedge_{k=1}^n a_k(x_i-x_{\Bar{i}})+b_k(y_j-y_{\Bar{j}}) - 1\le 0, \label{eq:zab1} \\
    &  i,j \in \{1,\ldots,N\}, \notag \\[0.1cm]
    & z_{ij}=0\implies \bigvee_{k=1}^n a_k(x_i-x_{\Bar{i}})+b_k(y_j-y_{\Bar{j}}) - 1 > 0, \label{eq:zab2}\\
    &  i,j \in \{1,\ldots,N\}. \notag
\end{align}

To convert the logical constraints above into equivalent algebraic constraints, we introduce new binary variables $l_{ij}^k \in \{0,1\}$ such that $l_{ij}^k = 1$ if and only if the grid point $\boldsymbol{g}_{ij}$ lies on the inner side of the line $l^k$. That is, 
\begin{align*}
    & l_{ij}^k = 1 \implies a_k(x_i-x_{\Bar{i}})+b_k(y_j-y_{\Bar{j}}) - 1 \le 0,  \\
    &  i,j \in \{1,\ldots,N\}, \quad  k \in \{1,\ldots,n\}, \notag \\[0.2cm]
    & l_{ij}^k = 0 \implies a_k(x_i-x_{\Bar{i}})+b_k(y_j-y_{\Bar{j}}) - 1 > 0,  \\
    &  i,j \in \{1,\ldots,N\}, \quad  k \in \{1,\ldots,n\}, \notag
\end{align*}
where $n \in \{x \in \mathbb{N}: x \ge 3 \} $. These can be formulated as algebraic constraints via big-$M$ representation \cite{griva2009linear} as 
\begin{align}
    & a_k(x_i-x_{\Bar{i}})+b_k(y_j-y_{\Bar{j}}) - 1 \le M_{ijk}^1(1-l_{ij}^k) \label{eqn:lab1}\\
    &  i,j \in \{1,\ldots,N\}, \quad  k \in \{1,\ldots,n\}, \notag \\[0.2cm]
    & -a_k(x_i-x_{\Bar{i}}) - b_k(y_j-y_{\Bar{j}}) + 1  < M_{ijk}^2 l_{ij}^k \label{eqn:lab2}\\
    &  i,j \in \{1,\ldots,N\}, \quad  k \in \{1,\ldots,n\},  \notag 
\end{align}
where the parameters $M_{ijk}^1, M_{ijk}^2, \; i,j \in \{1,\ldots,N\}, \; k \in \{1,\ldots,n\}$ are  sufficiently large constants. They are often set specifically across different constraints, but in this paper, we just need to choose a common constant for them.


The logical relationship between $z_{ij}$ and $l_{ij}^k$ is 
\begin{align*}
    &z_{ij}=1 \implies \sum_{k=1}^n l_{ij}^k = n \quad  i, j \in \{1,\ldots,N\},  \\
    &z_{ij}=0 \implies \sum_{k=1}^n l_{ij}^k \le (n-1) \quad  i, j \in \{1,\ldots,N\},
\end{align*}
and these can be formulated as algebraic constraints
\begin{align}
    &\sum_{k=1}^n l_{ij}^k \ge nz_{ij} \quad  i, j \in \{1,\ldots,N\}, \label{eqn:zl1} \\
    &\sum_{k=1}^n l_{ij}^k  \le (n-1) + z_{ij} \quad  i, j \in \{1,\ldots,N\},   \label{eqn:zl2}
\end{align}
where $n \in \{x \in \mathbb{N}: x \ge 3 \} $.

In this way, we have converted the original logical constraints \cref{eq:zab1} and \cref{eq:zab2}, into equivalent algebraic inequalities \cref{eqn:lab1}, \cref{eqn:lab2}, \cref{eqn:zl1} and \cref{eqn:zl2}. These constraints give the relation between $2n$ continuous variables $ a_k, b_k \in \mathbb{R}, \; k \in \{ 1,\ldots,n \} $ and $N^2n$ binary variables $l_{ij}^k \in \{0,1\}, \; \ i, j \in \{1,\ldots,N\}, \; k \in \{ 1,\ldots,n \}$, $N^2$ binary variables $z_{ij} \in \{0,1\}, \;  i, j \in \{1,\ldots,N\}$.

\subsection{Constraints on Discrete Variables}
\label{subsecdv}

We would like to enforce that the grid point $ \boldsymbol{g}_{\Bar{i}\,\Bar{j}} := (x_{\Bar{i}},y_{\Bar{j}}) $ lies inside the  polygon obtained in \cref{subsecconti}. Recall that, the binary variable $z_{ij}=1$ if and only if the grid point $\boldsymbol{g}_{ij}$ lies inside the polygon. Therefore, we impose that
\begin{equation}
    z_{\bar{i}\,\bar{j}} = 1.
    \label{zeq1}
\end{equation}

In fact, \cref{zeq1} is unnecessary as $(x_{\Bar{i}}, y_{\Bar{j}}) \in S := \{(x,y): \bigwedge\limits_{k=1}^n a_k(x-x_{\Bar{i}}) + b_k(y-y_{\Bar{j}}) - 1 \le 0 \}$ given by \cref{corollary} in \cref{subsecconti}, which means this grid point must lie inside the  polygon. However, we explicitly list it here for better clarity.

The convex polygon is intended to approximate the PRS. To guarantee a level of confidence of the PRS, we impose that the sum of the normalized weights $w_{ij}$ of the grid points which lie inside the polygon should be greater than this confidence level $\alpha$. That is,
\begin{equation}
    \sum_{i=1}^N \sum_{j=1}^N w_{ij} z_{ij} \geq \alpha.
    \label{confidencegua}
\end{equation}

\cref{zeq1} and \cref{confidencegua} lead to two constraints that we formulate for $N^2$ binary variables $z_{ij} \in \{0,1\}, \;  i, j \in \{1,\ldots,N\}$.

\subsection{Formulation of MINLP Optimization Framework}

To solve the problem described in \cref{subsecdp}, we have chosen the objective function $\min \sum\limits_{i=1}^N \sum\limits_{j=1}^N z_{ij}$ in \cref{eqn:obj}, and introduced the following three types of decision variables with $n \in \{x \in \mathbb{N}: x \ge 3 \} $ in \cref{subsecdp}:
\begin{itemize}
    \item $2n$ continuous variables $ a_k, b_k \in \mathbb{R}, \;  k \in \{ 1,\ldots,n \} $;  
    
    \item $N^2n$ binary variables $l_{ij}^k \in \{0,1\}, \,  i, j \in \{1,\ldots,N\}, \,  k \in \{ 1,\ldots,n \}$;
    
    \item $N^2$ binary variables $z_{ij} \in \{0,1\}, \;  i, j \in \{1,\ldots,N\}$.
\end{itemize}


Collect the previous constraints \cref{eqn:detcon}, \cref{eqn:no1cons} from \cref{subsecconti}; \cref{eqn:lab1}, \cref{eqn:lab2}, \cref{eqn:zl1}, \cref{eqn:zl2} from \cref{subsecbridge}; \cref{zeq1}, \cref{confidencegua} from \cref{subsecdv}. 

We formally formulate the problem stated in\cref{Problem_Statement} as an MINLP optimization framework below. All coordinates are expressed in a global coordinate system. If not specified, $ \forall i,j \in \{1,\ldots,N\}, \;  \forall k \in \{1,\ldots,n\}$.

\begin{equation}
\begin{aligned}
    & \min \sum_{i=1}^N \sum_{j=1}^N z_{ij}  \\[0.3cm]
    & \text{s.t.} \quad a_ib_j-b_ia_j > 0, \quad  \forall i \in \{ 1,\ldots,n \}, j = i^{\oplus}; \\[0.3cm]
    & \phantom{s.t.} \quad -a_k\left(b_i-b_j \right)+b_k\left(a_i-a_j\right) < a_ib_j-b_ia_j, \quad \forall i \in \\ 
    & \phantom{s.t.} \quad \{ 1,\ldots,n \}, \; j = i^{\oplus}, \; \forall k \in \{ 1,\ldots,n \}-\{i, j\}; \\[0.3cm]
    & \phantom{s.t.} \quad a_k (x_i-x_{\Bar{i}}) +b_k (y_j-y_{\Bar{j}}) -1 \le M_{ijk}^1(1 - l_{ij}^k), \quad  \forall i,j,k; \\[0.3cm]
    & \phantom{s.t.} \quad -a_k (x_i-x_{\Bar{i}}) -b_k (y_j-y_{\Bar{j}}) + 1 < M_{ijk}^2 l_{ij}^k, \quad  \forall i,j,k; \\[0.3cm]
    & \phantom{s.t.} \quad \sum_{k=1}^n l_{ij}^k \ge n z_{ij}, \quad  \forall i, j; \\[0.3cm]
    & \phantom{s.t.} \quad \sum_{k=1}^n l_{ij}^k \le (n-1) + z_{ij}, \quad  \forall i, j; \\[0.3cm]
    & \phantom{s.t.} \quad z_{\bar{i} \, \bar{j}} = 1; \\[0.3cm]
    & \phantom{s.t.} \quad \sum_{i=1}^N \sum_{j=1}^N w_{ij} z_{ij} \geq \alpha; \\[0.3cm]
    & \phantom{s.t.} \quad a_k, b_k \in \mathbb{R}, \quad  \forall k; \\[0.3cm]
    & \phantom{s.t.} \quad l_{ij}^k \in \{0,1\}, \quad  \forall i, j, k; \\[0.3cm]
    & \phantom{s.t.} \quad z_{ij} \in \{0,1\}, \quad  \forall i, j.
    \label{eqn:framework}
\end{aligned}
\end{equation}

\section{Solution Method}
\label{Solution_Method}
\medskip

In this section, we develop a heuristic algorithm to efficiently solve the previous MINLP optimization.

\subsection{MINLP Optimal Algorithm}

Although the optimization framework \cref{eqn:framework} is an MINLP problem with nonlinear constraints, all constraints except for \cref{eqn:detcon} and \cref{eqn:no1cons} are in fact linear ones. Fortunately, even the quadratic constraints \cref{eqn:detcon} and \cref{eqn:no1cons} are bilinear constraints that only involve the product of disjoint pairs of variables. Gurobi is well-suited for addressing such constraints. It employs cutting planes and branching algorithm; see~\cite{mitchell2002branch} for more information. Leveraging this solver, \textbf{\cref{alg:opt}} finds an optimal solution to \cref{eqn:framework}, which results in a convex approximation of the PRS computed by \cref{alg:kde}.

\begin{algorithm} [!h]
\caption{MINLP Optimal Algorithm}
\label{alg:opt}
\begin{algorithmic} [1] 
    \Function{MINLPSolver}{$\alpha$, $\boldsymbol{g}$, $\boldsymbol{w}$} 
        \State $\bar{i}$, $\bar{j}$ = $\arg \max (\boldsymbol{w})$
        \State Determine $x_{\Bar{i}}$, $y_{\Bar{j}}$ according to $\bar{i}$, $\bar{j}$ and $\boldsymbol{g}$
        \State Formulate \cref{eqn:framework} for grid points $\boldsymbol{g}$ with weights $\boldsymbol{w}$ \Statex[1] given confidence level $\alpha$
        \State Implement cutting planes and branching algorithm \Statex[1] to solve \cref{eqn:framework} for $a_k$, $b_k$
        \State \Return $a_k$, $b_k$, $x_{\Bar{i}}$, $y_{\Bar{j}}$
    \EndFunction
    \Statex
    
    \Function{FindPolygon}{$\boldsymbol{g}$, $\boldsymbol{w}$, $\alpha$} 
        \State $a_k$, $b_k$, $x_{\Bar{i}}$, $y_{\Bar{j}}$ = \Call{MINLPSolver}{$\alpha$, $\boldsymbol{g}$, $\boldsymbol{w}$}
 
        \State \Return $a_k$, $b_k$, $x_{\Bar{i}}$, $y_{\Bar{j}}$
    \EndFunction
    \Statex
    
    \State \Call{FindPolygon}{$\boldsymbol{g}$, $\boldsymbol{w}$, $\alpha$}

\end{algorithmic}  
\end{algorithm}



\subsection{MINLP Heuristic Algorithm \label{sec:heuristic}}

However, \cref{alg:opt} is computationally expensive and scales poorly with a large number of grid points. To address this issue, we develop \textbf{\cref{alg:heuristic}} that can efficiently solve \cref{eqn:framework} while ensuring accuracy. \cref{alg:opt} just serves as a benchmark that provides the optimal solution to \cref{eqn:framework} which is used only for comparison purposes. As explained in \cref{fig:diagram_alg2}, this algorithm performs weighted sampling based on the KDE values to select representative grid points $\boldsymbol{g}'$ from all grid points $\boldsymbol{g}$. Unlike the original MINLP problem, which uses all grid points $\boldsymbol{g}$, a new MINLP problem can be formulated using representative grid points $\boldsymbol{g}'$. Then, the cutting planes and branching algorithm built in Gurobi can be applied to solve this newly formulated MINLP problem. As illustrated in \cref{fig:diggridsheu}, the optimal solution to the new MINLP problem is an  approximation to the solution of the original MINLP problem. By doing so, efficiency comes at the cost of optimality. Since the size of $\boldsymbol{g}'$ can be far less than $\boldsymbol{g}$, the new MINLP problem reduces the number of decision variables and constraints, greatly contributing to reducing computational time.

\begin{figure}[!h] 
\centering
    \includegraphics[width=.45\textwidth]{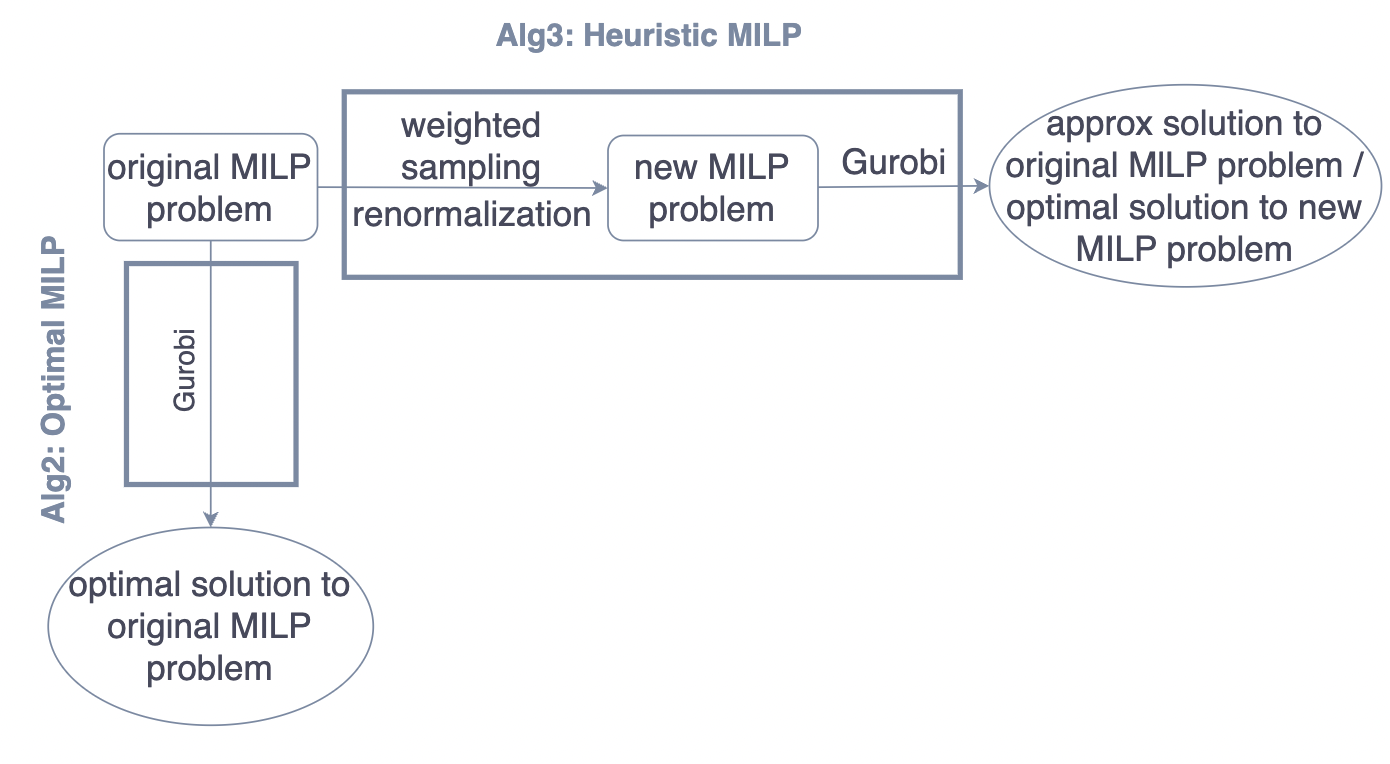}
\caption{Procedural difference between MINLP Heuristic and MINLP Optimal}
\label{fig:diagram_alg2}
\end{figure}

\begin{figure}[!h] 
\centering
    \includegraphics[width=.4\textwidth]{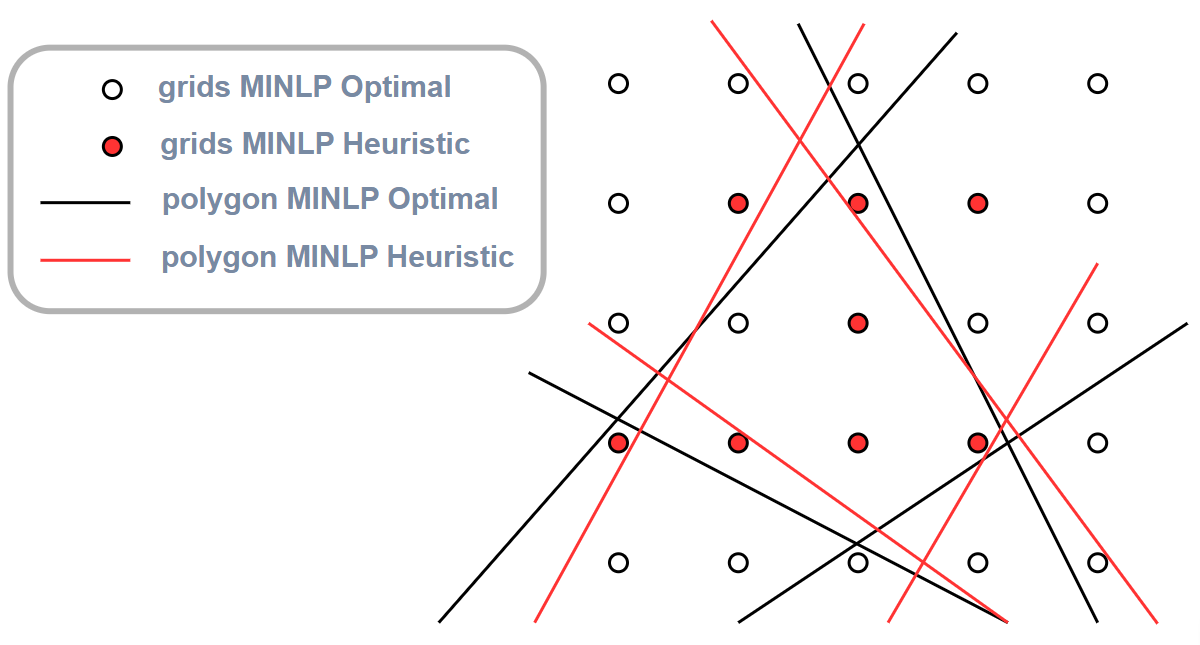}
\caption{Results difference between MINLP Heuristic and MINLP Optimal} 
\label{fig:diggridsheu}
\end{figure}

In the pseudo-code of \cref{alg:heuristic},  we use $N_s$ to represent the number of representative grid points $\boldsymbol{g}'$. From Lines 1 through 6, we use the AES algorithm proposed by \cite{efraimidis2006weighted} to randomly select $N_s$ representative grid points $\boldsymbol{g}'$ from all $N^2$ grid points $\boldsymbol{g}$ without replacement, according to their normalized weights $\boldsymbol{w}$. Recall that $\boldsymbol{g}$ is the output of \cref{alg:kde} and $\boldsymbol{w}:= [w_{ij}]$ is obtained through \cref{eqn:normw}. From Line 7 through 9, the original weights $\boldsymbol{w}'$ of representative grid points $\boldsymbol{g}'$ are re-normalized to $\hat{\boldsymbol{w}}'$, so that their sum is equal to one. The process of re-normalization is crucial to guarantee the accuracy of the solution obtained from \cref{alg:heuristic}, when compared with the solution after implementing \cref{alg:opt}.

\begin{algorithm} [!h]
\caption{MINLP Heuristic Algorithm}
\label{alg:heuristic}
\begin{algorithmic} [1] 
	\Function{WgtSamp}{$\boldsymbol{g}$, $\boldsymbol{w}$, $N_s$} 
    	\State $u_{ij} = \operatorname{random}(0, 1)$
        \State $k_{ij} = u_{ij}^{1/w_{ij}}$
    	\State Take first $N_s$ grids from $\boldsymbol{g}$  in descending order of $k_{ij}$ \Statex[1]  as representative grids $\boldsymbol{g}'$
    	\State Take the weights of those $N_s$ grids from $\boldsymbol{w}$  
    	\Statex[1] as representative weights $\boldsymbol{w}'$
    	\State \Return $\boldsymbol{g}'$, $\boldsymbol{w}'$
    \EndFunction 
    \Statex
    
    \Function{NormWgt}{$\boldsymbol{w}'$, $N_s$}
        \State $ \hat{\boldsymbol{w}}' = \boldsymbol{w}' + \operatorname{ones}(N_s) \cdot ( 1 - \operatorname{sum}(\boldsymbol{w}') ) / N_s $
        \State \Return $\hat{\boldsymbol{w}}'$
    \EndFunction
    \Statex
    
    \Function{MINLPSolver}{$\boldsymbol{g}'$, $\hat{\boldsymbol{w}}'$, $\alpha$, $\boldsymbol{g}$} 
        \State $\bar{i}$, $\bar{j} = \arg \max (\hat{\boldsymbol{w}}')$
        \State Determine $x_{\Bar{i}}$, $y_{\Bar{j}}$ according to $\bar{i}$, $\bar{j}$ and $\boldsymbol{g}$
        \State Formulate \cref{eqn:framework} for grid points $\boldsymbol{g}'$ with weights $\hat{\boldsymbol{w}}'$ 
        \Statex[1] given confidence level $\alpha$
        \State Implement cutting planes and branching algorithm \Statex[1] to solve \cref{eqn:framework} for $a_k$, $b_k$
        \State \Return $a_k$, $b_k$, $x_{\Bar{i}}$, $y_{\Bar{j}}$
    \EndFunction
    \Statex
    
    \Function{FindPolygon}{$\boldsymbol{g}$, $\boldsymbol{w}$, $N_s$, $\alpha$} 
            \State $\boldsymbol{g}'$, $\boldsymbol{w}'$ = \Call{WgtSamp}{$\boldsymbol{g}$, $\boldsymbol{w}$, $N_s$}
            \State $\hat{\boldsymbol{w}}'$ = \Call{NormWgt}{$\boldsymbol{w}'$, $N_s$}
            \State $a_k$, $b_k$, $x_{\Bar{i}}$, $y_{\Bar{j}}$ = \Call{MINLPSolver}{$\boldsymbol{g}'$, $\hat{\boldsymbol{w}}'$, $\alpha$, $\boldsymbol{g}$}
            \State Determine the polygon $S$ according to $a_k$, $b_k$, $x_{\Bar{i}}$, $y_{\Bar{j}}$
            \State \Return $S$
    \EndFunction
    \Statex
    
    \State \Call{FindPolygon}{$\boldsymbol{g}$, $\boldsymbol{w}$, $N_s$, $\alpha$}

\end{algorithmic}  
\end{algorithm}

\section{Main Results}
\label{Main_Results}
\medskip

In this section, we conduct comprehensive case studies to compare the performance of the MINLP Optimal algorithm (\cref{alg:opt}), and the MINLP Heuristic algorithm (\cref{alg:heuristic}), with the Bounding Box algorithm introduced in \cite{ericson2004real}, as well as the impact of different parameters on the performance of the algorithms. We also discuss the robustness of the MINLP Heuristic algorithm. The tests were implemented in Python 3.9 and on an Intel(R) Core(TM) i9-12900KF, 3187 Mhz, 16 Core(s), 24 Logical Processor(s) Desktop with 64GB RAM.

Case studies are divided into two stages:

1) Solution stage: According to KDE values and the PRS obtained from a collection of $M$ data samples running \cref{alg:kde}, we formulate an MINLP problem. Then, MINLP Optimal and MINLP Heuristics are implemented respectively to find convex approximations of the PRS. As a comparison, the Bounding Box algorithm is applied to find a convex quadrilateral bounding the PRS;

2) Testing stage: Given the obtained convex polygon in the previous stage, we can generate a new collection of $N_{\text{test}}$ ($N_{\text{test}} \gg M$) data samples, and evaluate the ratio of the number of data samples that fall inside the polygon to the total number of data samples generated. As $N_{\text{test}}$ increases, the ratio will converge to the true probability of the system state lying inside the polygon.


\subsection{Cases Settings}

\subsubsection{Case I}

In this case, we consider a Dubins vehicle model moving on a plane. The speed of the vehicle obeys a truncated Gaussian distribution $v \sim \mathcal{N}(\mu_1, \sigma_1)$ with $\mu_1 = \SI{190}{km/h}$ and $\sigma_1 = \SI{5}{km/h}$, and the speed falls inside the interval $[165, 220]\,\unit{km/h}$. The heading angle of the vehicle obeys another truncated Gaussian distribution $\theta \sim \mathcal{N}(\mu_2, \sigma_2)$ with $\mu_2 = \SI{10}{degs}$ and $\sigma_2 = \SI{30}{degs}$, and the heading angle falls inside the interval $[-50, 70] \,\unit{degs}$. 



The position $\boldsymbol{x}_{k}:= \begin{bmatrix}
    x_{k}, y_{k}
\end{bmatrix}^{\top}$ of the vehicle at time $k$ can be modeled by a discrete-time dynamic system
\begin{equation*}
\begin{bmatrix}
    x_k \\
    y_k
\end{bmatrix} = 
\begin{bmatrix}
    x_{k-1} \\
    y_{k-1}
\end{bmatrix} + 
\begin{bmatrix}
    \cos{\theta} \\
    \sin{\theta}
\end{bmatrix} v \Delta t,
\end{equation*}
where $\boldsymbol{x}_{k-1}:= \begin{bmatrix}
    x_{k-1}, y_{k-1}
\end{bmatrix}^{\top}$ is the position of the vehicle at time $k-1$,  and the speed $v$ and heading angle $\theta$ are assumed to be constant during a time interval $\Delta t = \SI{1}{s} $. Due to the uncertainties arising from $v$ and $\theta$ of the vehicle, the position $\boldsymbol{x}_{k}$ is a random vector that obeys a non-Gaussian distribution, which we call a fan-shaped distribution. The shadowed region in \cref{fig:area} indicates the reachable set of the position of the vehicle at a time point. 
\begin{figure}[H] 
\centering
\includegraphics[width=.25\textwidth]{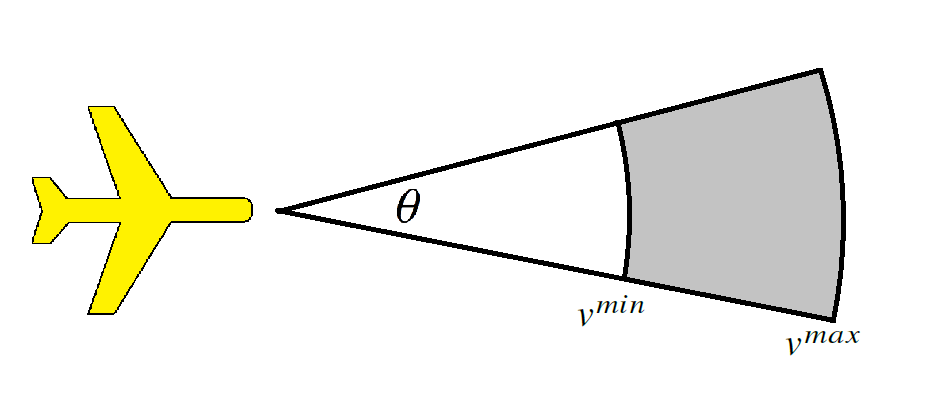}
\caption{Reachable set of the position of a vehicle at a time point}
\label{fig:area}
\end{figure}




\subsubsection{Case II}

In this case, we display a scatter plot of the possible positions (data samples) $[x,y]^{\top}$ of a vehicle on the plane at a time point, which is generated by the marginal histograms for $x$ and $y$ respectively. As shown in \cref{fig:fighist}, the position of the vehicle at that time point obeys a bimodal distribution which is non-Gaussian.
\begin{figure}[h] 
\centering
\includegraphics[width=.35\textwidth]{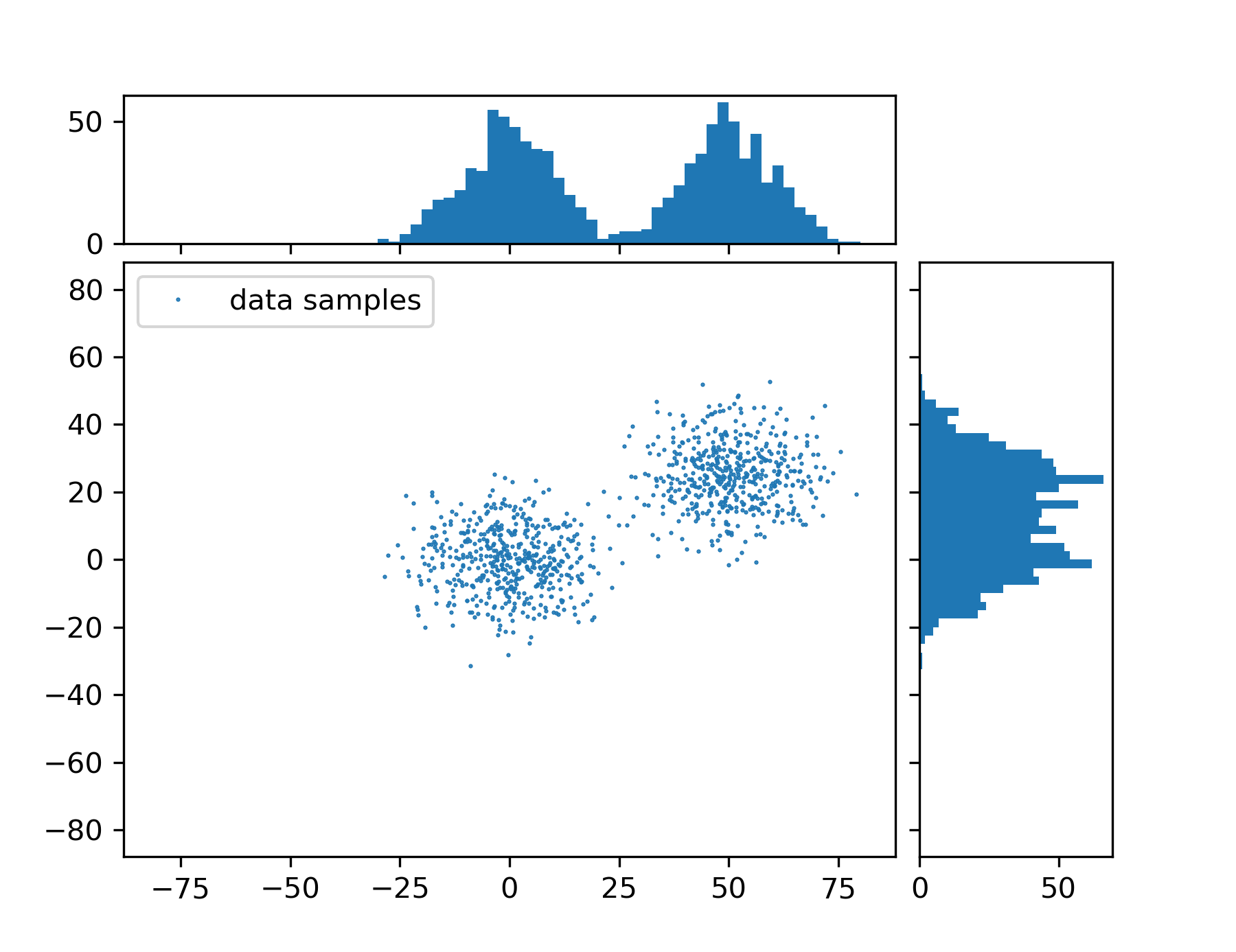}
\caption{Joint bimodal distribution generated by two marginal histograms}
\label{fig:fighist}
\end{figure}

In the following, the bandwidth matrix $\mathbf{H}$ in \cref{kernel_gaussian} is set to $\big[\begin{smallmatrix}
  0.2 & 0\\
  0 & 0.2
\end{smallmatrix}\big]$, a common constant \num{e4} is chosen for the parameters $M_{ijk}^1, M_{ijk}^2$ in \cref{eqn:lab1} and \cref{eqn:lab2}, and the number of data samples is $M = 1000$. The setting below is chosen as a baseline: The number of sides is $n = 4$, the confidence level is $\alpha = 90\%$, the number of grid points used by MINLP Optimal is $N^2 = 20^2$, and the number of grid points used by MINLP Heuristic is $N_s = 70$ for Case I and $N_s = 60$ for Case II. In the following figures and tables, the baseline is marked with the symbol ``*".

\subsection{Different Numbers of Sides}
\label{case1sides}
In this part, we compare the performance of different algorithms for different numbers of polygon sides: $n = 3, 4, 5$. The other parameters are the same as in the baseline. 

\begin{figure*}[htbp]
    \centering
    \subfloat[Case I: $n = 3$ ]{
    \label{sub-fig-fantri}
    \includegraphics[width=0.35\textwidth]{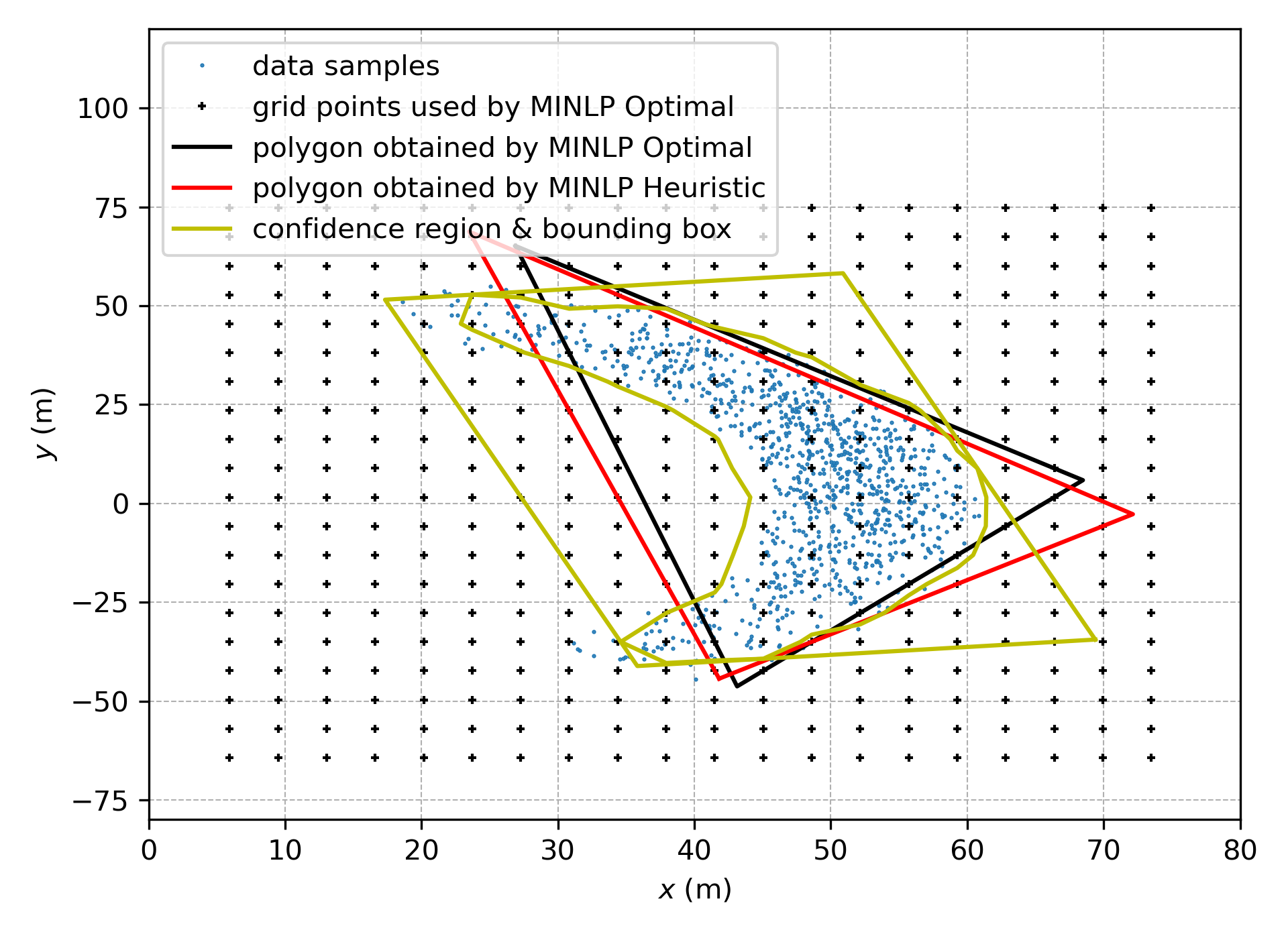}
    } \hspace{-0.2cm}
    \subfloat[Case II: $n = 3$ ]{
    \label{sub-fig-bitri}
    \includegraphics[width=0.35\textwidth]{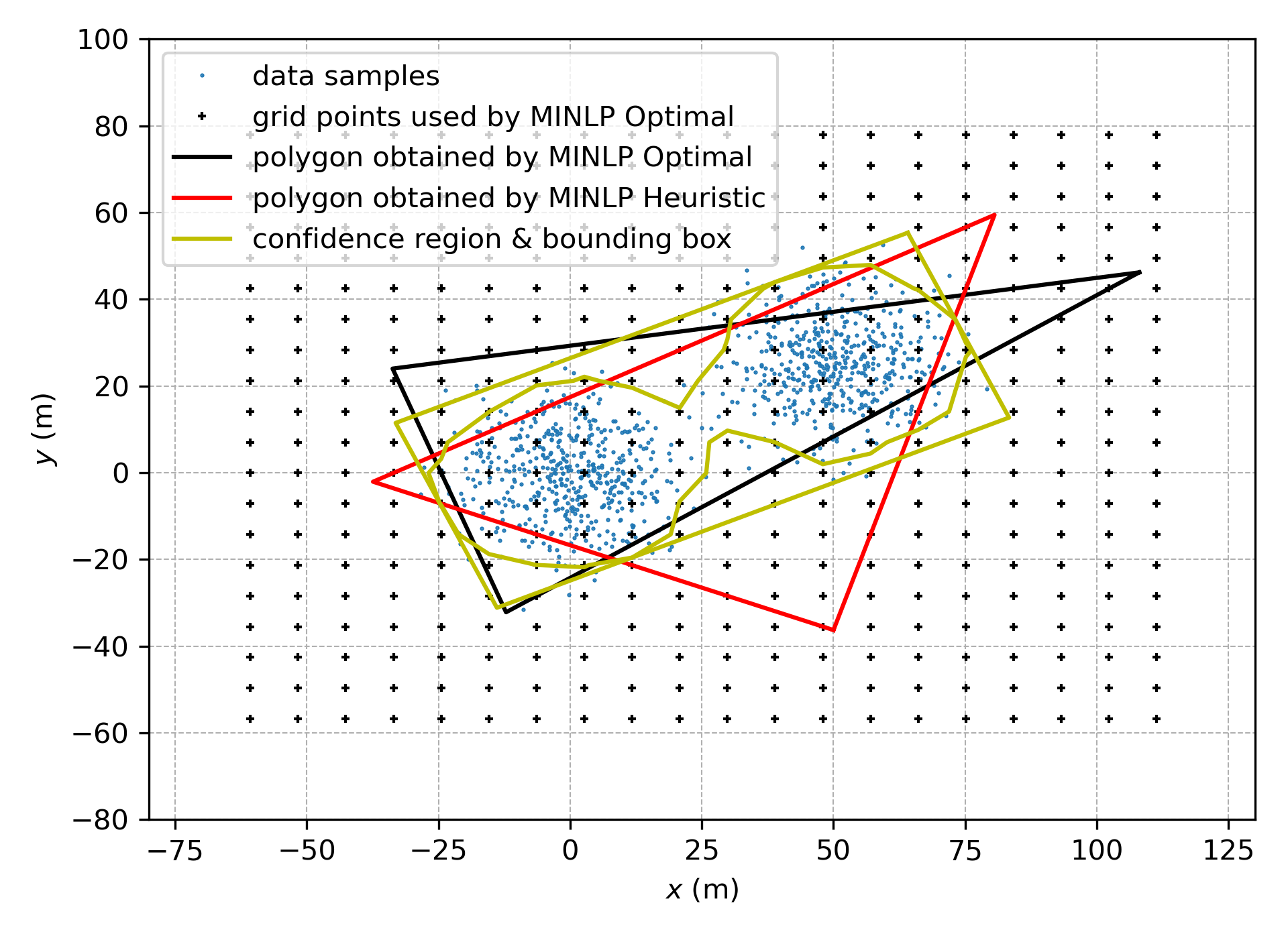}
    } \hspace{-0.2cm}
    \subfloat[Case I: $n = 4$ *]{
    \label{sub-fig-fanquad}
    \includegraphics[width=0.35\textwidth]{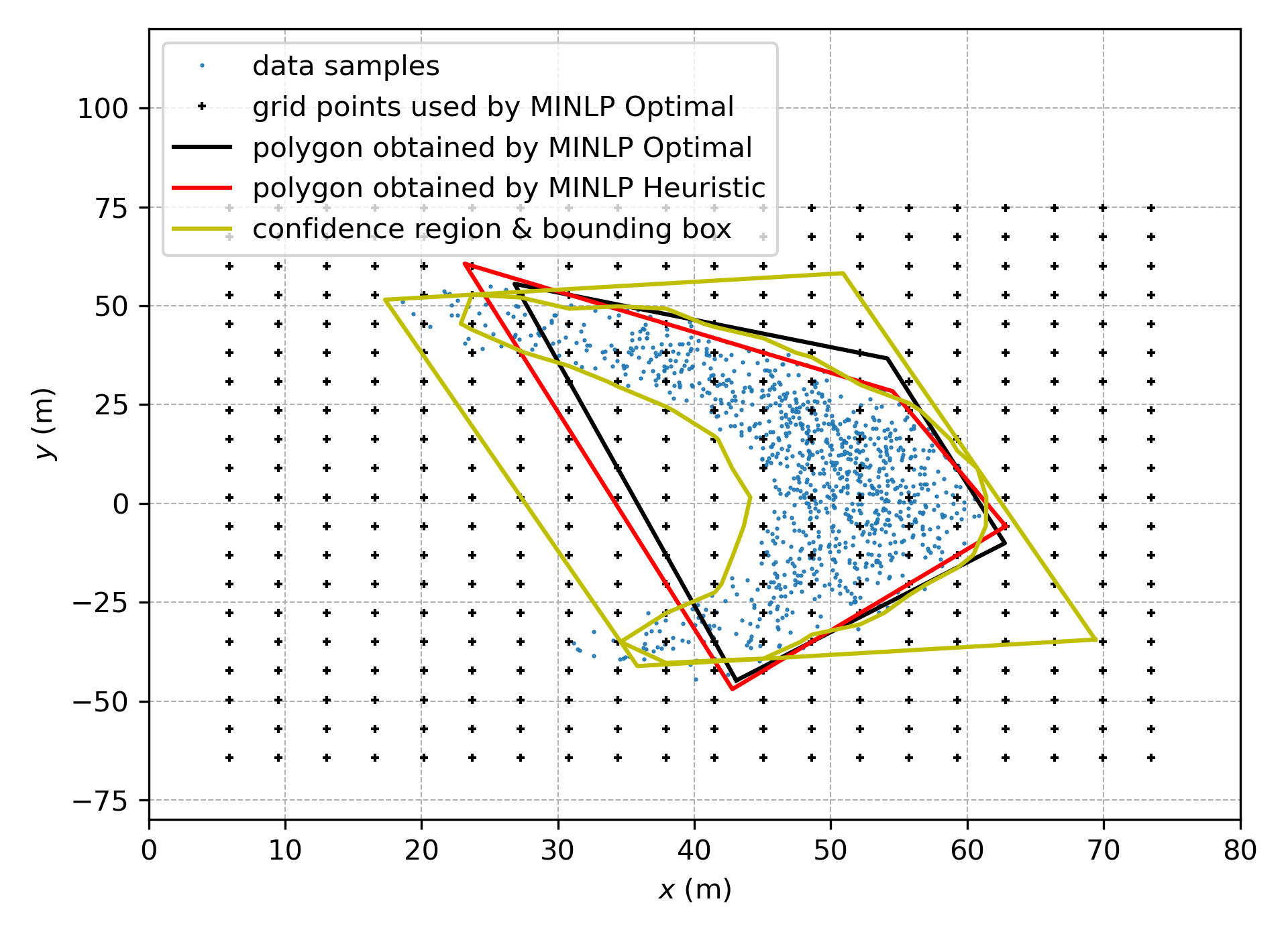}
    } \hspace{-0.2cm}
    \subfloat[Case II: $n = 4$ *]{
    \label{sub-fig-biquad}
    \includegraphics[width=0.35\textwidth]{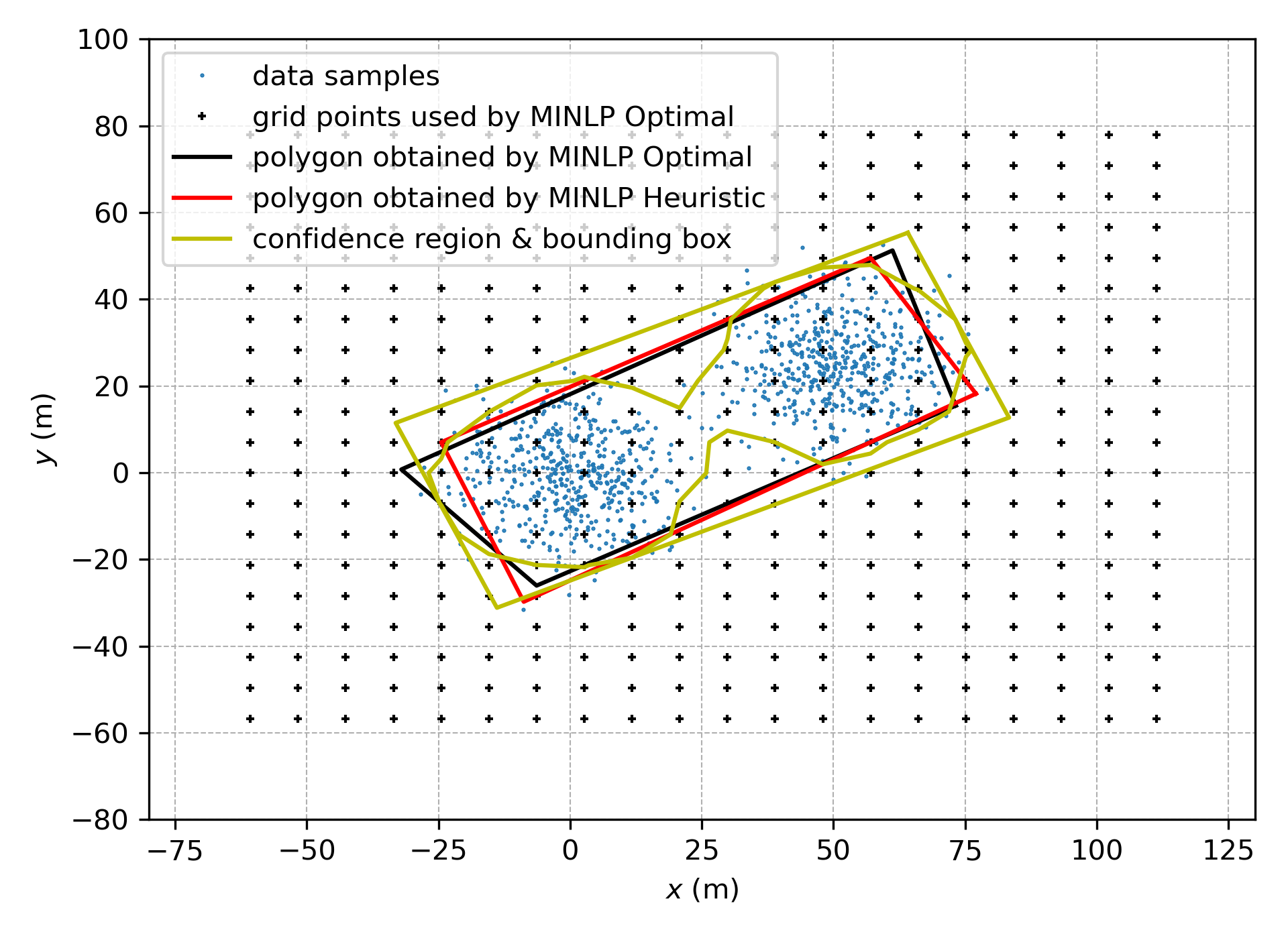}
    } \hspace{-0.2cm}
    \subfloat[Case I: $n = 5$]{
    \label{sub-fig-fanpentagon}
    \includegraphics[width=0.35\textwidth]{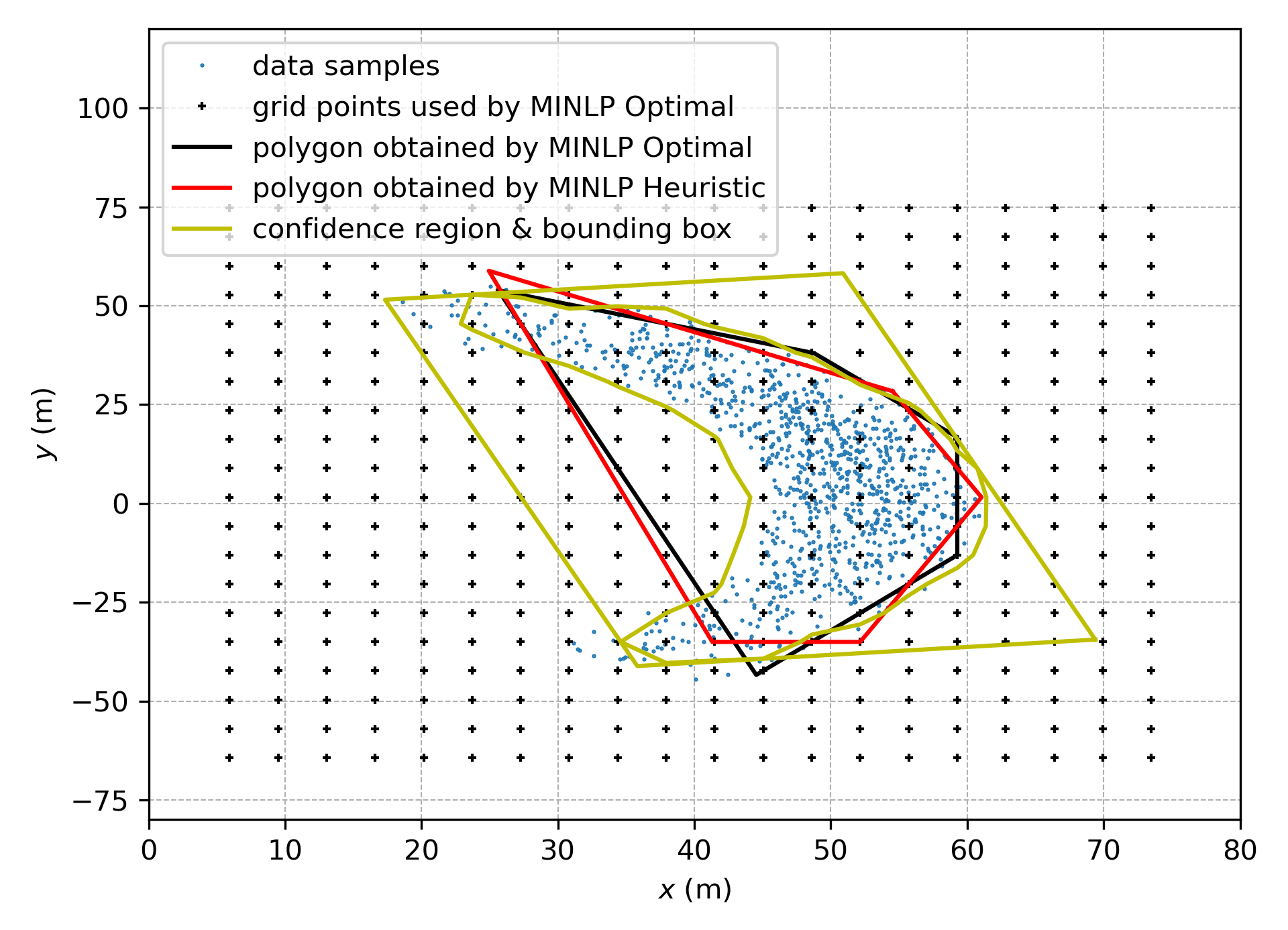}
    } \hspace{-0.2cm}
    \subfloat[Case II: $n = 5$]{
    \label{sub-fig-bipentagon}
    \includegraphics[width=0.35\textwidth]{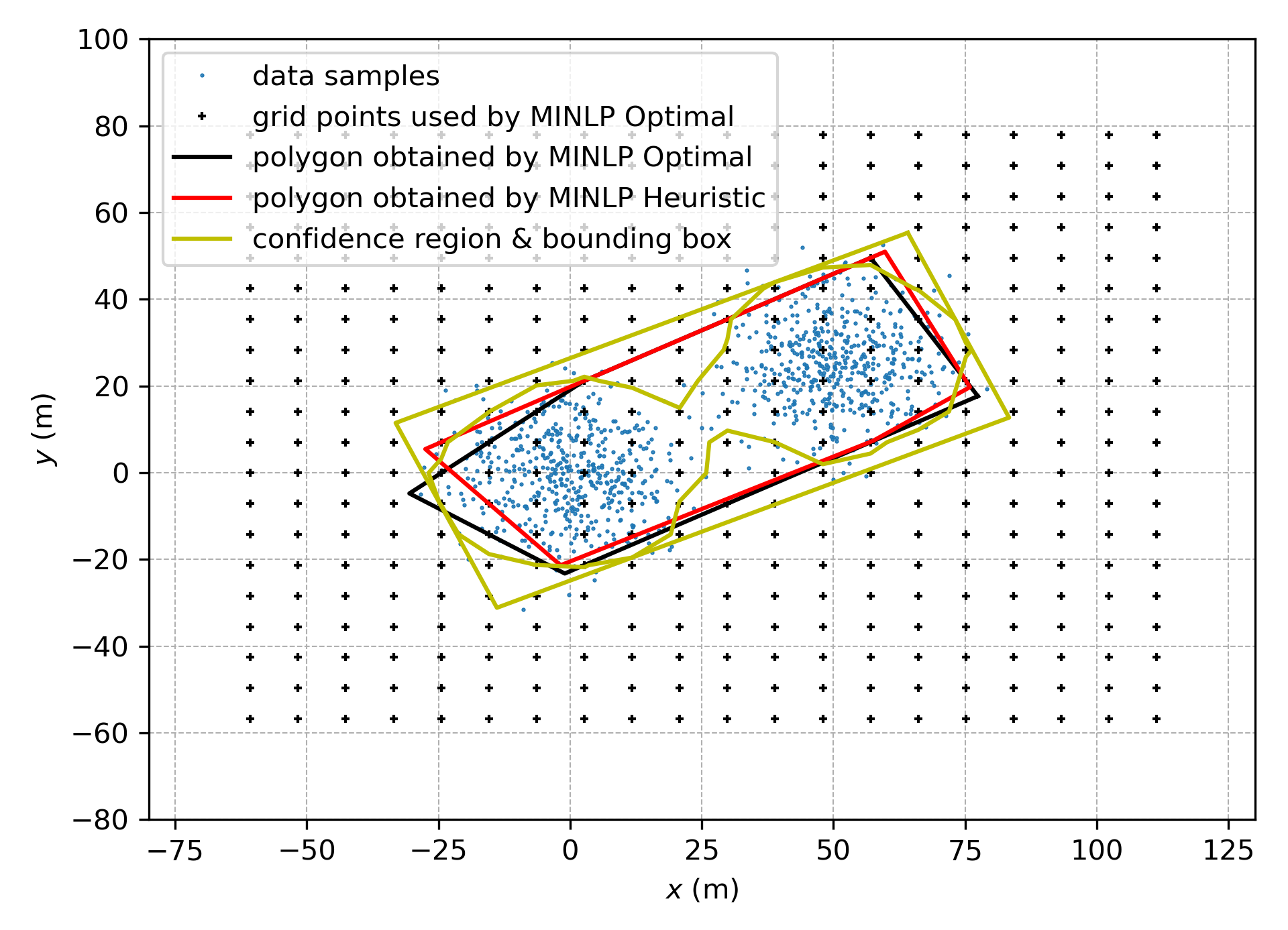}
    }
    \caption{Comparisons of different numbers of sides $n$ (For Bounding Box $ n \equiv 4$)}
    \label{fig:fancpgon}  %
\end{figure*} 

\begin{table*}[htbp]
\centering
\caption{Comparisons of different numbers of sides $n$ (For Bounding Box $ n\equiv 4$)}
\label{tab:fancpgon}
\begin{tabular}{llcccccc}
\toprule
\multicolumn{1}{c}{ \multirow{2}{*}{\# \textbf{Sides $n$}} } & \multicolumn{1}{c}{ \multirow{2}{*}{\textbf{Algorithm}} } & \multicolumn{3}{c}{\textbf{Case I}}  & \multicolumn{3}{c}{\textbf{Case II}}  \\ \cmidrule(lr){3-5} \cmidrule(lr){6-8}
{} & {} & \textbf{Ratio} & \textbf{Area (\unit{m^2})} & \textbf{Time (\unit{s})} & \textbf{Ratio} & \textbf{Area (\unit{m^2})} &\textbf{Time (\unit{s})} \\ \midrule
             & MINLP Optimal   & 90.1\%         & 1835.9              & 27.5   & 89.1\%         & 4220.7              & 13.7                   \\
3 (Triangle)  & MINLP Heuristic & 90.2\%         & 1911.9              & 0.533     & 90.1\%         & 4706.6              & 0.209                   \\
             & Bounding Box    & N/A         & N/A              & N/A      & N/A         & N/A              & N/A                \\ \midrule
             & MINLP Optimal   & 91.5\%         & 1827.0              & 70.5     & 91.4\%         & 3570.5              & 18.6                        \\
4 (Quadrilateral) *            & MINLP Heuristic & 90.7\%         & 1807.9              & 0.801    & 90.6\%         & 3457.1              & 0.361                  \\
             & Bounding Box    & 97.9\%         & 3235.1              & 0.216    & 97.7\%         & 4987.2              & 0.189                     \\ \midrule
             & MINLP Optimal   & 89.7\%         & 1667.9              & 316.3      & 91.3\%         & 3511.6              & 27.1                     \\
5 (Pentagon)           & MINLP Heuristic & 90.2\%         & 1721.9              & 1.145     & 90.5\%         & 3401.7              & 0.525                \\
             & Bounding Box    & N/A        & N/A              & N/A     & N/A        & N/A             & N/A                   \\ \bottomrule
\end{tabular}
\end{table*}

The results are shown in \cref{fig:fancpgon} and \cref{tab:fancpgon}. Note that changing of the number of sides $n$ is only applicable for MINLP Optimal and MINLP Heuristic. For MINLP Optimal or MINLP Heuristic, as $n$ increases from $3$ to $5$, the area of the polygon becomes smaller, while the ratio is just slightly different and always closer to the confidence level $\alpha$. This suggests that the results of  both algorithms become less conservative while ensuring accuracy. However, the associated computational time increases as the number of decision variables and constraints in \cref{eqn:framework} increases with an increase on  $n$. To achieve a balance between optimality and computational efficiency, we set $n=4$.


When $n = 4$,  the area and ratio of either MINLP Optimal or MINLP Heuristic are much smaller than those of the Bounding Box algorithm, which suggests that the polygons obtained by both MINLP Optimal and MINLP Heuristic algorithms are more tight and accurate than the Bounding Box. This clearly benefits real motion planning. When considering collision avoidance between the vehicle and an obstacle, a less conservative convex approximation of the PRS allows a larger feasible search area which may provide a better-planned trajectory. Plus, the computational time of MINLP Heuristic is far less than MINLP Optimal  (\SI{0.801}{s} $<$ \SI{70.5}{s} for Case I and \SI{0.361}{s} $<$ \SI{18.6}{s} for Case II), while their results of area and ratio are still similar. This demonstrates the computational efficiency of MINLP Heuristic while ensuring accuracy, making online implementation feasible.


As stated in \cref{Solution_Method}, the polygon obtained by MINLP Optimal is optimal in the sense that it has a minimum area without violating constraints. However, for some cases in \cref{tab:fancpgon}, the area of MINLP Heuristic (e.g. $\SI{1807.9}{m^2}$) may be slightly less than MINLP Optimal (e.g. $\SI{1827.0}{m^2}$). The approximation error comes from the re-normalization with the newly selected $N_s$ grid points $\boldsymbol{g}'$. The polygon obtained by MINLP Heuristic is an optimal solution to a new MINLP problem based on those selected $N_s$ grid points, which is also an approximation to the original MINLP problem involving $N^2$ grid points.
Since the original and new MINLP problems have different grid settings, there is a chance that the polygon area of MINLP Heuristic is less than MINLP Optimal. However, those errors are relatively small according to our experiments.

\subsection{Different Numbers of Grid Points}
Here, we compare the performance of the algorithms when considering different numbers of grid points: $N^2 = 10^2, 20^2, 30^2, 40^2$. For MINLP Heuristic, $N_s =  20, 70, 160, 280$ grid points are randomly selected for Case I and $N_s = 15, 60, 135, 240$ grid points for Case II, respectively. The other parameters are the same as in the baseline. 

\begin{figure*}[htbp]
    \centering
    \subfloat[Case I:  $N^2 = 10^2$]{
    \label{sub-fig-grids10}
    \includegraphics[width=0.35\textwidth]{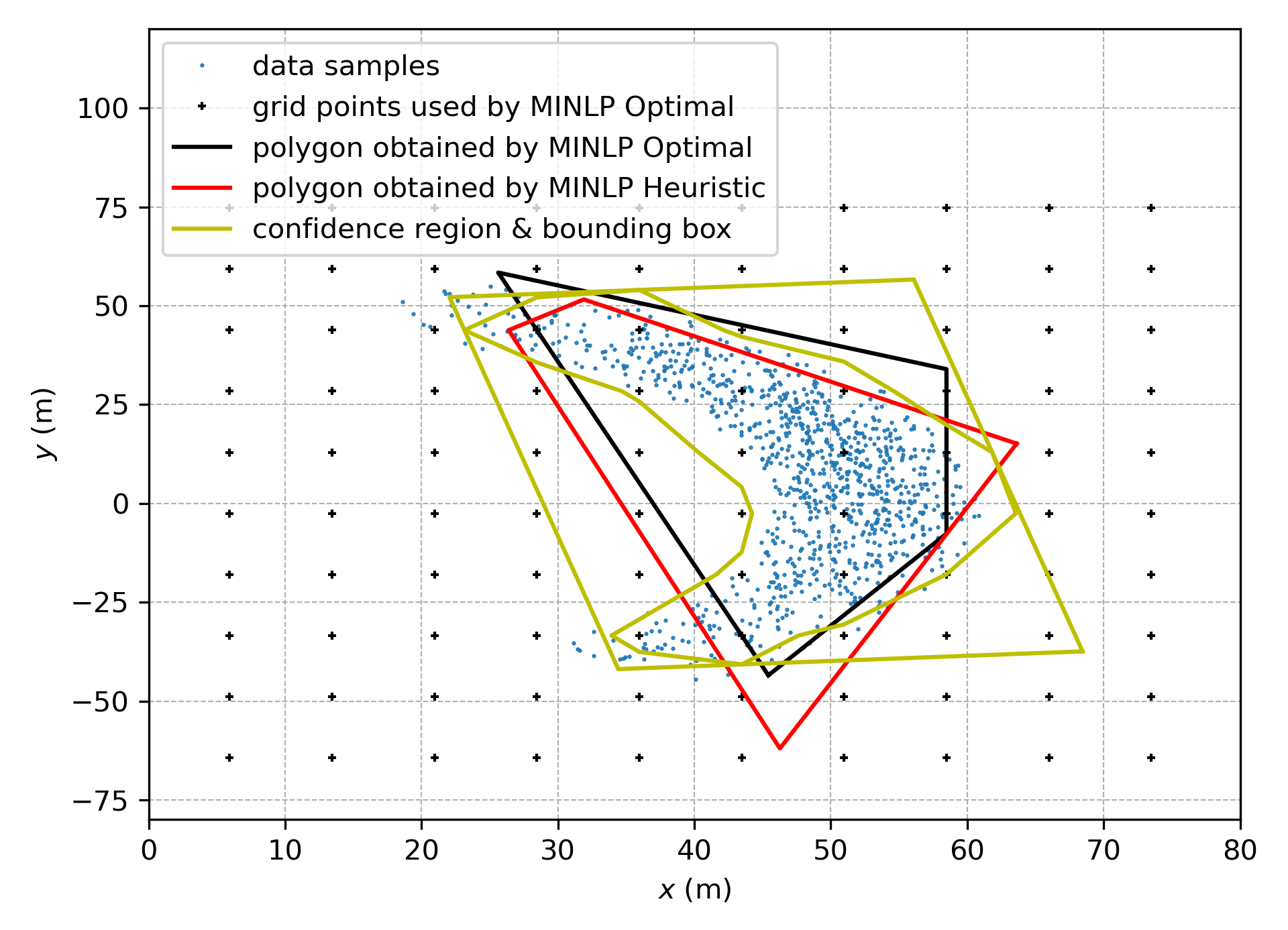}
    } \hspace{-0.2cm}
    \subfloat[Case II:  $N^2 = 10^2$]{
    \label{sub-fig-grids10bi}
    \includegraphics[width=0.35\textwidth]{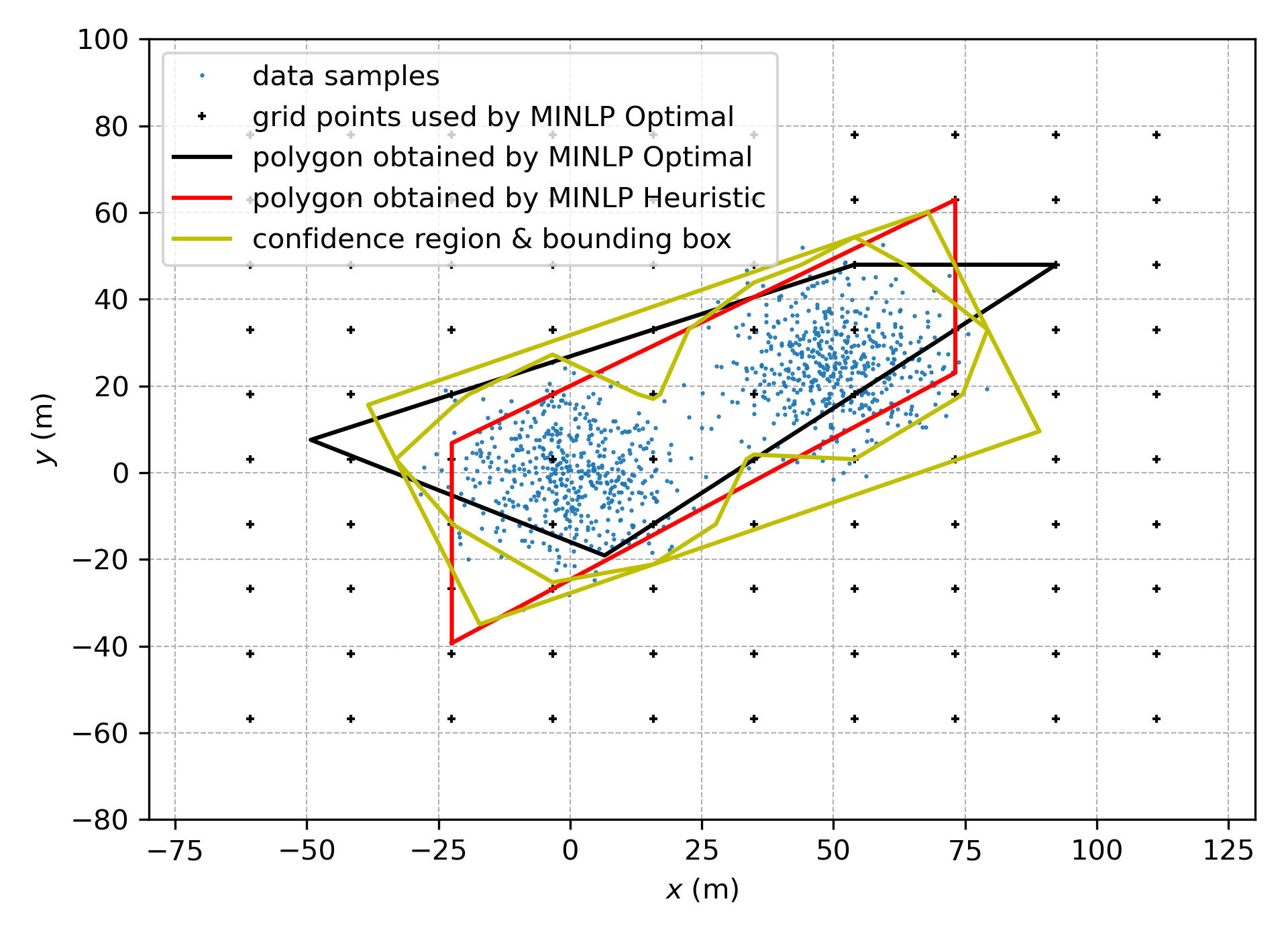}
    } \hspace{-0.2cm}
    \subfloat[Case I:  $N^2 = 20^2$ *]{
    \label{sub-fig-grids20}
    \includegraphics[width=0.35\textwidth]{figs/fan2020.png}
    } \hspace{-0.2cm}
    \subfloat[Case II:  $N^2 = 20^2$ *]{
    \label{sub-fig-grids20bi}
    \includegraphics[width=0.35\textwidth]{figs/fig2020.png}
    } \hspace{-0.2cm}
    \subfloat[Case I: \#  $N^2 = 30^2$]{
    \label{sub-fig-grids30}
    \includegraphics[width=0.35\textwidth]{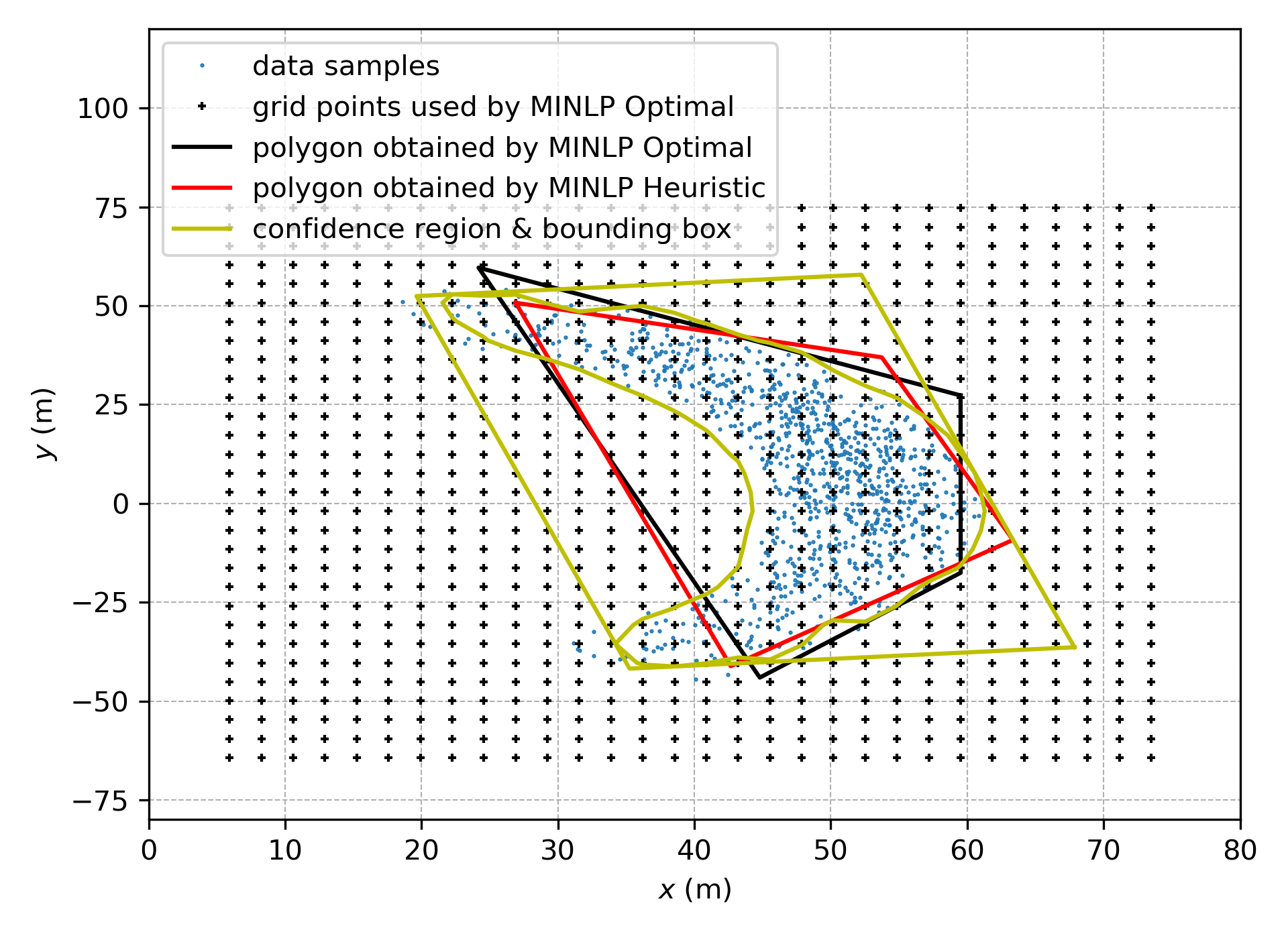}
    } \hspace{-0.2cm}%
    \subfloat[Case II: $N^2 = 30^2$]{
    \label{sub-fig-grids30bi}
    \includegraphics[width=0.35\textwidth]{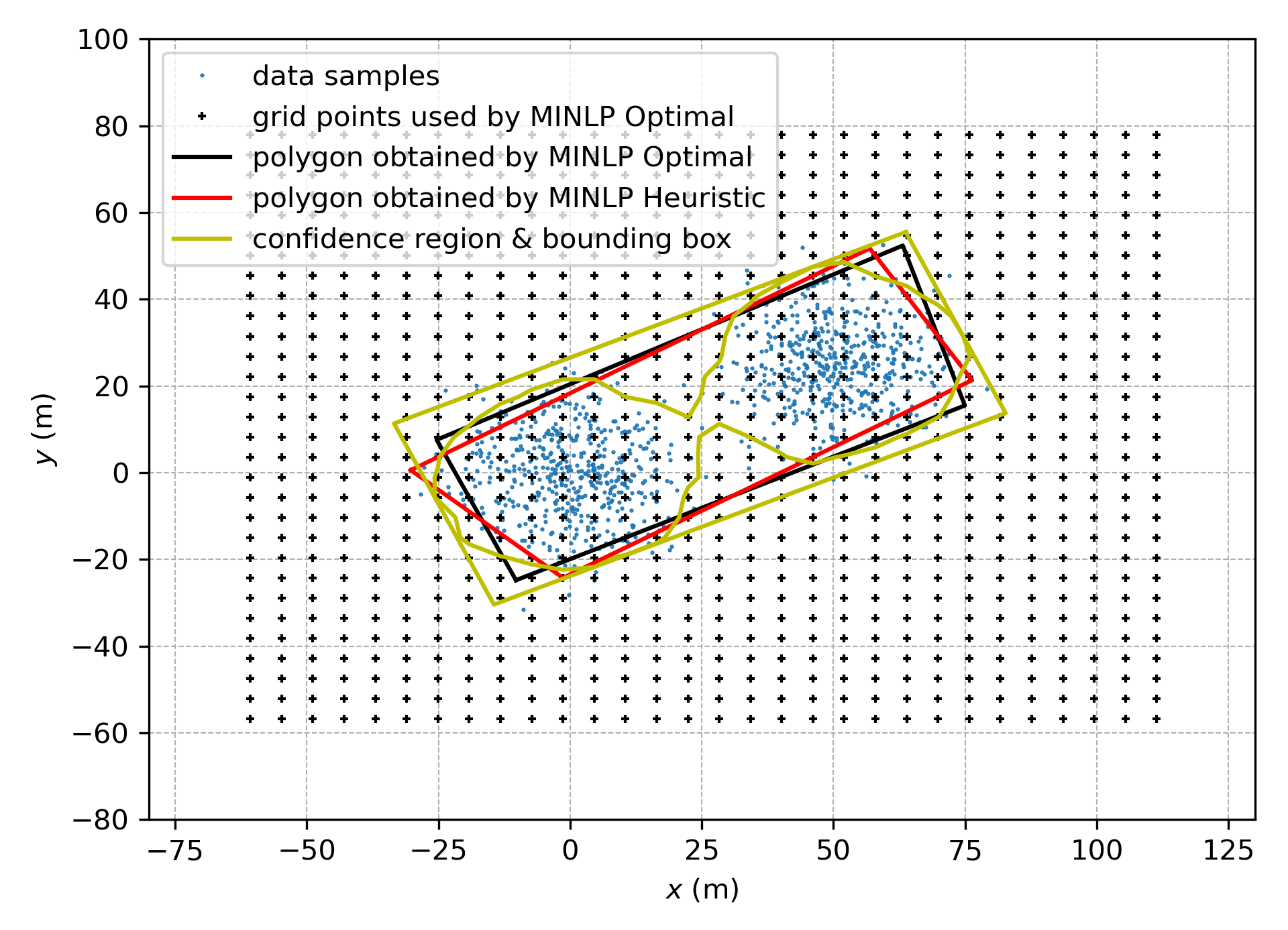}
    } \hspace{-0.2cm}%
    \subfloat[Case I: $N^2 = 40^2$]{
    \label{sub-fig-grids40}
    \includegraphics[width=0.35\textwidth]{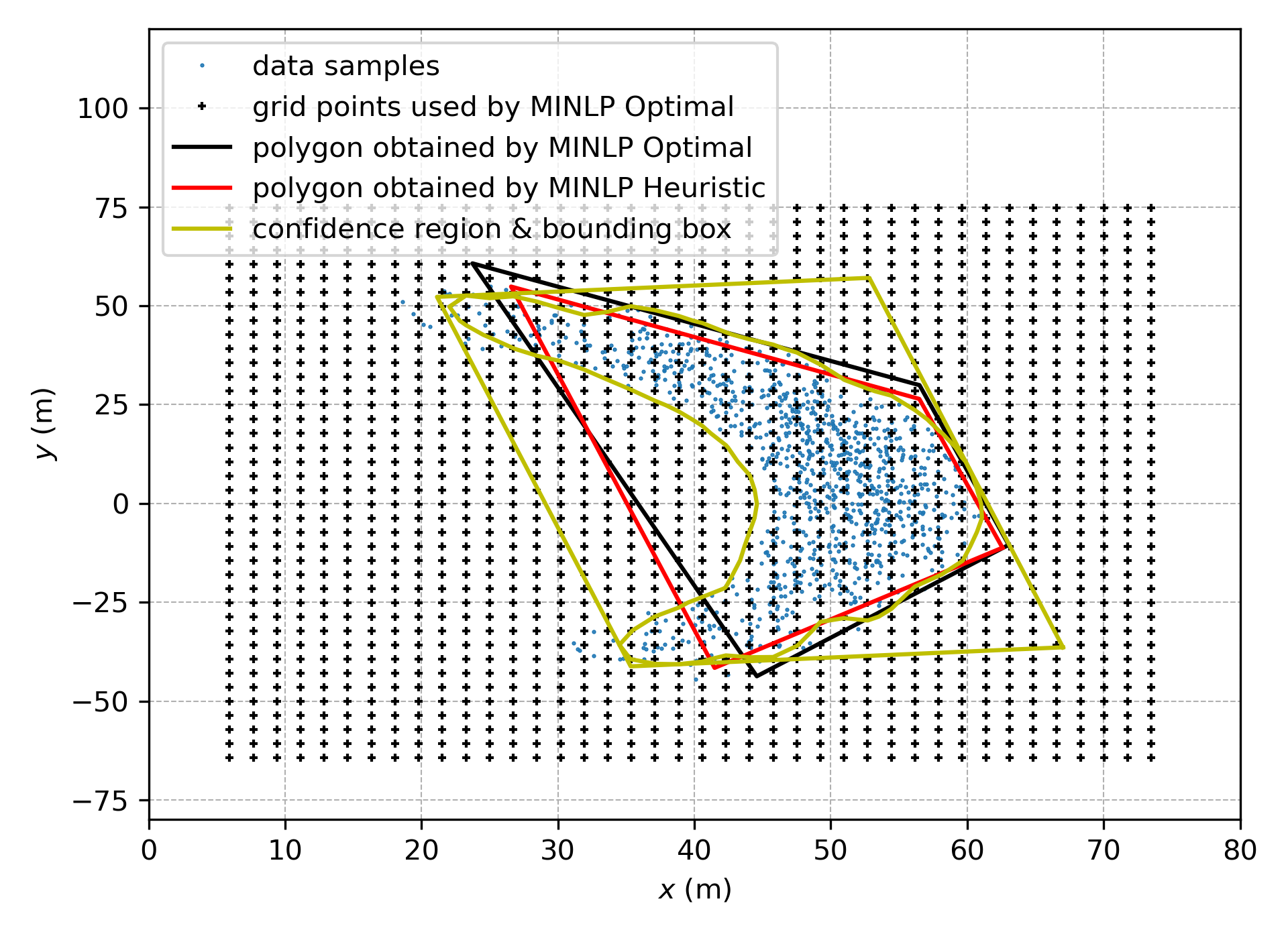}
    } \hspace{-0.2cm}%
    \subfloat[Case II: \# $N^2 = 40^2$]{
    \label{sub-fig-grids40bi}
    \includegraphics[width=0.35\textwidth]{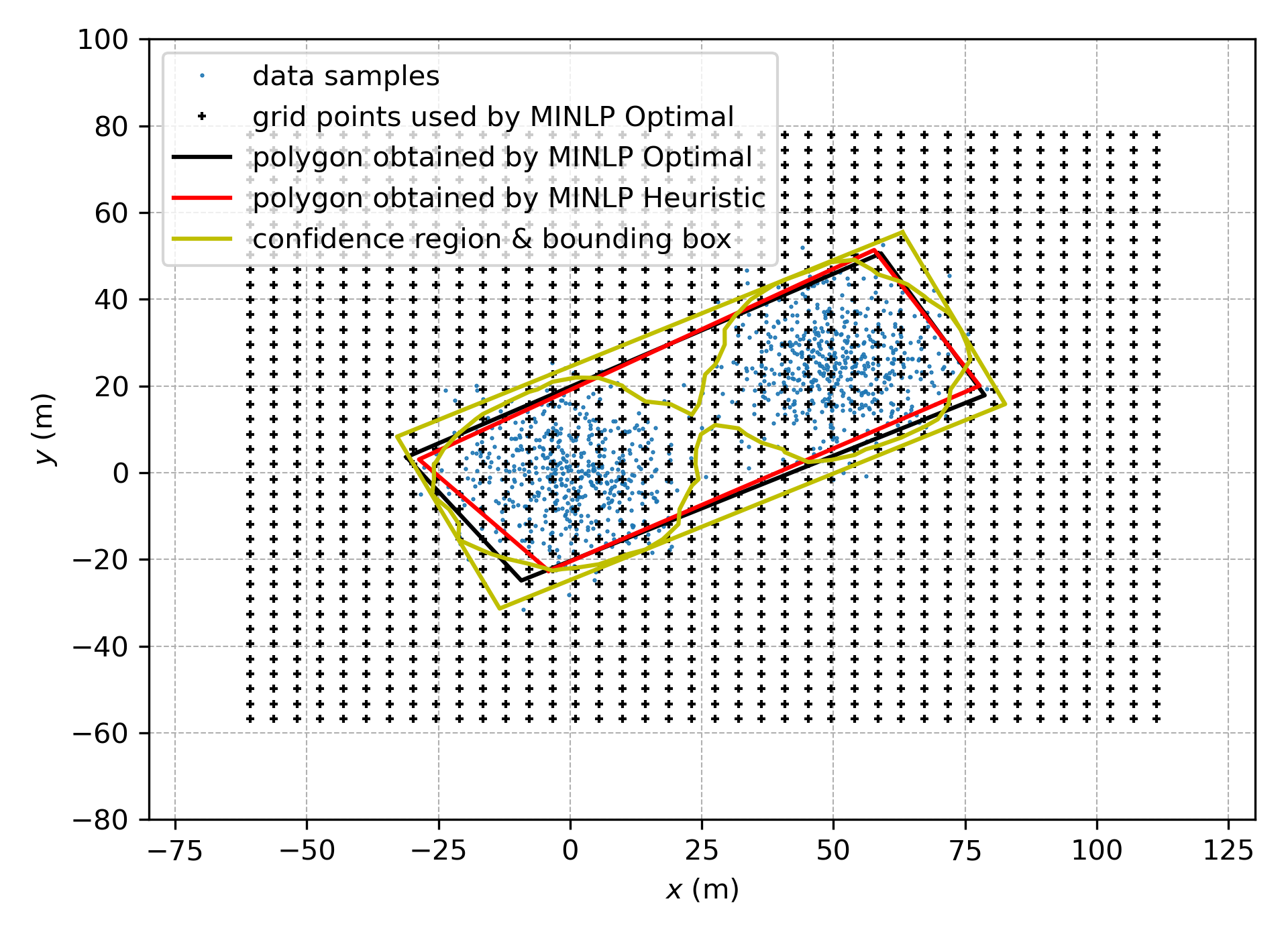}
    }
    \caption{Comparisons of different numbers of gird points $N^2$}
    \label{fig:fancpgrids}  %
\end{figure*} 

\begin{table*}[htbp]
\centering
\caption{Comparisons of different numbers of grid points}
\label{tab:fancpgrids}
\begin{tabular}{clcccccc}
\toprule
\multirow{2}{*}{\# \textbf{Grid Points $N^2$}}  & \multicolumn{1}{c}{ \multirow{2}{*}{\textbf{Algorithm}}  } & \multicolumn{3}{c}{\textbf{Case I}}  & \multicolumn{3}{c}{\textbf{Case II}}  \\ \cmidrule(lr){3-5} \cmidrule(lr){6-8}
{} & {} & \textbf{Ratio} & \textbf{Area (\unit{m^2})} & \textbf{Time (\unit{s})} & \textbf{Ratio} & \textbf{Area (\unit{m^2})} &\textbf{Time (\unit{s})} \\ \midrule
             & MINLP Optimal   & 87.2\%         & 1703.6              & 0.321         & 82.2\%         & 3784.7              & 0.082              \\
$10^2$          & MINLP Heuristic & 90.2\%         & 1908.7              & 0.064                    & 92.6\%         & 4110.5              & 0.015                  \\
             & Bounding Box    & 97.9\%         & 3257.2             & 0.209       & 99.6\%         & 6318.8              & 0.184                  \\ \midrule
             & MINLP Optimal   & 91.5\%         & 1872.0              & 70.5     & 91.4\%         & 3570.5              & 18.6                       \\
$20^2$ *           & MINLP Heuristic & 90.7\%         & 1807.9              & 0.801       & 90.6\%         & 3457.1              & 0.361                      \\
             & Bounding Box    & 97.9\%         & 3235.1              & 0.216     & 97.7\%         & 4987.2              & 0.189                  \\ \midrule
             & MINLP Optimal   & 91.4\%         & 1829.1              & 1150.2      & 92.0\%         & 3578.3              & 362.9               \\
$30^2$          & MINLP Heuristic & 90.4\%         & 1755.9              & 8.9     & 91.8\%         & 3443.1              & 6.9                 \\
             & Bounding Box    & 97.7\%         & 3157.7              & 0.225   & 97.6\%         & 4893.5              & 0.193                  \\ \midrule
             & MINLP Optimal   & 91.3\%         & 1827.0              & 2360.1    & 91.4\%         & 3554.8              & 1237.4                   \\
$40^2$          & MINLP Heuristic & 90.1\%         & 1721.1             & 125.5    & 90.9\%         & 3425.0              & 23.8                \\
             & Bounding Box    & 97.1\%         & 3030.2             & 0.239   & 97.3\%         & 4722.6              & 0.211                         \\ \bottomrule
\end{tabular}
\end{table*}

From \cref{fig:fancpgrids} and \cref{tab:fancpgrids}, we can draw some conclusions as follows. The results obtained by the implementation of all three algorithms given in \cref{sub-fig-grids10} and \cref{sub-fig-grids10bi} are visually very different from the other subfigures in \cref{fig:fancpgrids}. This is because when the number of grid points is too small ($ N^2 =  10^2$), KDE can not approximate PDF well. As indicated in \cref{tab:fancpgrids}, for every algorithm, as $N^2$ ranges from $20^2$ to $40^2$, the performance in ratio and area slightly improves. However, as $N^2$ increases, the increase in the number of decision variables and constraints leads to a significant increase in computational time. As shown in \cref{tab:fancpgrids}, the computational time of MINLP Heuristic goes from \SI{0.801}{s} to \SI{125.5}{s} for Case I and from \SI{0.361}{s} to \SI{23.8}{s} for Case II. Thus, there is a slight improvement in accuracy and optimality, but at huge cost of efficiency. To reach a compromise, the number of grid points can be chosen as $N^2 = 20^2$.

When $N^2$ is fixed to  $20^2$, the area and ratio of MINLP Optimal and MINLP Heuristic are much smaller than those of the Bounding Box algorithm, indicating MINLP Optimal and MINLP Heuristic significantly outperform Bounding Box in terms of optimality and accuracy. Further, for MINLP Optimal and MINLP Heuristic, their results of ratio and area are quite similar. However, the computational time of MINLP Heuristic greatly outperforms MINLP Optimal, i.e., \SI{0.801}{s} $<$ \SI{70.5}{s} for Case I and \SI{0.361}{s} $<$ \SI{18.6}{s} for Case II, demonstrating the efficiency of MINLP Heuristic. Therefore, MINLP Heuristic can guarantee near-optimality, accuracy, and efficiency simultaneously, which makes it very well suited for real-time implementation.

\subsection{Different Confidence Levels} 
In this part, we compare the performance of the algorithms relative to different confidence levels: $\alpha$ = $90\%$, $95\%$, and $99\%$. The other parameters are the same as in the baseline.

\begin{figure*}[htbp]
    \centering
    \subfloat[Case I: $\alpha = 90\%$ *]{
    \label{sub-fig-cl90}
    \includegraphics[width=0.35\textwidth]{figs/fan2020.png}
    } \hspace{-0.2cm}
    \subfloat[Case II: $\alpha = 90\%$ *]{
    \label{sub-fig-bicl90}
    \includegraphics[width=0.35\textwidth]{figs/fig2020.png}
    } \hspace{-0.2cm}
    \subfloat[Case I: $\alpha = 95\%$ ]{
    \label{sub-fig-cl95}
    \includegraphics[width=0.35\textwidth]{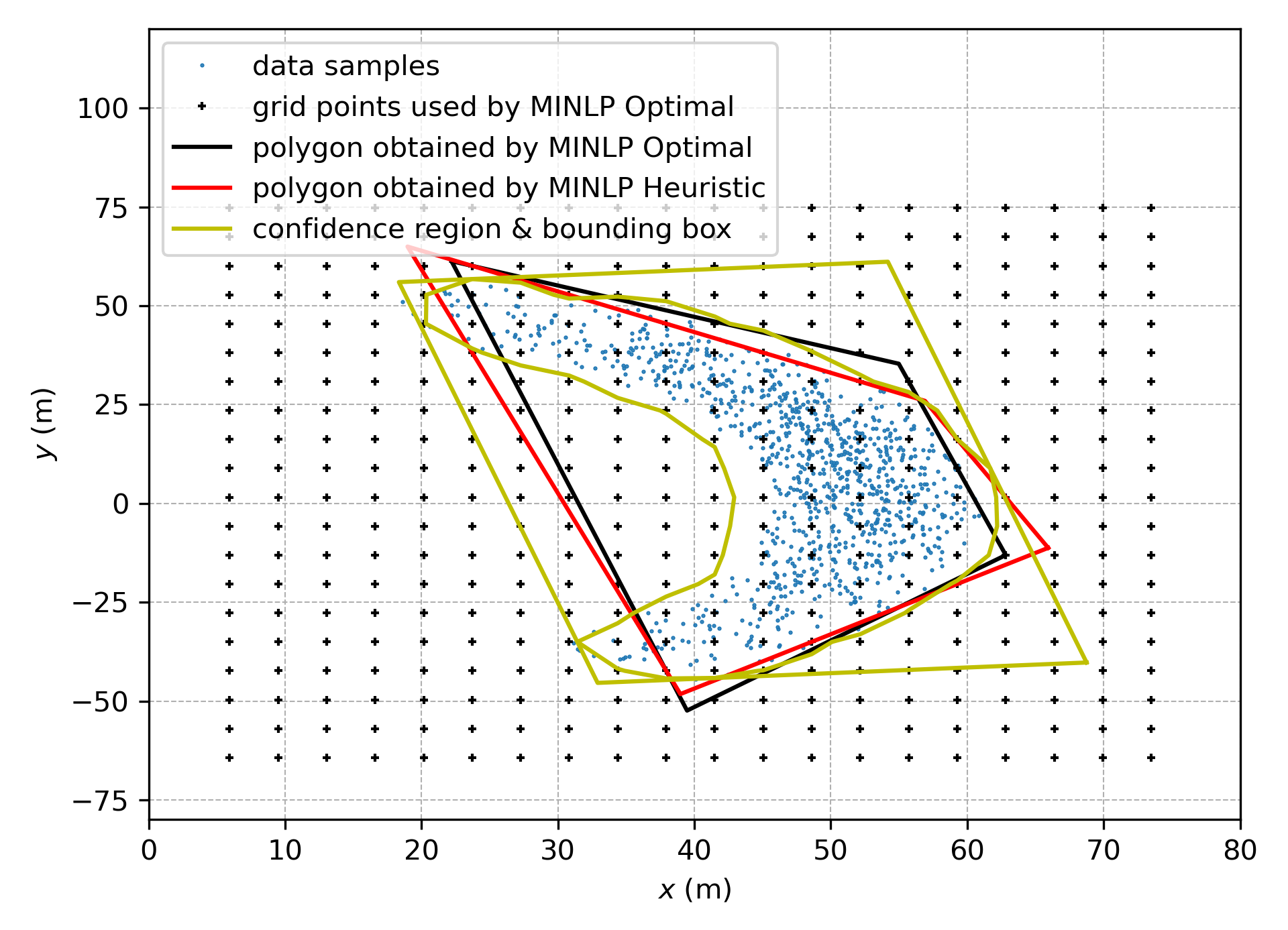}
    } \hspace{-0.2cm}
    \subfloat[Case II: $\alpha = 95\%$]{
    \label{sub-fig-bicl95}
    \includegraphics[width=0.35\textwidth]{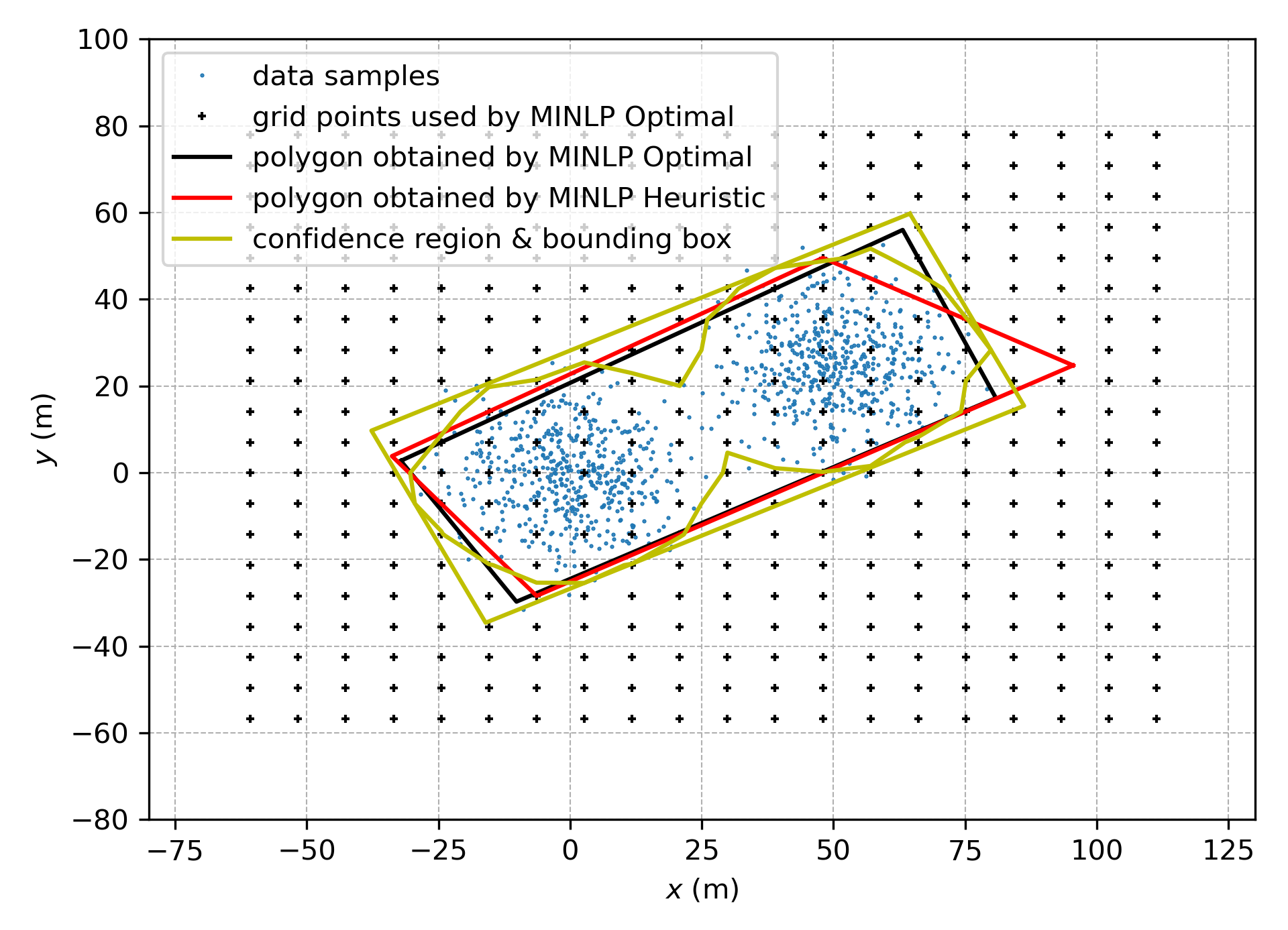}
    } \hspace{-0.2cm}
    \subfloat[Case I: $\alpha = 99\%$]{
    \label{sub-fig-cl100}
    \includegraphics[width=0.35\textwidth]{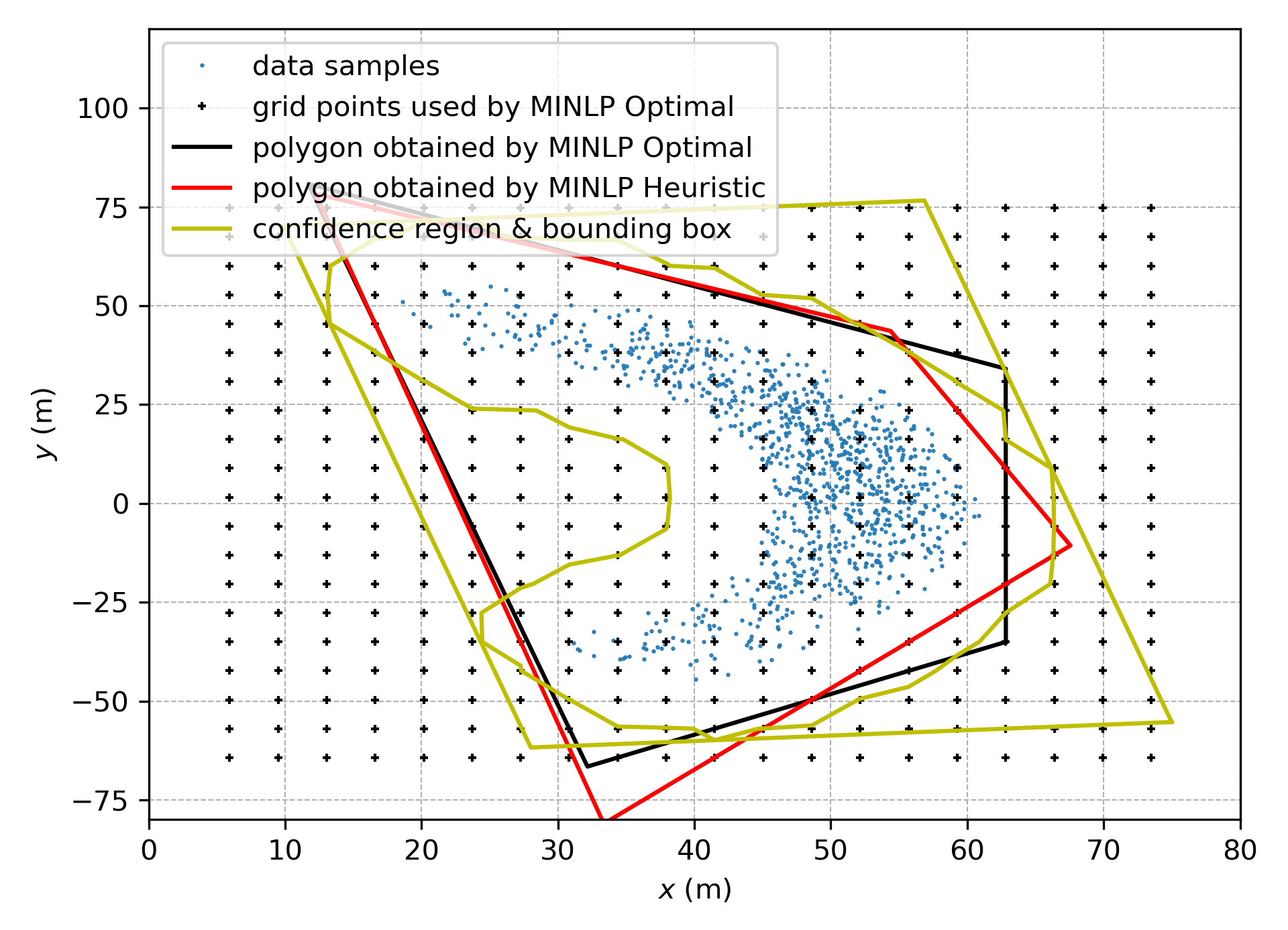}
    } \hspace{-0.2cm}
    \subfloat[Case II: $\alpha = 99\%$]{
    \label{sub-fig-bicl99}
    \includegraphics[width=0.35\textwidth]{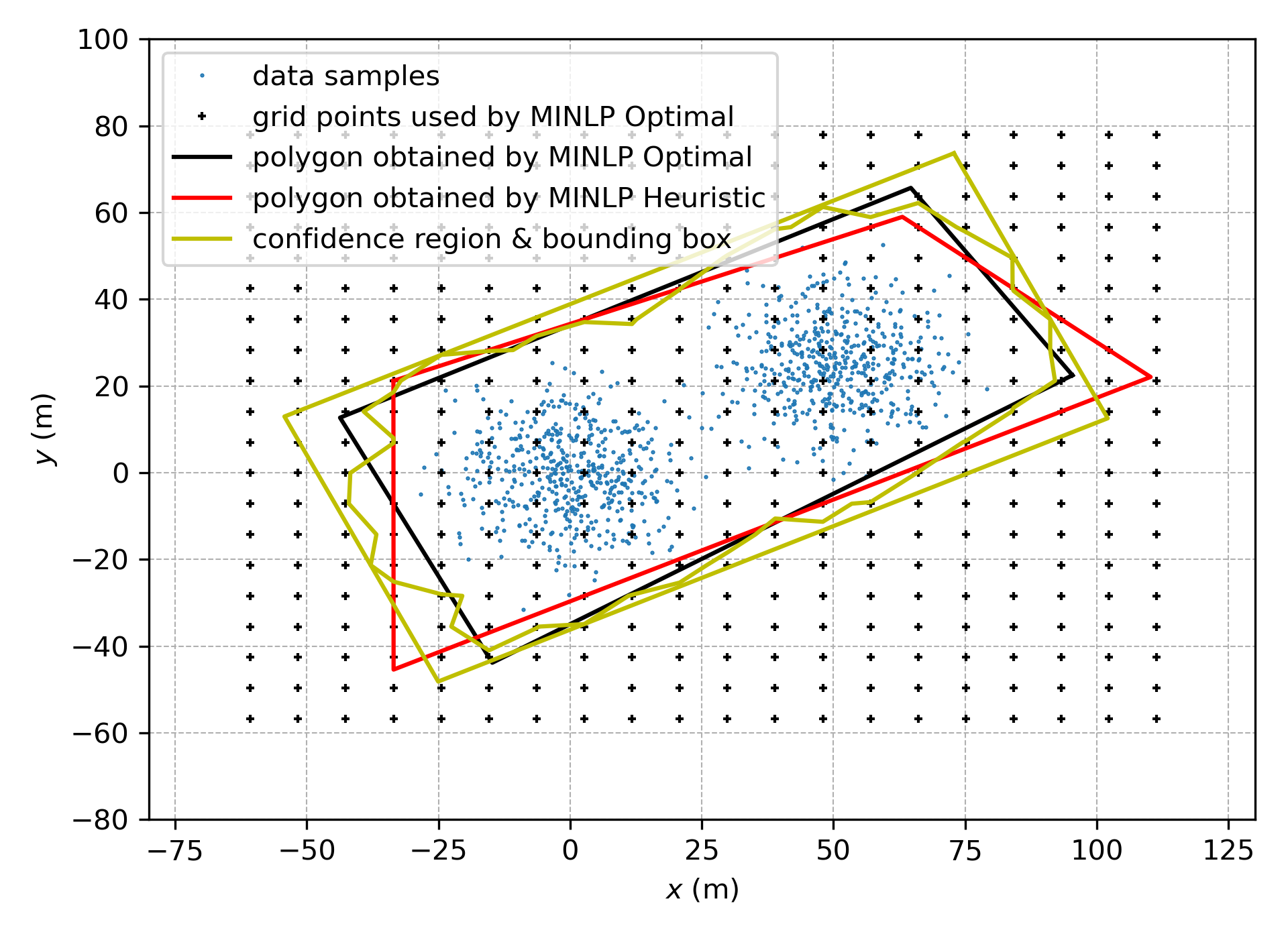}
    } 
    \caption{Comparisons of different confidence levels $\alpha$}
    \label{fig:fancpcl}  %
\end{figure*} 

\begin{table*}[htbp]
\centering
\caption{Comparisons of different confidence levels}
\label{tab:fancpcl}
\begin{tabular}{clcccccc}
\toprule
\multirow{2}{*}{\textbf{Confidence Level $\alpha$}} & \multicolumn{1}{c}{ \multirow{2}{*}{\textbf{Algorithm}} } & \multicolumn{3}{c}{\textbf{Case I}} & \multicolumn{3}{c}{\textbf{Case II}} \\ \cmidrule(lr){3-5} \cmidrule(lr){6-8}
{} & {} & \textbf{Ratio} & \textbf{Area (\unit{m^2})} & \textbf{Time (\unit{s})} & \textbf{Ratio} & \textbf{Area (\unit{m^2})} &\textbf{Time (\unit{s})} \\ \midrule
             & MINLP Optimal   & 91.5\%         & 1872.0              & 70.5  & 91.4\% & 3570.5 & 18.6                   \\
90\% *           & MINLP Heuristic & 90.7\%         & 1807.9          & 0.801  & 90.6\%  & 3457.1 & 0.361                   \\
             & Bounding Box    & 97.9\%         & 3235.1              & 0.216 & 97.7\% & 4987.2 & 0.189                  \\      \midrule
             & MINLP Optimal   & 95.7\%         & 2359.6              & 53.5    &96.1\% &4312.6 &8.8                  \\
95\%            & MINLP Heuristic & 96.1\%         & 2424.7          & 0.417  &96.2\% & 4471.5 & 0.112                   \\
             & Bounding Box    & 99.4\%         & 3709.1              & 0.243     & 99.0\%  &5613.5 &0.235              \\      \midrule
             & MINLP Optimal   & 99.9\%         & 4350.6              & 50.6  & 99.9\%      & 7223.7 & 7.8               \\
99\%            & MINLP Heuristic & 99.9\%         & 4443.5         & 0.377    & 99.9\%  & 7460.1     & 0.090            \\
             & Bounding Box    & 100\%         & 6314.8              & 0.303    & 100\% & 9539.5 & 0.271               \\ \bottomrule
\end{tabular}
\end{table*}

In \cref{fig:fancpcl} and \cref{tab:fancpcl}, for both Case I and II, as $\alpha$ increases, the area of the polygon obtained by MINLP Optimal, MINLP Heuristic or Bounding Box respectively increases while the computational time of each algorithm slightly fluctuates. 

When $\alpha$ is fixed, the area of the polygon obtained by MINLP Optimal is far less than that of the Bounding Box. This means that the polygon obtained by MINLP Optimal is far less conservative than Bounding Box. However, the computational time of implementing MINLP Optimal is much longer than Bounding Box. That is, MINLP Optimal is inefficient to perform the online evaluation. Instead, the area of the polygon obtained by MINLP Heuristic is close to MINLP Optimal while the computational time of implementing MINLP Heuristic is short enough. Hence, MINLP Heuristic runs much faster than MINLP Optimal while ensuring accuracy.

\subsection{Robustness of the MINLP Heuristic Algorithm} 

As indicated in \cref{alg:heuristic}, every time we run MINLP Heuristic, we apply a weighted sampling to select representative grid points $\boldsymbol{g}'$ out of the grid points $\boldsymbol{g}$. Due to the randomness of weighted sampling, the selected grid points $\boldsymbol{g}'$ typically differ every time, and the polygon obtained by MINLP Heuristic changes every time it is applied. This leads to the following concerns regarding the robustness of MINLP Heuristic. 1) Space: this refers to whether the convex polygon obtained significantly differs every time; 2) Time: this refers to whether the computational time spent significantly differs every time. A major difference would mean that MINLP Heuristic is not robust enough for applications in real scenarios.


The similarity between two polygons can be quantified by means of the \textit{Jaccard distance}. 
Formally, for any two sets $A$ and $B$, Jaccard distance is defined as $d(A,B) = 1 - {{\operatorname{Card}(A \cap B)}\over{\operatorname{Card}(A \cup B)}}$ \cite{wu2022online}, 
where the operation $\operatorname{Card}$ finds the cardinality of a set. A smaller Jaccard distance means the two sets are more similar to each other.


We analyze the robustness of MINLP Heuristic with respect to the varying number $N_s$ of grid points $\boldsymbol{g}'$ used by MINLP Heuristic for Cases I and II mentioned above. For either Case I or Case II, all the parameters, except for the number $N_s$, are constant and the same as in the respective baselines.

\begin{figure}[!h]
    \centering
    \subfloat[Space robustness]{
    \label{sub-fig-sr}
    \includegraphics[width=0.35\textwidth]{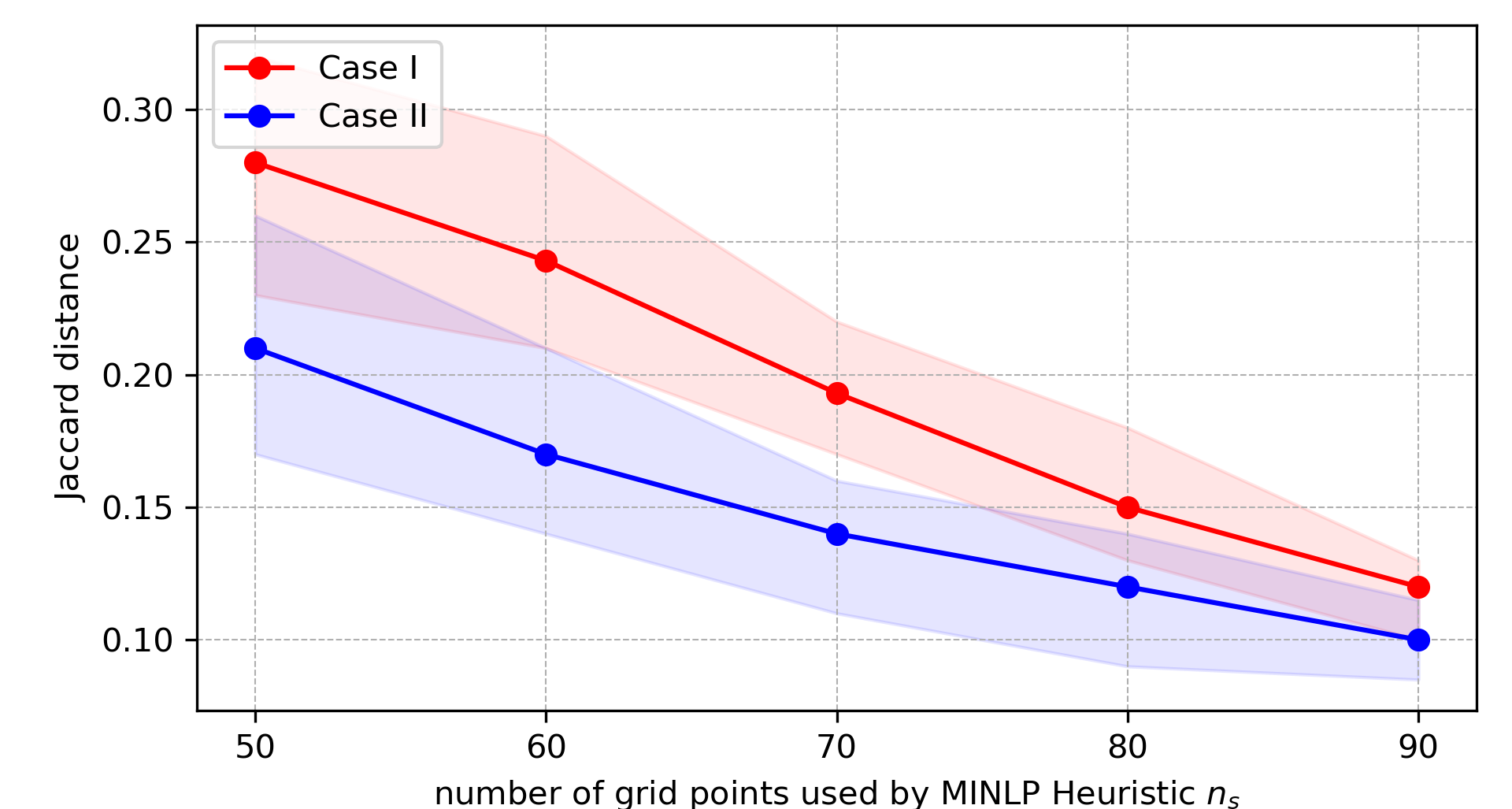}
    } \hspace{0.5cm}
    \subfloat[Time robustness]{
    \label{sub-fig-tr}
    \includegraphics[width=0.35\textwidth]{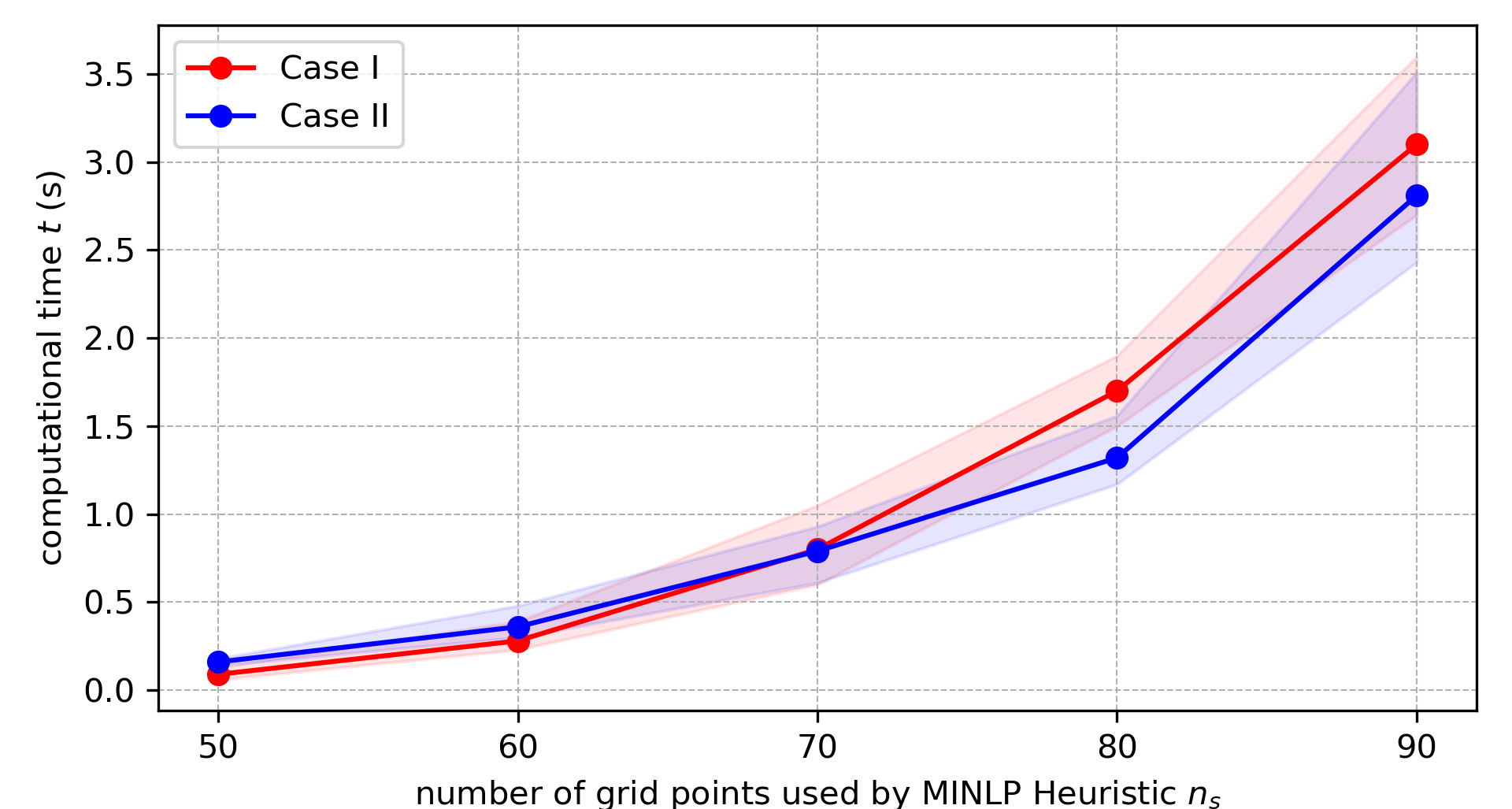}
    } 
    \caption{ Robustness analysis of MINLP Heuristic algorithm}
    \label{fig:robustheu}  %
\end{figure} 

For either Case I or Case II, given a fixed number $N_s$, we run MINLP Optimal once to obtain a constant optimal polygon and run MINLP Heuristic ten times to obtain ten approximate polygons which may differ every time. We evaluate the Jaccard distance between every approximate polygon and the optimal polygon and obtain ten distances eventually. In addition, we evaluate the computational time of running MINLP Heuristic every time. We repeat the operation above for $N_s = 50, 60, 70, 80, 90$,  respectively. The results are depicted in \cref{fig:robustheu}. As illustrated in \cref{sub-fig-sr}, for both cases, when $N_s$ is too small, like $N_s = 50$, different Jaccard distances fluctuate significantly around the average value, suggesting that MINLP Heuristic has poor robustness under this circumstance. As $N_s$ increases from $50$ to $90$, the average Jaccard distance decreases from \num{0.28}, \num{0.21} to \num{0.12}, \num{0.10},  respectively. This means the approximate polygon obtained by MINLP Heuristic becomes more near-optimal with respect to an increasing $N_s$. Moreover, the variance of ten Jaccard distances also decreases as $N_s$ grows. That is, MINLP Heuristic becomes more robust with respect to the increase of $N_s$. As suggested in \cref{sub-fig-tr}, as $N_s$ increases, the average computational time increases from \SI{0.09}{s} to \SI{3.1}{s} for Case I, and from \SI{0.16}{s} to \SI{2.81}{s} for Case II. Additionally, as $N_s$ grows, the variance of computational time increases. 

In summary, the robustness of MINLP Heuristic comes at the cost of losing computational efficiency. For the case studies under consideration, a balance is achieved for 
$N_s = 70$ for Case I or $N_s = 60$ for Case II. This makes the MINLP Heuristic robust enough and its applications promising. 


\section{Conclusion}
\label{Conclusion}
\medskip

This paper proposes an efficient convex approximation of the PRS of an uncertain dynamic system. For arbitrary unknown uncertainties, especially unbounded ones, the notion of the PRS is introduced as an extension of the traditional reachable set, connecting chance constraint formulation and reachability analysis. In this paper, the PRS is obtained through a data-driven approach employing a KDE method. FFT is then customized to accelerate the computation of KDE to make it computationally attractive. The irregularity or non-convexity of the PRS refrains its use in optimal design applications. To address this issue, we formulate a MINLP problem whose solution accounts for a convex approximation of the PRS. We then develop a MINLP Heuristic algorithm to solve it. Comprehensive case studies demonstrate that our proposed algorithm enjoys the benefits of accuracy, efficiency, near-optimality, as well as robustness while providing a convex approximation for the PRS.

This work is limited to a 2D system. Future work will be devoted to solving higher dimensional problems, and applying the results to the design of safety-critical real-time motion planning algorithms for uncertain robotic systems. 




%



\section{Appendix}
\label{Appendix}
\medskip

\subsection{Proof of \cref{lemma:4}}
\label{subsec:lemma4}
\textbf{I)} First, we prove that any two formally different representations of lines refer to different lines.

Given $n \ge 3$ planar lines not passing through the origin $l^k := a_kx + b_ky - 1 = 0,  k \in \{ 1,\ldots,n \}$, we can construct a sequence of lines $(l^1, l^2, \ldots, l^n, l^1)$. Two formally different lines $l^i, l^j$ in the sequence are defined to be consecutive if and only if $j= i^{\oplus} \lor j= i^{\ominus}$. Since there are $n$ intersection points $V_{ij} := l^i \cap l^j,  i \in \{ 1,\ldots,n \}, j = i^{\oplus}$, it follows that $l^i \ne l^j, \;  i \in \{ 1,\ldots,n \}, j = i^{\oplus}$. Thus, any two formally different lines consecutive in the sequence are indeed different from each other. 

For $n=3$, the sequence of lines is $(l^1, l^2, l^3, l^1)$. Since any two formally different lines in the sequence $(l^1, l^2, l^3, l^1)$ are consecutive, thus $l^1, l^2, l^3$ are indeed different from each other. For $n \ge 4$, there are at least one pair of formally different lines not consecutive in the sequence. We have already shown that any two formally different lines consecutive in the sequence are indeed different from each other, and therefore it remains to show that any two formally different lines not consecutive in the sequence are also indeed different from each other, given the conditions of this lemma. We prove this by contradiction. Suppose that $l^u, l^v, u \in \{ 1,\ldots,n \}, v \in \{ 1,\ldots,n \} - \{u^{\ominus}, u, u^{\oplus}\}$ are two formally different lines not consecutive in the sequence but are indeed an identical line. Then $V_{v^{\ominus}v}$ and $V_{vv^{\oplus}}$ are both on the line $l^u$, which means
\begin{equation}
    \begin{aligned} 
    & -a_u\left(b_{v^{\ominus}}-b_v \right)+b_u\left(a_{v^{\ominus}}-a_v\right) - (a_{v^{\ominus}}b_v-b_{v^{\ominus}}a_v) = 0, \\
    & -a_u\left(b_v-b_{v^{\oplus}} \right)+b_u\left(a_v-a_{v^{\oplus}}\right) - (a_vb_{v^{\oplus}}-b_va_{v^{\oplus}}) = 0. 
    \label{eqn:diffline}
    \end{aligned}
\end{equation}
However, since $v \notin \{u^{\ominus}, u, u^{\oplus}\}$, it follows that the subscript of $V_{v^{\ominus}v}$ is formally different from $V_{u^{\ominus}u}$ and $V_{uu^{\oplus}}$, and the subscript of $V_{vv^{\oplus}}$ is also formally different from $V_{u^{\ominus}u}$ and $V_{uu^{\oplus}}$. Thus, since the constraints \cref{eqn:no1cons} are satisfied, we obtain two instances
\begin{equation*}
    \begin{aligned}
    & -a_u\left(b_{v^{\ominus}}-b_v \right)+b_u\left(a_{v^{\ominus}}-a_v\right) - (a_{v^{\ominus}}b_v-b_{v^{\ominus}}a_v) < 0, \\
    & -a_u\left(b_v-b_{v^{\oplus}} \right)+b_u\left(a_v-a_{v^{\oplus}}\right) - (a_vb_{v^{\oplus}}-b_va_{v^{\oplus}}) < 0, 
\end{aligned}
\end{equation*}
which contradicts \cref{eqn:diffline}. 

Thus, if constraints \cref{eqn:detcon} and \cref{eqn:no1cons} are satisfied, any two formally different lines are indeed different from each other. 

\textbf{II)} Next, we show that if there are $n$ intersection points $V_{ij} := l^i \cap l^j,  i \in \{ 1,\ldots,n \}, j = i^{\oplus}$ (namely, $D_{ij}:=a_ib_j-b_ia_j > 0, \;  i \in \{ 1,\ldots,n \}, j = i^{\oplus}$ according to \cref{eqn:detcon}), and the constraints \cref{eqn:no1cons} are satisfied, then any two formally different intersection points are indeed different from each other.

We can prove this by contradiction. Assume that there are two formally different intersection points $V_{ij}, V_{uv}, i \in \{ 1,\ldots,n \}, j = i^{\oplus}, u \in \{ 1,\ldots,n \} - \{ i \}, v = u^{\oplus}$ which are indeed an identical intersection point. Since $n \ge 3$, it follows that $(u \ne i \land u \ne j) \lor (v \ne i \land v \ne j)$, and without loss of generality, we assume $v \ne i \land v \ne j$. Thus, the constraints \cref{eqn:no1cons} apply to $V_{ij}$ and $l^v$, which yields an instance
\begin{equation}
    -a_v\left(b_i-b_j \right)+b_v\left(a_i-a_j\right) - (a_ib_j-b_ia_j) < 0.
    \label{eqn:-av}
\end{equation}
However, since $V_{ij}, V_{uv}$ are indeed an identical intersection point, $V_{ij}$ is on the line $l^v$, meaning that
\begin{equation*}
    -a_v\left(b_i-b_j \right)+b_v\left(a_i-a_j\right) - (a_ib_j-b_ia_j) = 0,
\end{equation*}
which contradicts \cref{eqn:-av}. 

Combining I) and II), we conclude that if constraints \cref{eqn:detcon} and \cref{eqn:no1cons} are satisfied, any two formally different representations of intersection points (resp.~lines) refer to different intersection points (resp.~lines), and therefore all $n$ intersection points (resp.~lines) are indeed different from each other.

\subsection{Proof of \cref{theorem:1}}
\label{subsec:theorem1}
\textbf{I)} Since there are $n \ge 3$ intersection points $V_{ij} := l^i \cap l^j, \;   i \in \{ 1,\ldots,n \}, j = i^{\oplus}$, we can enumerate them in a sequence of intersection points $S_g := (V_{12}, V_{23}, \ldots, V_{(n-1)n}, V_{n1}, V_{12})$. According to \cref{lemma:4}, since the coefficients of these lines satisfy constraints \cref{eqn:detcon} and \cref{eqn:no1cons}, then any two formally different intersection points (resp.~lines) are indeed different from each other.

\textbf{II)} We can also construct a finite sequence of symbols $S_s := (``V_{12}", ``V_{23}", \ldots, ``V_{(n-1)n}", ``V_{n1}", ``V_{12}")$. In $S_s$, 1) since $n \ge 3$, there are at least three formally different symbols $``V_{12}"$, $``V_{23}"$ and $``V_{31}"$; 2) No symbols appear more than once except for the case in which only the first symbol $``V_{12}"$ and last symbol $``V_{12}"$ are formally identical. Therefore, \cref{def:Formallyenclosing} applies and $S_s$ is formally enclosing.

Consider a map ${f}_{\text{tag}} $ from $S_s$ to $S_g$ given by the operation of labelling the intersection points, i.e., $V_{ij} := l^i \cap l^j, \;   i \in \{ 1,\ldots,n \}, j = i^{\oplus}$. It can be verified that $f_{\text{tag}}$ is injective, surjective and order-preserving. The map $f_{\text{tag}}$ also satisfies Condition 4) in \cref{def:isomorphism}. This is because in $S_s$, for any two consecutive symbols $``V_{i^{\ominus}i}"$ and $``V_{ii^{\oplus}}"$, and any third symbol $``V_{uv}"$ which is formally different from each of $``V_{i^{\ominus}i}"$ and $``V_{ii^{\oplus}}"$, according to \cref{eqn:no1cons} we obtain
\begin{equation*}
    \begin{aligned}
    & -a_i\left(b_u-b_v \right)+b_i\left(a_u-a_v\right) - (a_ub_v-b_ua_v) \ne  0, \\
    &  u \notin \{i^{\ominus}, i\}, v= u^{\oplus},
    \end{aligned}
\end{equation*}
which means that the intersection point $V_{uv}$ is not on the line $l^i$. According to I), since formally different symbols $``V_{i^{\ominus}i}"$ and $``V_{ii^{\oplus}}"$ refer to different intersection points $V_{i^{\ominus}i}$ and $V_{ii^{\oplus}}$, thus these two intersection points determine a non-degenerate line segment. Further, the line segment determined by these two points is a segment of the line $l^i$. Hence, the intersection point $V_{uv}$ that the third symbol $``V_{uv}"$ refers to is not on the edge $(V_{i^{\ominus}i}, V_{ii^{\oplus}})$. In summary, the map $f_{\text{tag}}$ satisfies all the conditions in \cref{def:isomorphism} and therefore is an isomorphism type A.

According to \cref{remark_enclose}, since $S_s$ is formally enclosing, and $f_{\text{tag}}$ is an isomorphism type A, it follows that the graph $G$ formed by $S_g$ is enclosing.

\textbf{III)} Since $n \ge 3$, the number of formally different symbols in  $S_s$ is $n \ge 3$. 

As concluded in II), the map $f_{\text{tag}}$ is injective, surjective, and order-preserving. The map $f_{\text{tag}}$ also satisfies Condition 4) in \cref{def:isomorphism_type_b}. This is because, for any three consecutive symbols $``V_{i^{\ominus}i}"$, $``V_{ii^{\oplus}}"$, $``V_{i^{\oplus}i^{\oplus \oplus}}"$ in $S_s$, applying \cref{eqn:no1cons} yields
\begin{equation*}
    -a_i\left(b_{i^{\oplus}}-b_{i^{\oplus \oplus}} \right)+b_i\left(a_{i^{\oplus}}-a_{i^{\oplus \oplus}} \right) - (a_{i^{\oplus}}b_{i^{\oplus \oplus}} - b_{i^{\oplus}}a_{i^{\oplus \oplus}}) \ne  0,
\end{equation*}
which means that the intersection point $V_{i^{\oplus}i^{\oplus \oplus}}$ is not on the line $l^i$. From I), since formally different symbols $``V_{i^{\ominus}i}"$ and $``V_{ii^{\oplus}}"$ refer to different intersection points $V_{i^{\ominus}i}$ and $V_{ii^{\oplus}}$, then $V_{i^{\ominus}i}$ and $V_{ii^{\oplus}}$ are different from each other. Further, the intersection point $V_{i^{\ominus}i}$ and $V_{ii^{\oplus}}$ are on the line $l^i$. Hence, the three intersection points $V_{i^{\ominus}i}$, $V_{ii^{\oplus}}$, $V_{i^{\oplus}i^{\oplus \oplus}}$ are not collinear. In summary, the map $f_{\text{tag}}$ satisfies all the conditions in \cref{def:isomorphism_type_b} and therefore is an isomorphism type B.

According to \cref{remark_degenerate}, since $f_{\text{tag}}$ is an isomorphism type B, and there are $n \ge 3$ formally different symbols in $S_s$, it follows that the graph $G$ formed by $S_g$ is non-degenerate.

\textbf{IV)} We have shown that the graph $G$ formed by $S_g$ is an enclosing $n$ sided polygon. For a line $l^k, k \in \{ 1,\ldots,n \}$, once the constraints \cref{eqn:no1cons} are satisfied, it follows that
\begin{equation}
    \begin{cases}
    -a_k\left(b_i-b_j \right)+b_k\left(a_i-a_j\right) - (a_ib_j-b_ia_j) < 0, \\
     i \in \{ 1,\ldots,n \} - \{k^{\ominus},k\}, j=i^{\oplus},  \\ \\
    -a_k\left(b_i-b_j \right)+b_k\left(a_i-a_j\right) - (a_ib_j-b_ia_j) = 0, \\
     i \in \{k^{\ominus},k\}, j=i^{\oplus}.
    \end{cases}
    \label{eqn:bianxing}
\end{equation}
Since there are $n$ intersection points $V_{ij}, \;   i \in \{ 1,\ldots,n \}, j = i^{\oplus}$, according to \cref{eqn:detcon}, we have $D_{ij}:=a_ib_j-b_ia_j > 0, \;  i \in \{ 1,\ldots,n \}, j = i^{\oplus}$. Thus, the inequalities \cref{eqn:bianxing} can be transformed to 
\begin{equation}
    \begin{cases}
    a_k\frac{-\left(b_i-b_j \right)}{D_{ij}}+b_k\frac{\left(a_i-a_j\right)}{D_{ij}} - 1 < 0, \\
     i \in \{ 1,\ldots,n \} - \{k^{\ominus},k\}, \,j=i^{\oplus},  \\ \\
    a_k\frac{-\left(b_i-b_j \right)}{D_{ij}}+b_k\frac{\left(a_i-a_j\right)}{D_{ij}} - 1 = 0, \\ 
     i \in \{k^{\ominus},k\}, \, j=i^{\oplus},
    \end{cases}
\label{eqn:conv0}
\end{equation}
which means that all $n$ intersection points $V_{ij},  i \in \{ 1,\ldots,n \}, j = i^{\oplus}$ are on the same side of the line $l^k$. 
Therefore, in $S_g$, for any two consecutive intersection points $V_{ij},V_{uv}$ where $i \in \{ 1,\ldots,n \} - \{k^{\ominus}\}, j = i^{\oplus}, u=i^{\oplus}, v=i^{\oplus \oplus}$, we can define a set constructed by the convex combination of $V_{ij}$ and $V_{uv}$ excluding these two boundary points, which is
\begin{equation*}
    \begin{aligned}
    \Big \{ & t(-\frac{b_i-b_j}{D_{ij}}, \frac{a_i-a_j}{D_{ij}}) + (1-t)(-\frac{b_u-b_v}{D_{uv}}, \frac{a_u-a_v}{D_{uv}}) : \\
    & t \in \mathbb{R} \land 0 < t < 1 \Big \}. 
    \end{aligned}
\end{equation*}
It is a subset of the half plane $\{(x,y): a_kx + b_ky - 1 < 0 \}$. In other words, the edge $(V_{ii^{\oplus}},V_{i^{\oplus}i^{\oplus \oplus}}), i \in \{ 1,\ldots,n \} - \{k^{\ominus}\}$, is on the strict inner side of the line $l^k$. This is because 
\begin{equation}
    \begin{aligned}
    & \, a_k \left(t \frac{-\left(b_i-b_j \right)}{D_{ij}} + (1-t) \frac{-\left(b_u-b_v \right)}{D_{uv}} \right) +  \\
    & \, b_k \left(t \frac{\left(a_i-a_j \right)}{D_{ij}} + (1-t) \frac{\left(a_u-a_v\right)}{D_{uv}} \right) - 1  \\ 
    = \, & \, t \left(a_k \frac{-\left(b_i-b_j \right)}{D_{ij}} + b_k \frac{\left(a_i-a_j \right)}{D_{ij}} \right) +  \\
    & \, (1-t) \left(a_k \frac{-\left(b_u-b_v \right)}{D_{uv}} + b_k \frac{\left(a_u-a_v\right)}{D_{uv}} \right) - 1  \\ 
    < \, & \, 0,
    \label{eqn:convex}
    \end{aligned}
\end{equation}
according to \cref{eqn:conv0}.
For the polygon $G$ formed by $S_g$, according to \cref{eqn:convex}, we have shown that every edge $(V_{ii^{\oplus}},V_{i^{\oplus}i^{\oplus \oplus}}), i \in \{ 1,\ldots,n \} - \{k^{\ominus}\}$ of the polygon is on the same side of the line $l^k, k \in \{ 1,\ldots,n \}$ that the edge $(V_{k^{\ominus}k},V_{kk^{\oplus }}), k \in \{ 1,\ldots,n \}$ defines. Recall the definition of a convex polygon: For each edge of the polygon, all the other edges are on the same side of the line that the edge defines. Therefore, the polygon $G$ formed by $S_g$ is a convex polygon. 

\textbf{V)} Recall that for an arbitrary point $(u, v)$ and an enclosing $n$ sided convex polygon determined by $n$ lines $l^k := a_kx + b_ky - 1 = 0,  k \in \{1,\ldots,n\}$, if $\bigwedge\limits_{k=1}^n a_ku + b_kv - 1 < 0 $ holds, then the point $(u, v)$ lies inside the polygon. We have shown above that if constraints \cref{eqn:detcon} and \cref{eqn:no1cons} are satisfied, then the graph $G$ formed by $S_g$ is an enclosing $n$ sided convex polygon. Since $a_k\cdot 0 + b_k\cdot 0 - 1  \equiv -1 < 0,  k \in \{ 1,\ldots,n \}$, the origin $(0,0)$ lies inside the polygon $G$ formed by $S_g$. 

From the above, we conclude that if constraints \cref{eqn:detcon} and \cref{eqn:no1cons} are satisfied, then the graph $G$ formed by $S_g$ is an enclosing $n$ sided convex polygon with the origin $(0,0)$ inside. For such $n \ge 3$ planar lines $l^k := a_kx + b_ky - 1 = 0, \;  k \in \{ 1,\ldots,n \}$ not passing through the origin satisfying constraints \cref{eqn:detcon} and \cref{eqn:no1cons}, the boundary of the region $S$ generated by these lines, as defined in \cref{eqn:ss}, is exactly the graph $G$ formed by $S_g$. Since that graph $G$ is an enclosing $n$ sided convex polygon with the origin in its interior under the conditions of \cref{eqn:detcon} and \cref{eqn:no1cons}, it follows that the boundary of the region $S$ generated by these lines is an enclosing $n$ sided convex polygon with the origin in its interior.

\subsection{Proof of \cref{corollary}}
\label{subsec:corollary1}
This follows from the application of an affine coordinate transformation using $(\Bar{x},\Bar{y})$. 
More precisely, given coordinates $x\mbox{-}y$, consider the transformation $ (x^{\text{new}}, y^{\text{new}}) = (x - \Bar{x}, y - \Bar{y})$. 
In the new coordinates, the origin $(0^{\text{new}}, 0^{\text{new}})$ is located at $(\Bar{x},\Bar{y})$. In addition, the representation of lines changes to
\begin{equation*}
    \left \{a(x-\Bar{x}) + b(y-\Bar{y}) - 1 = 0: a,b \in \mathbb{R} \land  a^2+b^2 \ne 0 \right \}.
\end{equation*}

From \cref{theorem:1} it follows that \cref{eqn:detcon} and \cref{eqn:no1cons} are sufficient to guarantee that the boundary of the region $ S := \left \{(x^{\text{new}},y^{\text{new}}): \bigwedge\limits_{k=1}^n a_kx^{\text{new}} + b_ky^{\text{new}} - 1 \le 0 \right \} $ is an enclosing $n$ sided convex polygon with the origin $(0^{\text{new}}, 0^{\text{new}})$ in its interior. 
By changing back to the original coordinate system, the result follows. 




\section*{Acknowledgment}
\medskip

This work was supported by the National Science Foundation under Grants CMMI-2138612. Any opinions, findings and conclusions or recommendations expressed in this paper are those of the authors and do not reflect the views of NSF. The authors would like to thank Lin Li, Paul Lathrop from UC San Diego and Jun Xiang from San Diego State University for their valuable comments and discussions.






\ifCLASSOPTIONcaptionsoff
  \newpage
\fi



%

\bibliographystyle{IEEEtran}
\bibliography{main}

\end{document}